\theoremstyle{plain}
\newtheorem{theorem}{Theorem}[section]
\newtheorem{lemma}[theorem]{Lemma}
\newtheorem{corollary}[theorem]{Corollary}
\theoremstyle{definition}
\newtheorem{definition}[theorem]{Definition}
\newtheorem{assumption}[theorem]{Assumption}
\theoremstyle{remark}
\def\eqref#1{equation~\ref{#1}}
\def\1{\bm{1}}
\DeclareMathAlphabet{\mathsfit}{\encodingdefault}{\sfdefault}{m}{sl}
\SetMathAlphabet{\mathsfit}{bold}{\encodingdefault}{\sfdefault}{bx}{n}
\newcommand{\E}{\mathbb{E}}
\newcommand{\R}{\mathbb{R}}
\DeclareMathOperator*{\argmin}{arg\,min}
\newcommand{\cmark}{\ding{51}}%
\newcommand{\xmark}{\ding{55}}%
\newcommand{\m}[1]{\mathsf{#1}}
\newcommand{\dbtilde}[1]{\tilde{\raisebox{0pt}[0.85\height]{$\tilde{#1}$}}}
\newcommand{\subjectto}{\mathop{\rm s.t.}}
\DeclareMathOperator{\diag}{diag}
\def\oo{\mathcal{O}}
\lstdefinestyle{py}{
  language=Python,
  basicstyle=\ttfamily\scriptsize,
  numbers=left,
  numberstyle=\scriptsize,
  numbersep=1em,
  xleftmargin=2.2em,
  framexleftmargin=2.2em,
  frame=single,
  showstringspaces=false,
  breaklines=true,
  columns=fullflexible,
  keepspaces=true,
  tabsize=4,
  escapeinside={|}{|},
  keywordstyle=\color{blue},
  commentstyle=\color{teal}\itshape,
  stringstyle=\color{orange},
}
\crefname{lemma}{Lemma}{Lemmas}
\crefname{assumption}{Assumption}{Assumptions}
\crefname{prob}{Problem}{Problems}
\crefname{eq}{Equation}{Equations}
\crefname{ass}{Assumption}{Assumptions}
\crefname{assum}{Assumption}{Assumptions}
\newcommand{\qpth}{\texttt{QPTH}\xspace}
\newcommand{\cvxpylayer}{\texttt{CvxpyLayer}\xspace}
\newcommand{\ffolayer}{\texttt{FFOLayer}\xspace}
\newcommand{\ffocp}{\texttt{FFOCP}\xspace}
\newcommand{\ffoqp}{\texttt{FFOQP}\xspace}
\newcommand{\lpgd}{\texttt{LPGD}\xspace}
\newcommand{\bpqp}{\texttt{BPQP}\xspace}
\newcommand{\altdiff}{\texttt{Alt-Diff}\xspace}
\newcommand{\dQP}{\texttt{dQP}\xspace}
\icmltitlerunning{A Fully First-Order Layer for Differentiable Optimization}
\begin{document}

\twocolumn[
  \icmltitle{A Fully First-Order Layer for Differentiable Optimization}



  \icmlsetsymbol{equal}{*}

  \begin{icmlauthorlist}
    \icmlauthor{Zihao Zhao}{equal,gatech}
    \icmlauthor{Kai-Chia Mo}{equal,independent}
    \icmlauthor{Shing-Hei Ho}{gatech}
    \icmlauthor{Brandon Amos}{ref}
    \icmlauthor{Kai Wang}{gatech}
  \end{icmlauthorlist}

  \icmlaffiliation{gatech}{School of Computational Science and Engineering
Georgia Institute of Technology, GA, USA}
\icmlaffiliation{independent}{Independent researcher.}
  \icmlaffiliation{ref}{Reflection AI, NY, USA}

  \icmlcorrespondingauthor{Zihao Zhao}{zzhao628@gatech.edu}

  \icmlkeywords{Machine Learning, ICML}

  \vskip 0.3in
]



\printAffiliationsAndNotice{\icmlEqualContribution}

\begin{abstract}

Differentiable optimization studies how to embed a mathematical program as a differentiable layer in machine learning pipelines. 
However, existing approaches typically rely on implicit differentiation, involving expensive Hessian computation while differentiating through optimality conditions. 
To address this challenge, we formulate the differentiable optimization problem as a bilevel optimization instance. 
We construct a new active-set Lagrangian as a proxy to compute an $\epsilon$-approximate hypergradient using only near-constant $\oo(\log (1/\epsilon))$ first-order information. 
We also show that applying this efficient hypergradient oracle to constrained bilevel optimization improves the overall gradient complexity to $\tilde{\oo}(\delta^{-1}\epsilon^{-3})$ to reach a $(\delta, \epsilon)$-Goldstein stationary point.
We implement our method \ffolayer, as a drop-in Python library compatible with existing differentiable optimization solvers. Our algorithm shows significantly faster computation with similar convergence compared to other existing solvers. 
The source code is available at \href{https://github.com/GT-KOALA/FFOLayer}{https://github.com/GT-KOALA/FFOLayer}.
\end{abstract}

\section{Introduction}
Modern machine learning pipelines increasingly integrate optimization problems as differentiable layers to enable end-to-end decision-making~\cite{ak17, aab+19, bkm+22}. In this paradigm, the model's output is obtained by solving an embedded optimization problem, and gradients are backpropagated through the solver to train end-to-end.
This approach has shown promise in various decision-making applications, such as decision-focused learning~\cite{dak17, wdt19, mkb+24}, control~\cite{ajs+18}, and meta-learning~\cite{lmr+19, lorraine2020optimizing}.

Although widely used, a major bottleneck for differentiable optimization layers is scalability: computing hypergradients can be extremely costly~\cite{bpv23}. Standard approaches differentiate the Karush–Kuhn–Tucker (KKT) conditions via implicit differentiation~\cite{aab+19}, reducing backpropagation to several linear solves of the KKT system. 
However, factorizing and solving a large KKT system are both memory- and time-intensive in practice, which limits the problem size~\cite{yjl21, pma24}. 
This issue motivates research into more efficient methods for computing hypergradients.

In this paper, we address these challenges by developing a novel algorithm that uses only \textbf{first-order information}. Our key insight is that the dependency of the optimization's solution on the inputs can be captured by a \textit{bilevel optimization} structure. By reformulating the differentiable optimization problem as a bilevel problem, we construct an active-set Lagrangian to compute an approximate hypergradient with only first-order oracle calls. 
This hypergradient approximation avoids costly second-order operations and significantly improves scalability. 

On the theoretical side, we show that our proposed algorithm computes an $\epsilon$-approximate hypergradient using only first-order information in $\oo(\log(1/\epsilon))$ time, under appropriate conditions.
For constrained bilevel problems, this strengthens prior results by twofold.
First, we improve the convergence rate of linearly constrained bilevel optimization to $\tilde{\oo}(\delta^{-1}\epsilon^{-3})$ to find a $(\delta,\epsilon)$-Goldstein stationary point~\cite{g77}, matching the best known rate for nonsmooth nonconvex optimization~\cite{zlj+20}.
Second, we extend the guarantee from the linearly constrained bilevel setting to the general convex-constrained setting, showing that the same $\tilde{\oo}(\delta^{-1}\epsilon^{-3})$ rate in oracle complexity under additional
assumptions.


We implement our approach as an open-source PyTorch library \ffolayer and demonstrate that it can be easily integrated into existing codebases. Notably, our method is \emph{solver-agnostic}: hypergradients are computed without solver-specific differentiation, relying only on black-box solves of the original and perturbed problems. This property allows us to leverage state-of-the-art convex program solvers (such as GUROBI~\cite{gurobi} and MOSEK~\cite{mosek}) without modifying the gradient computation process, which is not supported by most of the existing differentiable optimization layers.

Our contributions can be summarized as follows:
\begin{itemize}[left=1em,nosep]
\item We first rewrite the constrained differentiable optimization as a bilevel optimization. We then design a new first-order (inexact) oracle that computes $\epsilon$-approximate hypergradient with $\oo(\log (1/\epsilon))$ computation cost for general constrained bilevel optimization.

\item We provide theoretical convergence analysis for both linear and general convex constraints, showing that our method matches the best-known convergence complexity $\tilde{\oo}(\delta^{-1} \epsilon^{-3})$ for reaching a Goldstein stationary point in constrained bilevel optimization. 

\item We release an open-source solver-agnostic layer that can be seamlessly integrated into neural network training pipelines, offering a drop-in replacement for existing differentiable solvers.
\end{itemize}

\begin{table}[htbp]
\small
\centering
\setlength{\tabcolsep}{1.5pt}
\caption{Comparison of differentiable convex optimization layers. 
Solver-agnostic: whether a method can adapt to any state-of-the-art solver.
\cmark~indicates the method has the property, \xmark~indicates it does not. $\dagger$~$\epsilon$-approximate hypergradient.}
\label{tab:optlayer-comparison}

\begin{tabular}{lccccc}
\toprule
\multirow{2}{*}{Method} &
Convex & First-order & Solver-  & Exact \\
& constraints & only & agnostic  & hypergradient \\
\midrule

\multicolumn{5}{l}{\textit{KKT-based}}\\[0pt]
\cvxpylayer  & \cmark & \xmark & \xmark& \cmark \\
\qpth        & \xmark & \xmark & \xmark & \cmark \\
\addlinespace[2pt]

\multicolumn{5}{l}{\textit{Optimization-based}}\\[0pt]
\altdiff     & \xmark & \xmark & \xmark & \cmark \\
\lpgd         & \cmark & \cmark & \cmark  & \xmark \\
\bpqp        & \cmark & \xmark & \xmark  & \cmark \\
\dQP         & \xmark & \xmark & \cmark & \cmark \\

\midrule
\ffolayer \textbf{(ours)} & \cmark & \cmark & \cmark & \xmark$^\dagger$ \\
\bottomrule
\end{tabular}
\end{table}

\section{Related Works}\label{sec:related_works}
\paragraph{Differentiable Optimization}
The idea of treating optimization problems as differentiable modules of a neural network dates back to early convex formulations. \citet{ak17} introduced OptNet, which embeds a quadratic program solver as a layer whose forward pass solves the optimization and whose backward pass recovers the hypergradient by implicitly differentiating the KKT system. Subsequent work accelerated the computation speed of QP~\cite{mya+25}, supporting infeasible problems~\cite{b23, bst+24}, and scaling-up the problem size~\cite{bk23}. Beyond QP, some works generalized this paradigm to broader classes of convex programs with toolkit~\cite{aab+19, bbc+22, rfl+23, ssw+23, bdl+24, pmm24, pyy+24, sb25}, differentiable conic layers~\cite{abb+19, hnb25}, combinatorial optimization~\cite{prm+21}, and non-convex problems such as robotics-oriented solvers~\cite{ief+19, pfm+22, thm23, hdb25, rgp+25}. 

Despite these layers having enabled end-to-end learning, they also expose the computational burden of differentiating through structured optimization problems. 
\Cref{tab:optlayer-comparison} summarizes prior methods aimed at reducing differentiation cost. Notably, while \cvxpylayer~\cite{aab+19} supports general convex constraints, it is not fully first-order. \qpth (OptNet)~\cite{ak17} supports only solving quadratic programs (QPs).
\altdiff~\cite{ssw+23} computes exact hypergradients via alternative differentiation, but it does not support general convex constraints and is not first-order.
\lpgd~\cite{pmm24} is a first-order solver-agnostic method, but only ensures asymptotic guarantees. \bpqp~\cite{pyy+24}~/~\dQP~\cite{mya+25} transforms a general convex program/QP into QP form to leverage efficient QP solvers, avoiding solving the Hessian inverse but adding overhead from problem transformation. In contrast, \ffolayer is the first method to combine all these properties: it handles general convex constraints, uses only first-order information, can be equipped with all advanced solvers, and comes with finite-time guarantees for hypergradient approximation and convergence to a stationary point.

\paragraph{Bilevel Optimization with First-Order Algorithms}
When the differentiable layer is formulated as a bilevel problem, computing hypergradients traditionally requires explicit Hessian or Jacobian-vector products, which are often the bottleneck in large-scale applications. This has motivated a wave of first-order bilevel algorithms that avoid Hessians entirely. Value-function (VF) approaches~\cite{kkw+23, kkl24, kpw+24} replace the lower-level problem with its value function and differentiate through a penalty reformulation, requiring only first-order oracle calls. This technique extends naturally to constrained lower-level objectives~\cite{kpw+24, zcx+24, jxt+24, yyz+24} and can attain nearly optimal complexity guarantees in the strongly-convex or Polyak-Łojasiewicz regime~\cite{sc23, xlc23, ssw+23}.

The key bottleneck now for constrained bilevel optimization is how the inequality constraints are handled: existing works typically enforce inequalities via \emph{penalty formulations} (such as $\ell_2$ norm of active constraints), so obtaining an $\epsilon$-accurate hypergradient in $\oo(1/\epsilon)$ time, whereas equality-only formulations admit much faster $\oo(\log (1/\epsilon))$ hypergradient approximation~\cite{kpw+24}.
Our work aims to close this gap for differentiable optimization layers by introducing an active-set Lagrangian oracle that reduces general convex constraints locally to a trackable equality-constrained surrogate, and thus obtains $\oo(\log (1/\epsilon))$ hypergradient approximation.
Furthermore, we extend these guarantees to the ``well-behaved'' general convex constraints highlighted as an open problem by~\citet{kpw+24}.

\section{Problem Statement}\label{sec:prob_state}
We study learning with differentiable convex optimization layers. Let $x \in \R^m$ denote the layer parameter (e.g., a neural network), and let the downstream loss be $f: \R^m \times \R^d \to \R$. We define the training objective $F(x) \coloneqq f(x, y^*(x))$, where $y^*(x)$ is the solution returned by solving the optimization layer parameterized by $x$. Formally, for each $x$, we define
\begin{align}\label[prob]{prob:ll}
    y^*\!\in\!\arg\min_{y \in \R^d} g(x, y)
    \ \subjectto \ h(x ,y)\!\le\!0, e(x,y)\!=\!0, \tag{P0}
\end{align}
where $g(x,y)$ is a convex objective function, $h(x,y) \leq 0$ represents a set of convex inequality constraints, and $e(x,y)$ represents affine equality constraints.

During end-to-end training, we need to compute the \emph{hypergradient} $\nabla_x F(x)$. By chain rule, the hypergradient can be expanded as follows:
\begin{align}\label[eq]{eq:grad_upper}
    \nabla_x F &= \nabla_x f(x,y^*(x)) + (\frac{dy^*(x)}{dx})^\top \nabla_{y} f(x,y^*).
\end{align}
The main computational challenge is to efficiently compute the sensitivity term $dy^*(x)/dx$. 

A common approach is to employ the implicit function theorem~\cite{dr09,ak17}, which can be regarded as solving a KKT system. At optimum, there exist Lagrange multipliers $(\lambda, \nu)$ for the inequality and equality constraints such that the primal-dual pair $(y^*, \lambda^*, \nu^*)$ satisfies the KKT conditions.
We define the KKT system for \cref{prob:ll} as
\begin{align*}
    G := \begin{bmatrix}
        \nabla_y g(x,y) +\nabla_y h(x,y)^\top \lambda +\nabla_y e(x,y)^\top \nu\\
        \lambda \circ h(x, y) \\
        e(x,y)
    \end{bmatrix}.
\end{align*}
Plugging in the optimum, the KKT system satisfies $G(y^*, \lambda^*, \nu^*, x) = 0$. Since $(y^*, \lambda^*, \nu^*)$ implicitly depends on $x$, we can apply the chain rule to compute their total derivative with respect to $x$ and set the derivative to 0 to get the \textit{implicit differentiation formula}:
\begin{align}\label[eq]{eq:implicit_diff_formula}
    [dy^*, d\lambda^*, d\nu^*]/dx &= - (\nabla_{(y, \lambda, \nu)} G)^{-1}\nabla_x G,
\end{align}
Importantly, the term $dy^*(x)/dx$ requires solving a linear system
involving the inverse of the Jacobian $(\nabla_{(y, \lambda, \nu)} G)^{-1}$, which can be computationally expensive and memory-intensive to compute for large problems. Besides, in most problems, the $\nabla_{(y, \lambda, \nu)} G$ involves computing a Hessian $\nabla_{yy}^2 g(x,y)$, which poses a major bottleneck for scaling differentiable optimization layers.
These challenges motivate our goal of
approximating the gradient $\nabla_x F$
using only first-order information, thereby eliminating the need to invert or even form a large KKT matrix.

\section{Bilevel Reformulation with Active-set Reduction}
In this section, we provide an alternative formulation of differentiable optimization problems that avoids explicit Hessian or inverse-Jacobian formalization. Our approach rewrites the differentiable optimization as a bilevel problem, enabling the computation of hypergradients using only first-order information.

\subsection{Bilevel Formalization for Differentiable Optimization}\label{sec:ghost}
Following the notation of \Cref{sec:prob_state}, we can express the differentiable optimization as the following bilevel instance:
\begin{align}\label[prob]{prob:ori_bilevel_prob}
&\min_x F(x) \coloneqq f(x, y^*) \notag \\
&\ \subjectto \ y^*\in \argmin_{y:h(x,y)\leq 0, e(x,y)=0}
 g(x, y). \tag{P1}
\end{align}
Here, $f(x, y)$ is the \emph{upper-level} objective, $g(x,y)$ is the \emph{lower-level} objective, and $h(x,y),e(x,y)$ are the lower-level constraints. As established in \Cref{sec:prob_state}, the gradient of $\nabla_x F$ can be obtained by implicitly differentiating the lower-level optimality conditions as \Cref{eq:implicit_diff_formula}.

After we formalize the problem as a bilevel optimization, we can resort to bilevel algorithms to solve it. We next specify the stationary notion we target and list some standard assumptions in bilevel literature.
\begin{definition}[Goldstein stationary point]
Let $f: \R^d \to \R$ be Lipschitz and $x \in \R^d$.
We say $x$ is $(\delta, \epsilon)$-stationary if $\text{dist}(0, \partial f(x + \delta B)) \leq \epsilon$, where $B$ is a unit ball and  $\text{dist}(x, S) \coloneqq \inf_{y \in S} \|x - y\|$.
\end{definition}

\begin{assumption} \label[assumption]{ass1}We assume 
    \begin{enumerate}[leftmargin=*, topsep=2pt, itemsep=1pt, parsep=0pt]
        \item Upper-level (UL): The objective $f$ is $C_f$-smooth and $L_f$-Lipschitz continuous in $(x,y)$.
        \item Lower-level (LL): The objective $g$ is $C_g$-smooth. Fixing any $x\in \mathcal{X}$, $g(x, \cdot)$ is $\mu_g$-strongly convex.
        \item We assume that the linear independence constraint qualification (LICQ) condition holds for the LL problem at every $x$ and its corresponding primal solution $y^*$, i.e., the Jacobian of the (active) LL constraints with respect to $y$ at $(x,y^*(x))$ has full row rank.
    \end{enumerate}
\end{assumption}

\begin{assumption}[Active-set identification] \label[assumption]{ass2}
    In solving \cref{prob:ori_bilevel_prob}, the active constraints can be correctly identified.
\end{assumption}

\remark{Although existing work has used \Cref{ass2}
(see, for example, Remark~1 in \cite{khanduri2025doubly}), we revisit it in~\Cref{sec:main_active_set}, where we clarify why it is needed and establish a weaker guarantee that holds without this assumption.}

A main challenge in bilevel optimization is the efficient evaluation of the hypergradient. This motivates leveraging recent first-order bilevel techniques to efficiently compute (or estimate) $\nabla_x F(x)$. For instance, for linear \emph{equality-constrained} lower-level problems, \citet{kpw+24} show that an $\epsilon$-approximate hypergradient can be computed in $\oo(\log(1/\epsilon))$ time. However, handling \emph{inequality} constraints is more challenging: in the method of \citet{kpw+24}, inequalities are enforced via a penalty term, and achieving an $\epsilon$-accurate hypergradient requires $\oo(\epsilon^{-1})$ iterations, incurring a significantly higher cost than the equality-only case. The gap motivates a more direct approach to handling active constraints.

\subsection{Active-Set Bilevel Optimization Reduction}
We address the aforementioned challenge for \textbf{general convex-constrained} problems beyond only linear inequality constraints. Specifically, reducing the constrained problem into a surrogate problem with only \textbf{linear equality constraints}, while preserving the hypergradient accuracy. In essence, we will fix the active set of constraints and linearize those constraints, thereby converting the lower-level problem into one with only equalities. By doing so, we can retain a $\oo(\log 1/\epsilon)$ constant overhead in computing gradients. Specifically, we achieve this by first constructing the following \textit{ghost bilevel optimization} problem:
\begin{align}\label[prob]{prob:ghost}
\min_x \tilde{F}(x) \coloneqq f(x, \tilde{y}^*)
\ \subjectto \ \tilde{y}^*\in \argmin_{y:\tilde{h}(x,y)=0}
 \tilde{g}(x, y) \tag{P2},
\end{align}
where we define the surrogate lower-level objective $\tilde{g}$ and combined constraints (equality constraints and linearized active inequality constraints) $\tilde{h}$ as follows:
\begin{align}\label[eq]{eq:ghost_ll}
    &\tilde{g}(x,y) = g(x,y) + \langle\lambda^*, h(x,y)\rangle + \langle\nu^*, e(x,y)\rangle, \\
    &\tilde{h}(x,y) = \begin{bmatrix}
    e(x,y)\\
    \nabla_x h_\mathcal{I}(\bar{x},y^*)(x-\bar{x}) + \nabla_y h_\mathcal{I}(\bar{x},y^*)(y-y^*)
\end{bmatrix} \notag,
\end{align}
where $\bar{x}$ denotes a given reference point (e.g., the current $x$ at which we seek the hypergradient) and $(y^*, \lambda^*, \nu^*)$ are the optimal primal-dual solution of the lower-level problem \ref{prob:ori_bilevel_prob} at $x = \bar{x}$. Importantly, let $\mathcal{I}$ denote the \emph{active inequality set} at the solution $y^*$, defined as
\[
\mathcal{I} \coloneqq \{\, i \mid h_i(\bar{x}, y^*) = 0,\ \lambda_i^* > 0 \,\}.
\]
Note that we treat the dual variable $\lambda^*({x})$ and $\nu^*({x})$  for \Cref{prob:ori_bilevel_prob} as fixed constants. We add the current active inequality constraints into the objective via Lagrange terms, and simultaneously enforce those constraints by linearizing them around ${x}$. Thus, by this construction, \Cref{prob:ghost} has only linear equality constraints in $\tilde{h}$, which enables a lower oracle complexity compared to \citet{kpw+24} as we proved later in \Cref{cor:lin}.

\begin{figure}[htbp]
    \centering
    \includegraphics[width=0.98\linewidth]{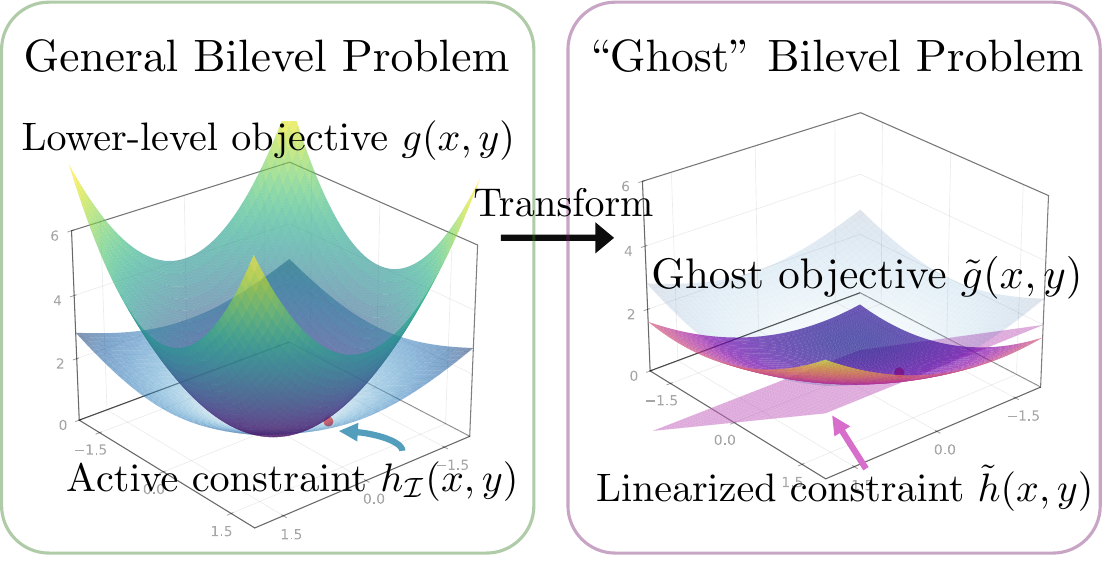}
    \caption{Illustration of our ghost bilevel reformulation: fix the active set $\mathcal{I}$, replace the lower-level problem with the ghost objective, and linearize the active constraints as \Cref{eq:ghost_ll}.}
    \label{fig:bilevel_framework}
    \vspace{-0.1cm}
\end{figure}

\Cref{fig:bilevel_framework} illustrates our method. Intuitively, \Cref{prob:ghost} preserves the local behavior of the original Problem \ref{prob:ori_bilevel_prob} by focusing only on the constraints that are active at $y^*$. Inactive constraints remain slack in a neighbor of $(\bar{x}, y^*)$ and therefore do not affect the first-order behavior, whereas active constraints behave like equalities and can be captured by their first-order expansions. The method can save a lot of computation cost by avoiding dealing with all inequalities globally, but only needs to deal with several active constraints. Moreover, by our construction, at point $x = \bar{x}$, the original lower-level solution $y^*$ is feasible and also optimal for the inner minimization of \Cref{prob:ghost}, resulting in the hypergradient at $\bar{x}$ being unchanged. 

\begin{theorem}[Active-set equivalence]
For any given $\bar{x}$, assume that LICQ holds at a KKT point of the lower-level problem. In addition, assume that $F$ is differentiable at $\bar{x}$ and that the active set is locally constant around $\bar{x}$,
Let $F$ be the upper-level objective in~\Cref{prob:ori_bilevel_prob} and $\tilde{F}$ be constructed as in~\Cref{prob:ghost}. 
Then $\nabla F(\bar{x})=\nabla \tilde{F}(\bar{x})$.
\end{theorem}
This theorem follows from the fact that, at $x = \bar{x}$, the ghost lower-level problem enforces the same active constraint (to first order) and uses the corresponding Lagrange objective with frozen multipliers, which yields an identical local behavior of $y^*$ and hence the same hypergradient.

Having established that the constraints can be reduced to linear equalities \emph{without} losing hypergradient accuracy, we can now leverage the near-constant-time algorithm from~\citet{kpw+24}.

\subsection{Finite-Difference Hypergradient
Approximation}\label{sec:finite_diff}
To avoid explicitly computing the $dy^*(x)/dx$ in the lower-level problem, we employ a finite-difference hypergradient strategy with an explicit approximation guarantee. The core idea is to inject a small perturbation of the upper-level objective into the lower-level problem and observe the change in the optimal solution.
For a suitably chosen perturbation magnitude, this approach yields an $\epsilon$-approximate hypergradient using only first-order information.

Specifically, let $\delta>0$ be a given small perturbation factor. We modify the lower-level objective in~\Cref{prob:ghost} by adding a perturbation of the upper-level objective:
\begin{align}\label[prob]{prob:sol_lower_perturb}
   &\min_x \tilde{F}(x) \coloneqq f(x, y_\delta^* ) \notag \\
   &\ \subjectto y_\delta^*  \in \argmin_{y: \tilde{h}(x,y)=0} \tilde{g}(x,y) + \delta f(x,y) \tag{P3}.
\end{align}
We call \Cref{prob:sol_lower_perturb} the \emph{perturbed ghost problem} and let $(y_\delta^*, \lambda_\delta^*)$ be its primal-dual solution. The upper-level gradient can then be approximated by the following finite-difference formula derived from the perturbed Lagrangian:
\begin{align}\label[eq]{eq:finite_diff}
    v_x &\coloneqq\frac{1}{\delta} \Big(\nabla_x[\tilde{g}(x, y_\delta^*) + \langle \lambda_\delta^*, \tilde{h}(x, y^*)\rangle] - \nabla_x[\tilde{g}(x, y^*) ]\Big).
\end{align}
Given this finite-difference formula, we can prove that it recovers the implicit differentiation up to $\oo{(\delta)}$:
\begin{align}\label[eq]{eq:finite_diff_approx}
    \left\| v_x - \bigg( \frac{dy^*(x)}{dx} \bigg)^\top \nabla_y f(x, y^*(x)) \right\| \leq \oo(\delta).
\end{align}
See \Cref{apd:finite_diff} for proof.
Intuitively, $v_x$ captures the change in the lower-level solution (and duals) with respect to $x$ through the injection of $\delta f(x,y)$, and thereby approximates $(dy^*/dx)^\top \nabla_y f(x,y^*(x))$.
The full upper-level gradient is then assembled as $\tilde{\nabla} F(x) = \nabla_x f + v_x$, which is our estimate of $\nabla F(x)$. This procedure requires only first-order derivatives of $f, g, h$. Our method is summarized in~\Cref{alg:inexact-gradient-oracle}.

In the following, we analyze the convergence performance of our algorithm. 

\begin{theorem}\label{thm:o1}
 Under \Cref{ass1,ass2}, given any accuracy parameter $\epsilon>0$, \Cref{alg:inexact-gradient-oracle} outputs $\tilde{\nabla}F$ such that $\|\tilde{\nabla}F(x)-\nabla F(x)\|\leq\epsilon$ within $\oo(\log(1/\epsilon))$ gradient oracle evaluations.
\end{theorem}


We compare~\Cref{thm:o1} with the result in~\cite{kpw+24}, specifically Theorem~5.3,
which requires access to an exact dual solution and has a computation cost of
$\tilde{\oo}(\epsilon^{-1})$.
By contrast, our approach only requires correct identification of the active constraints, yielding an improved complexity of $\oo(\log (1/\epsilon))$.
Combining the meta-algorithm in~\cite{kpw+24} with our improved rate yields the following corollary.

\begin{corollary}\label[corollary]{cor:lin}
Under \Cref{ass1,ass2}, running \Cref{alg:inexact-gradient-oracle}
in $\tilde{\oo}(\delta^{-1}\epsilon^{-3})$ oracle calls converges to a
$(\delta,\epsilon)$-Goldstein stationary point for the bilevel problem with linear
inequality constraints.
\end{corollary}

The rate in~\Cref{cor:lin}
matches the best known rate for non-convex non-smooth optimization in~\cite{zhang2020complexity}.
While bilevel optimization is a special case of non-convex non-smooth optimization, naively applying their algorithms to bilevel problems requires second-order information via differentiable optimization.
This underscores the significance of~\Cref{cor:lin} as it establishes a state-of-the-art convergence rate for first-order bilevel optimization with linear inequality constraints.
Furthermore, in \Cref{sec:main_general_constraint}, we analyze general convex-constraint problems instead of linear constrained problems and obtain the same rate as \Cref{cor:lin}.

\begin{algorithm}[h]\caption{Inexact Gradient Oracle for Bilevel Reformulation}\label{alg:inexact-gradient-oracle}
\begin{algorithmic}[1]
\State \textbf{Input:}
Current $x$, accuracy $\epsilon$, perturbation $\delta = \oo(\epsilon)$.
\State Compute the primal-dual pair $(y^*, \lambda^*)$ by solving $ \min_{y:h(x,y)\leq 0, e(x,y)=0}
 g(x, y)$ with any solver\label{line:solve_ori}
\State Compute the primal-dual pair $(y^*_\delta, \lambda_\delta^*)$ by solving $\min_{y: \tilde{h}(x,y)=0} \tilde{g}(x,y) + \delta f(x,y)$ with any solver\label{line:solve-perturb}
\State Compute the finite-difference $v_x$ as in \cref{eq:finite_diff} \\
\Comment{Approximates $\left({d y^{*}(x)}/{d x}\right)^\top\nabla_{y}f(x,y^{*})$}
\State \textbf{Output:} $\tilde{\nabla} F = \nabla_x f(x, y^*) + v_x$
\end{algorithmic}
\end{algorithm}

\section{Implementation: Solver-Agnostic Differentiation}\label{sec:main_agnostic}
\begin{figure}[htbp]
\begin{lstlisting}[style=py]
import cvxpy as cp
# define problem in CVXPY
x = cp.Variable(n)
u = cp.Parameter(n)
obj = cp.Minimize(0.5 * cp.sum_squares(x) + cp.sum(u * x))
constraints = [x == 0, x <= 1]
prob = cp.Problem(obj, constraints)
# layer = CvxpyLayer(prob, parameters=[u], variables=[x])
layer = |\colorbox{green!20}{\texttt{FFOLayer}}|(prob, parameters=[u], variables=[x])
x_star = layer(u_value)
\end{lstlisting}
\caption{Drop-in usage of our \ffolayer with the same interface as \cvxpylayer, mapping a \texttt{CVXPY} problem to a parameterized differentiable optimization layer in PyTorch.}
\label{fig:ffo_layer}
\end{figure}


We implement our algorithm as a {PyTorch} module~\cite{pgs+17} that can be seamlessly integrated into existing codebases, as shown in \Cref{fig:ffo_layer}. 
In particular, it is compatible with existing differentiable optimization layers (e.g., \cvxpylayer), where the embedded solver is a \emph{black box} for the lower-level problem: given $x$ and problem functions $(g, h, e)$, it returns $y^*(x)$, after which the layer evaluates $f(x, y^*(x))$ and backpropagates through $f$. This design keeps the upper-level objective $f$ separate from the solver.

However, \Cref{line:solve-perturb} in \Cref{alg:inexact-gradient-oracle} requires solving the perturbed lower-level problem with the objective $\tilde{g}(x,y) + \delta f(x,y)$, which explicitly depends on $f$. Under the black-box solver interface, the solver is not provided the explicit form of $f$, so this perturbed solve is not a direct drop-in and requires restructuring the lower-level formulation.

\paragraph{Solver-agnostic reformulation} We resolve this issue by constructing the following surrogate upper-level objective:
\begin{align}
    \hat{f}(x, y) \coloneqq c^\top y, \ \hat{F}(x) \coloneqq \hat{f}(x, y^*)= c^\top y^*,
\end{align}
where $c := \texttt{detach}(\nabla_y f(x, y^*(x)))$\footnote{ $\texttt{detach}(\cdot)$ denotes the stop-gradient operator.}. Since $\nabla_y f = c = \nabla_y \hat{f}$, this replacement preserves the hypergradient we seek. See \Cref{apd:solver_agnostic_reform} for proof.

In addition, with this transformation, the perturbed lower-level problem of \Cref{prob:sol_lower_perturb} can be implemented by simply adding a linear term: 
\begin{align}\label[prob]{prob:sol_lower_perturb_reform}
    \hat{y}_\delta^*  \in \argmin_{y: \tilde{h}(x,y)=0} \tilde{g}(x,y) + \delta c^\top y.
\end{align}
Concretely, we first solve the original lower-level problem to obtain $y^*$ (\Cref{alg:inexact-gradient-oracle}, \cref{line:solve_ori}). We then compute $c=\nabla_y f(x,y^*(x))$ via \emph{automatic differentiation} and stop gradients through $c$. Finally, we solve the perturbed lower-level problem~\ref{prob:sol_lower_perturb_reform}, to obtain $\hat{y}_\delta^* $, which is used in the finite-difference approximation (\Cref{eq:finite_diff}). This procedure effectively incorporates the required upper-level information into the lower-level perturbation while avoiding any objective-specific derivations.


Therefore, this makes our method \emph{solver-agnostic}: we treat the lower-level solver as a black box, and changing the solver does not require implementing a new, solver-specific backward pass. In the backward step, we only need to solve the perturbed problem using the incoming gradient signal $c$, and then perform the finite difference. Thus, the gradient computation remains unchanged across solvers, which enables us to use many advanced convex solvers such as GUROBI and MOSEK.


In our code implementation, we provide two variants of \ffolayer (while our theory covers both cases uniformly): \ffocp: our method applied to general convex problems; \ffoqp: a specialization of our method for QP layers, which exploits the quadratic structure (e.g., having a closed-form solution for QP). Both variants use the same underlying algorithm, but \ffoqp is optimized for the QP case.

\section{Experiments}

\begin{figure*}[htbp]
    \centering
    \begin{minipage}[b]{0.33\textwidth}
        \centering
        \includegraphics[width=\linewidth]{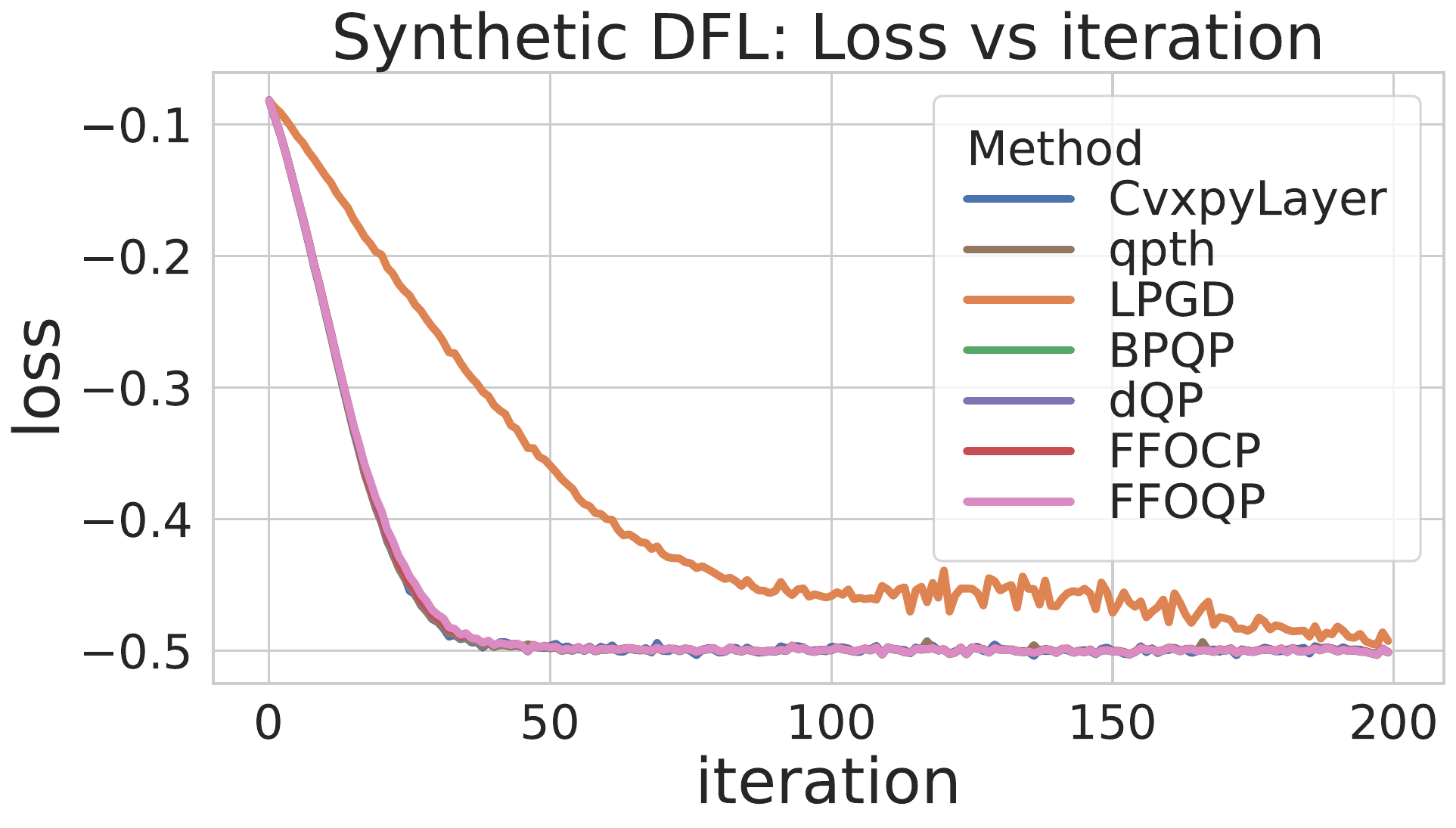}
    \end{minipage}
    \hfill
    \begin{minipage}[b]{0.32\textwidth}
        \centering
        \includegraphics[width=\linewidth]{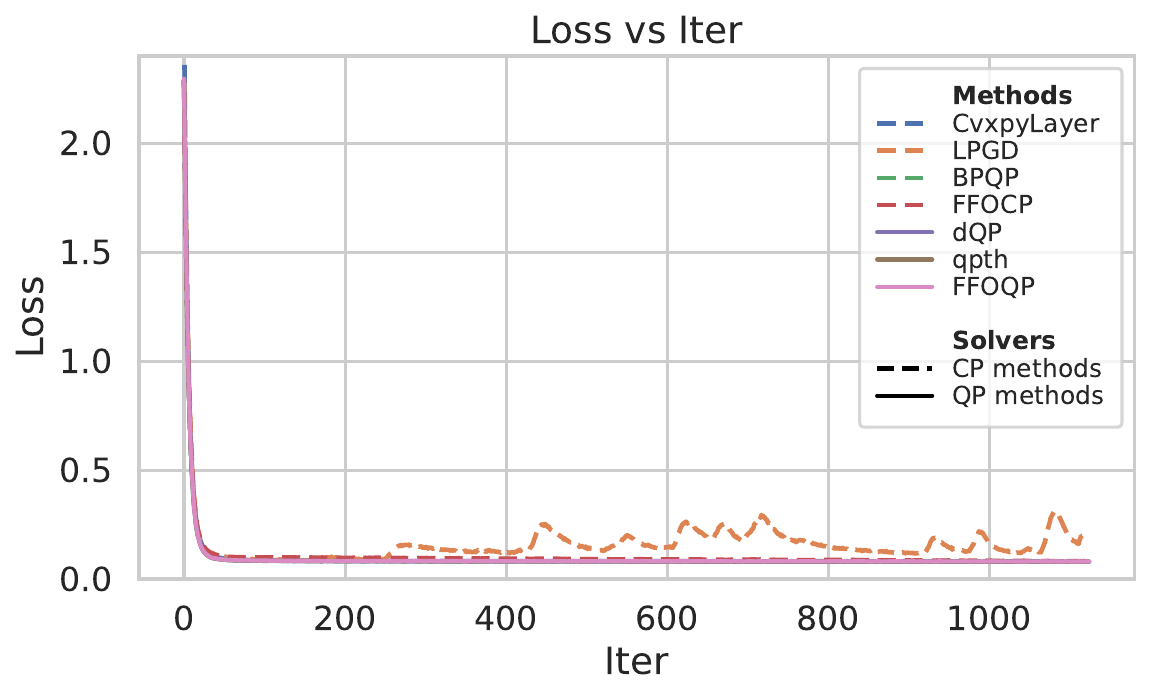}
    \end{minipage}
    \hfill
    \begin{minipage}[b]{0.33\textwidth}
        \centering
        \includegraphics[width=\linewidth]{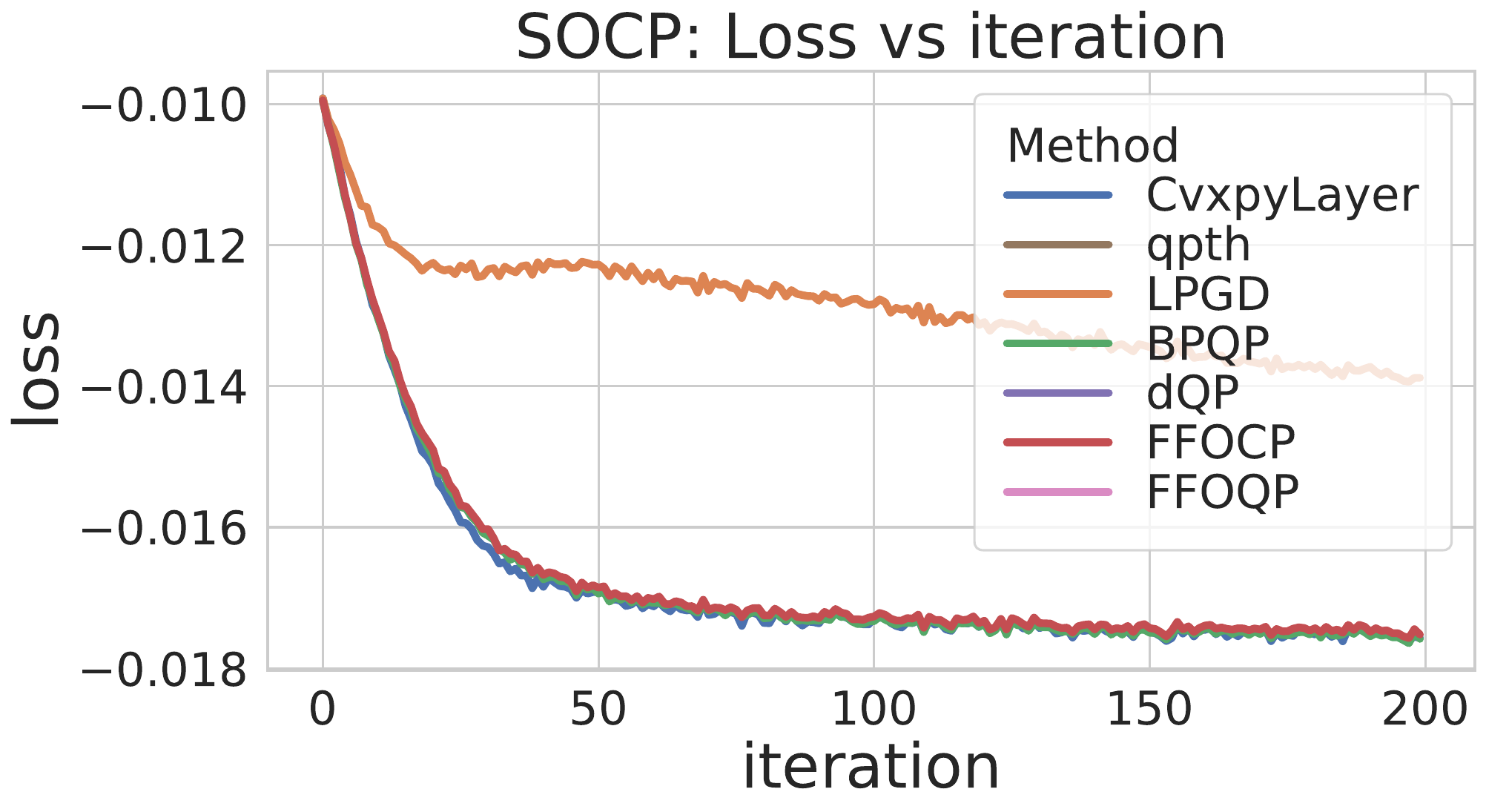}
    \end{minipage}
    \hfill
    \vspace{-6pt}
    \caption{Training convergence for the synthetic DFL task, Sudoku, and SOCP task. Our methods (\ffocp and \ffoqp) have the same convergence performance as other exact gradient methods.}
    \label{fig:convergence}
    \vspace{-0.5cm}
\end{figure*}

We evaluate the performance of our fully first-order bilevel method on a variety of tasks, comparing against several baseline methods for differentiable optimization layers. The baselines include:
\begin{itemize}[nosep,left=0.5em]
    \item \cvxpylayer~\cite{aab+19}: a general differentiable convex optimization layer specified in CVXPY; problems are compiled to cone programs and solved by \texttt{diffcp}~\cite{abb+19}.
    \item \qpth~\cite{ak17}: a differentiable QP layer based on a primal-dual Newton (interior-point) method.
    \item \lpgd~\cite{pmm24}: a first-order primal-dual solver that applies proximal gradient steps to the conic form of the problem. It estimates gradients by finite difference of Lagrange proximal solutions, which is also a first-order method. 
    \item \bpqp~\cite{pyy+24}: a general differentiable convex layer that transforms a convex program into a QP, which is then solved by efficient QP solvers such as OSQP~\cite{sbg+20}.
    \item \dQP~\cite{mya+25}: a recent black-box differentiable layer for QPs that computes hypergradients using a reduced KKT system based on the active set.
\end{itemize}
For all baselines, we use official releases of \cvxpylayer, \qpth, \lpgd, and \dQP (with GUROBI backend), and implement our own version of \bpqp. For our methods, we use SCS~\cite{ocpb16} as the forward and backward solver for \ffocp, and {qpsolver}~\cite{qpsolvers} as the forward solver for \ffoqp. Due to the lack of GPU solvers that support the batch settings, we run all methods on CPU. Detailed experiment settings can be found in \Cref{apd:exp_setup}.


\subsection{Synthetic Decision-Focused Task}
Our first task is a synthetic decision-focused learning task~\cite{mkb+24}, which naturally fits a bilevel formulation. We generate a synthetic dataset of $N$ samples $\{x_i, y_i\}_{i=1}^N$, where each $y_i \in \R^{d_y}$ represents a ground-truth decision and $x_i \in \R^{d_x}$ is an input feature vector. We then train a model $q_\theta(x_i)$ to first predict the linear coefficients of a lower-level quadratic program, and the solution to the quadratic program $y^*_\theta(x_i)$ is fed into a linear loss function $y_i^\top y^*_\theta(x_i)$. The task is to learn the neural network parameter $\theta$ to minimize $\sum_{i=1}^Ny_i^\top y^*_\theta(x_i)$. The synthetic DFL problem is formulated as follows:
\vspace{-0.4cm}
\begin{align*}
    &\min_{\theta}\ \sum_{i=1}^{N} y_i^{\top}y^{*}_{\theta}(x_i) \\ 
    &\ \text{s.t.} \
    y^{*}_{\theta}(x_i)\in\arg\min_{y:\,Gy\le h}\ \frac12\,y^{\top}Qy-q_{\theta}(x_i)^{\top}y,
\end{align*}
where $q_\theta(x_i)$ is the predicted linear coefficients for the $i$-th QP, and the QP has fixed matrix $Q$ and constraints $Gy \leq h$.

\subsection{Sudoku Task}
Our second task is adopted from the Sudoku task proposed by~\cite{ak17}. Here, solving a Sudoku puzzle can be approximately represented as solving a linear program. Given a dataset of partially filled $n \times n$ Sudoku puzzles $p_i \in \{0,1\}^{n^3}$ and their solutions $y_i \in \{0,1\}^{n^3}$, the task is to learn the rules of Sudoku puzzles, which are the linear constraint parameters $A(\theta)$ and $b(\theta)$ of the linear program. The Sudoku problem is formulated as follows:
\begin{align*}
    & \operatorname*{\min}_{\theta}  \sum_{i=1}^N \|y_i - y^*(\theta, p_i)\|_2^2 \nonumber \\
    &\ \text{s.t.} \ y^*(\theta, p_i) \in \operatorname*{\arg\min}_{y:A(\theta)y=b(\theta), y\geq 0} \frac{\tau}{2}y^\top y - p_i^\top y.
\end{align*}
Essentially, the solver must learn the Sudoku rules (constraints) so that the solution completes any puzzle $p_i$. We set $n^3=729$ and $\tau = 0.1$ as a small quadratic perturbation.

\subsection{Second-order Cone Programming (SOCP)}
In order to test our proposed method in \emph{general convex constraints}, we include a second-order cone program, which can be regarded as robust programming. This task is formulated as a decision-focused problem similar to the QP case, but with the added complexity of a second-order ($l_2$ norm) constraint. 
Formally, the lower-level problem is an SOCP:
\begin{align*}
    \min_{y:Gy\leq h, \|y\|_2 \leq c} \frac{1}{2}y^\top Q y - q_\theta(x_i)^\top y.
\end{align*}
This SOCP task allows us to assess how our method and baselines perform on an optimization layer with nonlinear constraints and whether our first-order approach retains its advantages in this setting.

\section{Results and Discussion}\label{sec:discussion}
\begin{figure*}[htbp]
    \centering
    \begin{minipage}[b]{0.32\textwidth}
        \centering
      \includegraphics[width=\linewidth]{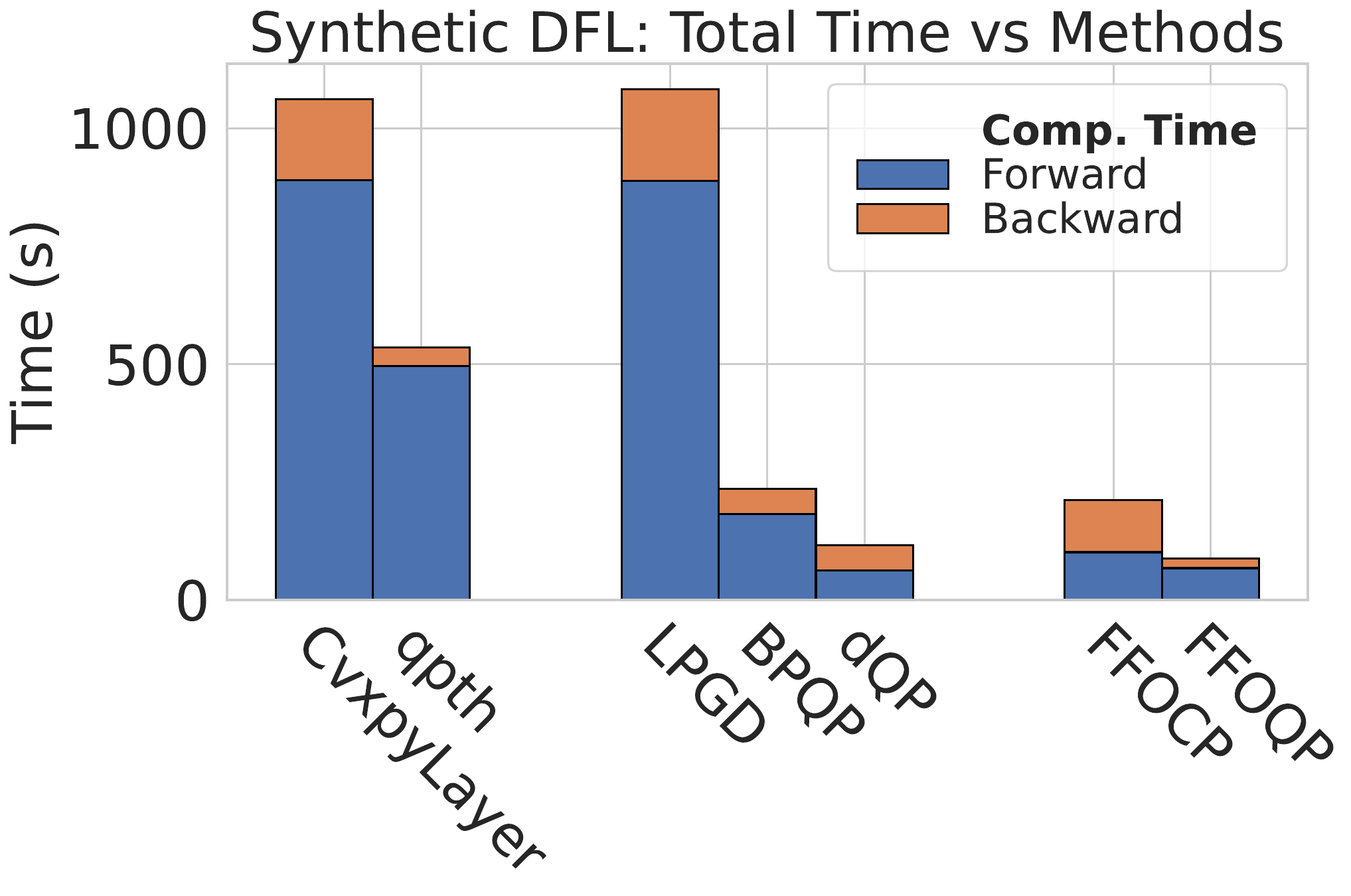}
    \end{minipage}
    \hfill
    \begin{minipage}[b]{0.33\textwidth}
        \centering
        \includegraphics[width=\linewidth]{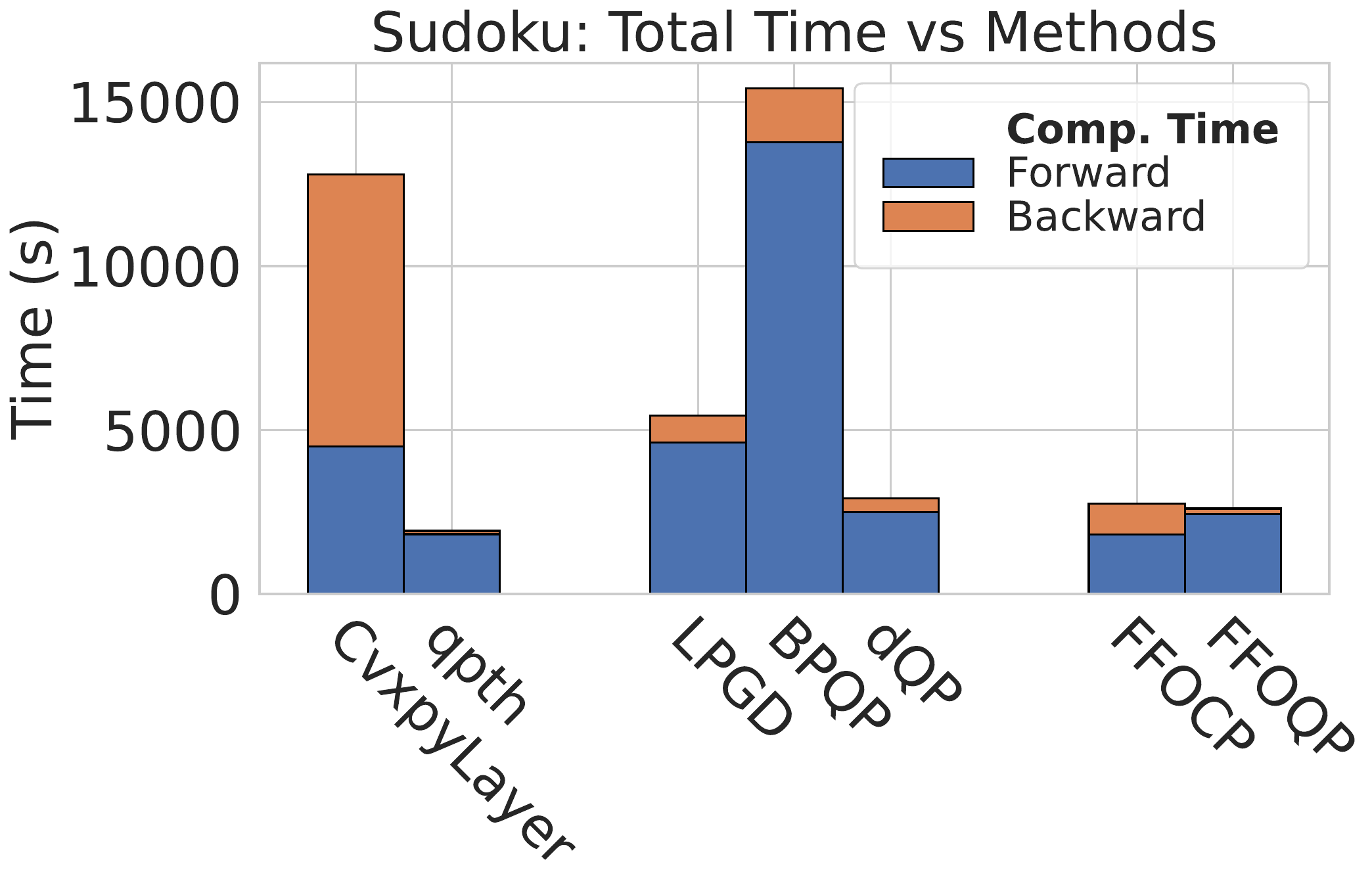}
    \end{minipage}
    \hfill
    \begin{minipage}[b]{0.32\textwidth}
        \centering
        \includegraphics[width=\linewidth]{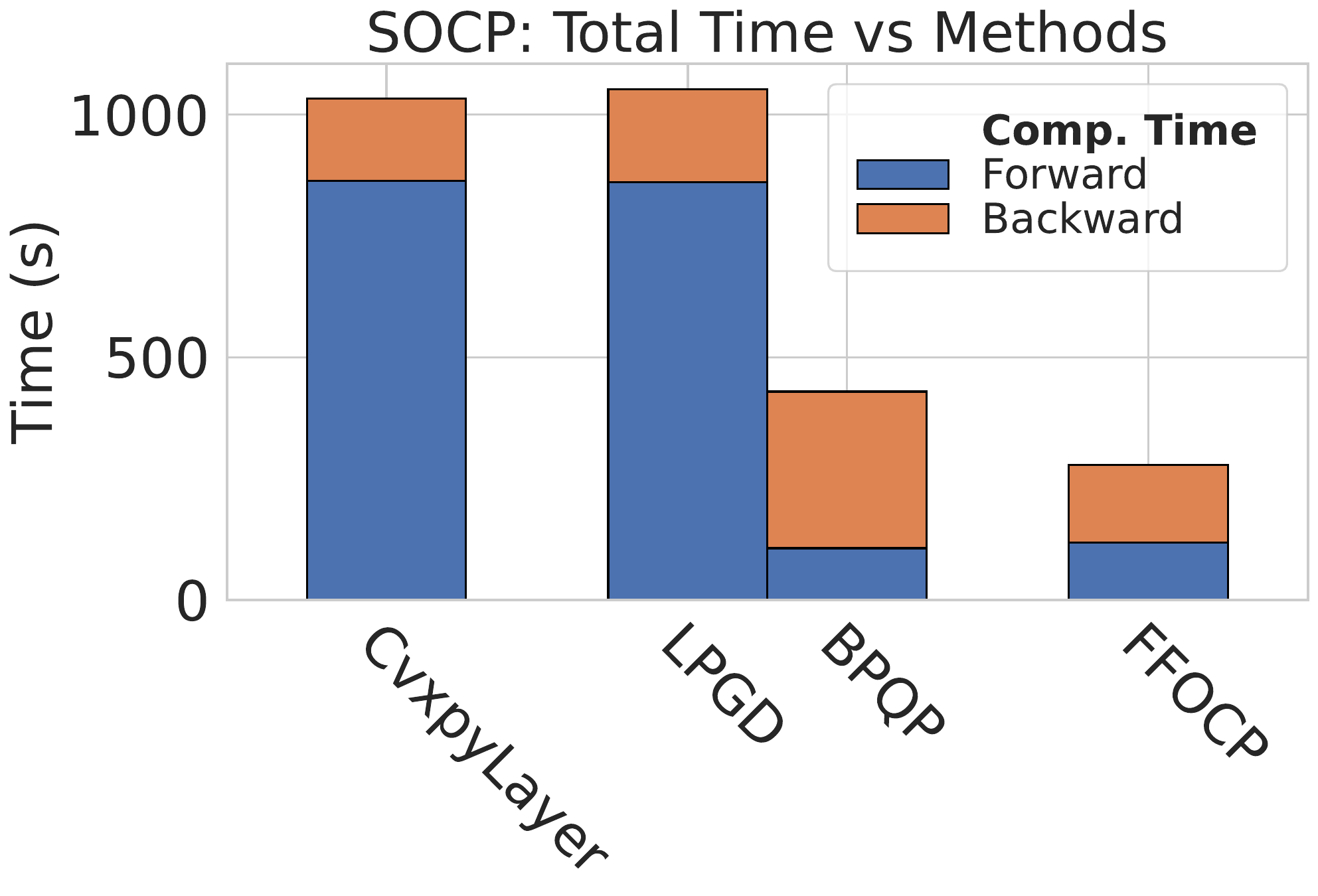}
    \end{minipage}
    \hfill
    \vspace{-10pt}
    \caption{Computation costs for the synthetic DFL task, Sudoku, and SOCP task for variable dimension $d_y = 800$. Our convex solver \ffocp outperforms other convex solvers: \cvxpylayer, \lpgd, and \bpqp, while our QP solver \ffoqp outperforms other QP solvers.}
    \label{fig:comp_time}
    \vspace{-0.3cm}
\end{figure*}

\begin{figure*}[htbp]
    \centering
    \includegraphics[width=0.95\linewidth]{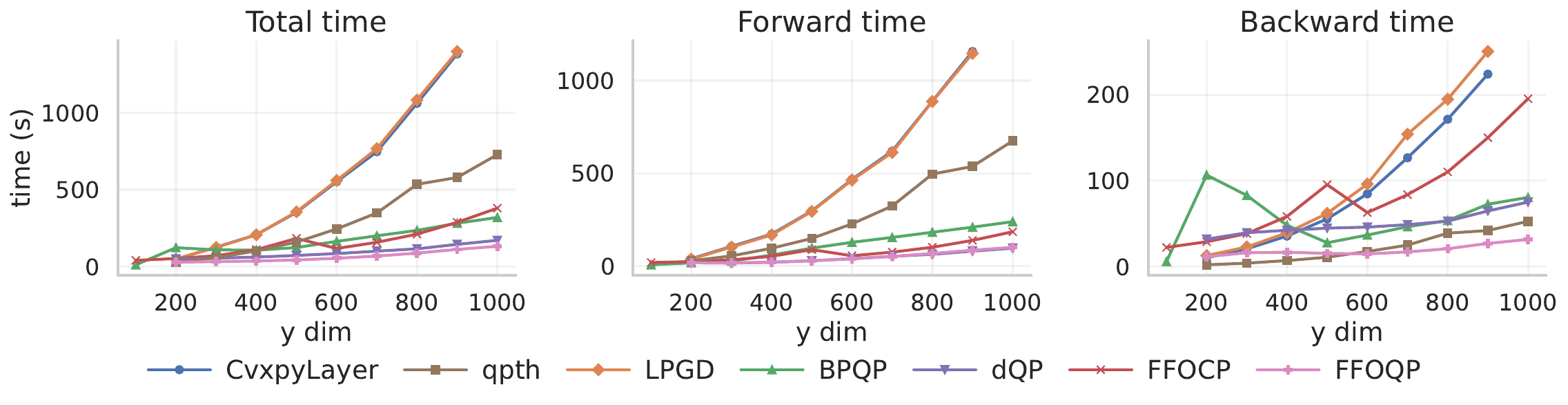}
     \vspace{-10pt}
    \caption{Different variable dimension for synthetic DFL. \ffocp and \ffoqp exhibit sublinear computation time scaling over the problem dimension. (It does not include the results for \cvxpylayer and \lpgd at $d_y = 1000$  due to out of memory.)}
    \label{fig:ydim_ablation_qp}
    \vspace{-0.5cm}
\end{figure*}

\paragraph{Convergence behavior}
We first evaluate training loss convergence. \Cref{fig:convergence} shows the results on the three benchmark tasks: the synthetic DFL, Sudoku, and SOCP tasks.
Across all tasks, we observe that both of our proposed first-order methods closely match the optimization behavior of exact solvers (\cvxpylayer and \qpth) in terms of training loss reduction. This suggests that replacing implicit differentiation with our first-order approximate hypergradient oracle does not degrade optimization performance on these tasks. 

Note that we found that \lpgd's reported tolerance ($\epsilon=10^{-4}$) in their paper was insufficient for convergence on the synthetic and Sudoku tasks. We demonstrate this finding in \Cref{apd:ablation}. Thus, in \Cref{fig:convergence}, we report \lpgd's performance with the same tolerance as ours, which indicates that our solvers consistently outperform the prior first-order solver. This advantage comes from the explicit non-asymptotic guarantee for the computed hypergradient, which makes the backward pass more stable and less sensitive to the solver tolerance in practice.

\vspace{-0.3cm}
\paragraph{Solving time comparison with KKT-based methods}
We next investigate the runtime of our approach versus the baselines. \Cref{fig:comp_time} plots the averaged forward (layer solving) and backward (gradient computation) times for each method.
We find that our first-order approach has significant runtime advantages in scenarios where the KKT-based methods struggle with numerical stability.

For example, \cvxpylayer's backward pass is extremely slow and unstable in the Sudoku task (which yields a dense, ill-conditioned KKT matrix), whereas \ffocp's backward pass remains fast and robust. In general, when the problem is ill-conditioned or very large-scale, factorizing or inverting the Hessian becomes computationally expensive for KKT-based methods. On the other hand, our method avoids explicit KKT formulation and obtains hypergradients via only two optimization solvers followed by a finite-difference approximation, which yields better robustness to ill-conditioning.

For the forward pass, \ffocp is also faster than \cvxpylayer because \cvxpylayer incurs a heavy canonicalization overhead while solving the problem.

In QP tasks, \ffocp achieves comparable or even better performance than \qpth despite not leveraging the QP structure. \qpth achieves a faster forward time in Sudoku and has a much lower backward time in all QP tasks since factorization of the KKT matrix is done in the forward pass. However, \qpth has a much higher forward time in synthetic DFL. When we specialize our algorithm to QP, \ffoqp is superior to all KKT-based methods in synthetic QP and has a runtime similar to \qpth in Sudoku.


\vspace{-0.3cm}
\paragraph{Solving time comparison with optimization-based methods}
We also compare our approach to recent first-order methods (\lpgd) and black-box differentiation methods (\bpqp, \dQP). These methods compute the hypergradient by iteratively solving optimization problems, which can introduce significant overhead. 

For \lpgd, its forward passes are much slower than ours because it is built on \texttt{diffcp}, which requires additional canonicalization and can be slow in large-scale cases. The other two baselines, \bpqp and \dQP, both avoid explicit differentiation by transforming the original problem into a QP, which allows using efficient black-box QP solvers. However, this transformation significantly increases the problem dimension and complexity of each backward pass. For example, \bpqp must compute certain second-order information to construct the equivalent QP, which becomes the main bottleneck for large problems. In contrast, \ffocp works on the problem in its original form and uses only first-order information. 
As shown in \Cref{fig:comp_time}, \ffocp is faster than \bpqp and \dQP, despite their use of specialized QP solvers. When using the QP-specialized variant, \ffoqp, our method outperforms all optimization-based methods on QP tasks.



\vspace{-0.2cm}
\begin{figure}[htbp]
    \centering
    \includegraphics[width=0.7\linewidth]{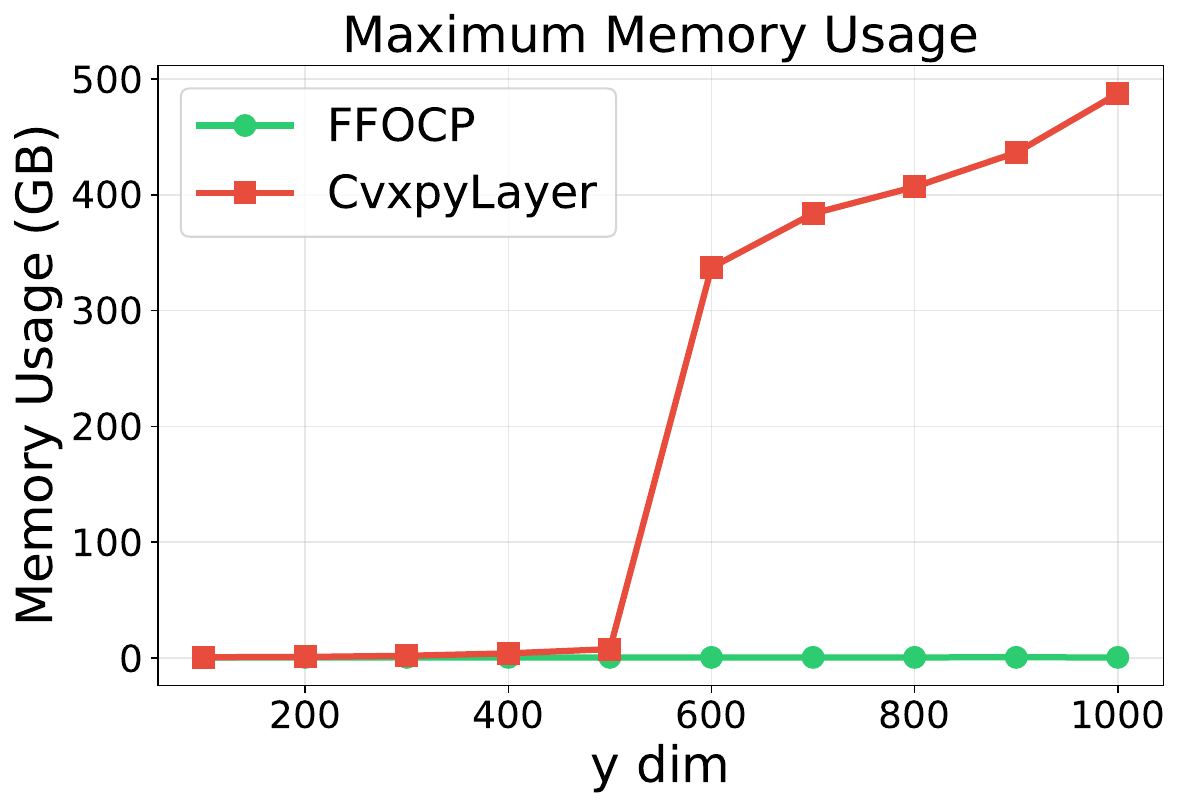}
    \vspace{-0.2cm}
    \caption{Maximum memory usage during training with different variable dimensions. \ffocp exhibits near-constant peak memory, while \cvxpylayer's memory cost grows dramatically with $d_y$.}
    \label{fig:max_memory}
    \vspace{-0.5cm}
\end{figure}

\paragraph{Scalability with problem size}
A key motivation for our approach is better scaling to large problem sizes. To test this, we conduct two ablation studies by varying the dimension of the decision variable $y$ in both QP and SOCP tasks. \Cref{fig:ydim_ablation_qp} reports the average total, forward, and backward time as $d_y$ grows. The ablation confirms that our methods exhibit much slower growth in computation time with increasing dimension compared to baseline methods. Importantly, \Cref{fig:max_memory} shows that our solver remains memory-light across all variable dimensions, while \cvxpylayer's memory increases rapidly and becomes prohibitive at large $d_y$ (eventually leading to out-of-memory at $d_y=1000$).

These trends align with our complexity analysis: by eliminating Hessian inversions, \ffolayer backpropagation cost grows sublinear in problem size, as opposed to superlinear time and memory growth for baselines. 

\vspace{-0.5cm}
\paragraph{Other ablation studies} We also discuss more ablation results on \qpth with GPU, the distance and cosine similarity between the approximated gradient and exact gradient, solver tolerances, and batch sizes in \Cref{apd:ablation}.

\section{Conclusion}
We introduce a fully first-order differentiation method for differentiable optimization layers. Our approach reformulates the differentiable optimization as a bilevel problem and introduces an active-set Lagrangian oracle to compute hypergradients using only gradients and function evaluations. Theoretically, we show that an approximate hypergradient can be obtained in $\oo(\log (1/\epsilon))$ time per iteration, leading to an overall convergence complexity of $\oo(\delta^{-1}\epsilon^{-3})$ for constrained bilevel optimization. 
Experimental results demonstrate comparable convergence to exact layers with substantially improved scalability in well-clock time and peak memory as problem size increases.



\section*{Acknowledgements}
This project was supported by the Schmidt Sciences AI2050 Fellowship, NSF grant IIS-2403240, and NIH grant R01HL184139.

\section*{Impact Statement}
This paper proposes \ffolayer, a fully first-order method for differentiating convex optimization layers using only gradients and function evaluations. It can reduce the cost of using embedded convex optimizations in ML pipelines (e.g., structured prediction, control, and meta-learning). Potential risks come mainly from how the layer is used in downstream applications (e.g., poorly specified objectives and constraints), so we recommend domain-specific validation when deployed in high-stakes settings.


\bibliography{example_paper}
\bibliographystyle{icml2026}

\newpage
\appendix
\onecolumn
\appendix
\section*{Appendix}

\section{Lemmas for Linear Equality Constraints}\label{app:lem}
In this section, we consider a bilevel problem with only linear equality constraints:
\begin{align}
F(x) \coloneqq f(x, y^*)
\ \ \subjectto \ \ y^*\in \argmin_{y:\m{A}x-\m{B}y-b = 0}
 g(x, y),
\end{align}
We derive several useful lemmas here, which will then be used in the proofs for the general convex constraints.
The first lemma offers an intuitive interpretation of the gradient as composed of two components:
one behaves as if the problem is unconstrained but projected onto the kernel of the active constraints;
the other arises solely from the constraints,
projected onto the span of the active constraints.
We will make extensive use of this form in the subsequent analysis.

\begin{lemma} \label{lma1}
Consider the lower-level problem with linear equality constraints with a full row rank $\m{B}$,
the following holds:
\begin{eqnarray}
 \frac{dy^*}{dx} & = &
  \Pi^G_{\m{B}} \big(-{[\nabla_{yy} g(x, y^*)]}^{-1}
  \nabla_{yx} g(x, y^*) \big)
 + (\m{I} - \Pi^G_{\m{B}}) (\m{B}^\dagger \m{A}) ~~.
\end{eqnarray}
where we define $G=\nabla_{yy} g(x, y^*)$ and $\Pi^G_{\m{B}} (z) \coloneqq
 \argmin_{\m{B}y=0} (y-z)^\top G(y-z)$ as the projection operator onto the null space of $\m{B}$.
\end{lemma}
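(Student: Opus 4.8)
The plan is to differentiate the lower-level KKT system implicitly and then recognize the resulting closed form as the claimed sum of two $G$-metric projections. Writing the lower-level Lagrangian as $g(x,y) + \nu^\top(\m{A}x - \m{B}y - b)$, the optimal pair $(y^*,\nu^*)$ satisfies stationarity $\nabla_y g(x,y^*) - \m{B}^\top \nu^* = 0$ together with feasibility $\m{B}y^* = \m{A}x - b$. First I would differentiate both equations in $x$. Writing $G = \nabla_{yy} g(x,y^*)$ and $H = \nabla_{yx} g(x,y^*)$, this yields the linear system $G\,\tfrac{dy^*}{dx} + H - \m{B}^\top \tfrac{d\nu^*}{dx} = 0$ and $\m{B}\,\tfrac{dy^*}{dx} = \m{A}$.

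Next I would solve this system explicitly. From the stationarity row, $\tfrac{dy^*}{dx} = G^{-1}\m{B}^\top \tfrac{d\nu^*}{dx} - G^{-1}H$; substituting into the feasibility row and using that $G\succ 0$ (strong convexity) and $\m{B}$ has full row rank, so that $\m{B}G^{-1}\m{B}^\top$ is invertible, I solve for the dual derivative $\tfrac{d\nu^*}{dx} = (\m{B}G^{-1}\m{B}^\top)^{-1}(\m{A} + \m{B}G^{-1}H)$. Back-substitution then gives a closed form for $\tfrac{dy^*}{dx}$ in terms of $G,H,\m{A},\m{B}$.

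The crux is to recognize the operator $P := G^{-1}\m{B}^\top(\m{B}G^{-1}\m{B}^\top)^{-1}\m{B}$ appearing in the back-substitution as exactly $\m{I} - \Pi^G_{\m{B}}$. I would verify this by solving the defining minimization of $\Pi^G_{\m{B}}$ directly: its own stationarity conditions give $\Pi^G_{\m{B}} = \m{I} - P$, and one checks that $P$ maps into the $G$-orthogonal complement of $\ker \m{B}$ while $\m{I}-P$ lands in $\ker\m{B}$. With this identification, the two $-G^{-1}H$ contributions collect into $\Pi^G_{\m{B}}(-G^{-1}H)$, matching the first summand.

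The last step, and the only place needing a small trick, is rewriting the remaining term $G^{-1}\m{B}^\top(\m{B}G^{-1}\m{B}^\top)^{-1}\m{A}$ as $(\m{I}-\Pi^G_{\m{B}})(\m{B}^\dagger\m{A})$. Since $\m{B}$ has full row rank, $\m{B}^\dagger = \m{B}^\top(\m{B}\m{B}^\top)^{-1}$ is a right inverse, so $\m{B}\m{B}^\dagger = \m{I}$; inserting this gives $(\m{I}-\Pi^G_{\m{B}})\m{B}^\dagger\m{A} = P\,\m{B}^\dagger\m{A} = G^{-1}\m{B}^\top(\m{B}G^{-1}\m{B}^\top)^{-1}\m{B}\m{B}^\dagger\m{A} = G^{-1}\m{B}^\top(\m{B}G^{-1}\m{B}^\top)^{-1}\m{A}$, as required. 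I expect the main obstacle to be purely bookkeeping: keeping the two $G$-metric projectors straight ($\Pi^G_{\m{B}}$ onto $\ker\m{B}$ versus $\m{I}-\Pi^G_{\m{B}}$ onto its $G$-orthogonal complement) and correctly invoking the full-row-rank hypothesis at the two points where $\m{B}G^{-1}\m{B}^\top$ and $\m{B}\m{B}^\top$ are inverted.
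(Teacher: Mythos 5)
Your proof is correct, and after the shared first step it follows a genuinely different route from the paper's. Both arguments begin by implicitly differentiating the lower-level KKT system to get $G\,\frac{dy^*}{dx} - \m{B}^\top \frac{d\nu^*}{dx} = -\nabla_{yx}g(x,y^*)$ and $\m{B}\,\frac{dy^*}{dx} = \m{A}$. The paper then declines to eliminate variables: it reinterprets this linear system as the KKT conditions of the quadratic program $\min_{d_y}\ \frac12 d_y^\top G d_y + \nabla_{yx}g(x,y^*)\, d_y$ subject to $\m{B}d_y=\m{A}$, rewrites that QP in the $G$-norm, and decomposes $d_y$ into its $\ker\m{B}$ component (unconstrained, giving $\Pi^G_{\m{B}}(-G^{-1}\nabla_{yx}g)$) and its $G$-orthogonal complement component (pinned down uniquely by the constraint, giving $(\m{I}-\Pi^G_{\m{B}})(\m{B}^\dagger\m{A})$). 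You instead solve the system by Schur-complement elimination, arriving at the explicit formula $\frac{dy^*}{dx} = G^{-1}\m{B}^\top(\m{B}G^{-1}\m{B}^\top)^{-1}\big(\m{A}+\m{B}G^{-1}\nabla_{yx}g\big) - G^{-1}\nabla_{yx}g$, and close the gap with two matrix identities: $\Pi^G_{\m{B}} = \m{I} - G^{-1}\m{B}^\top(\m{B}G^{-1}\m{B}^\top)^{-1}\m{B}$ (obtained from the projection's own KKT conditions) and $\m{B}\m{B}^\dagger=\m{I}$ (full row rank), both of which check out. The two routes invoke the same hypotheses at the same points: $G\succ 0$ so that $G^{-1}$ and the Schur complement $\m{B}G^{-1}\m{B}^\top$ are invertible, and full row rank of $\m{B}$. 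Your version buys a completely explicit closed form for $dy^*/dx$ and for the projector, which makes the verification mechanical and computationally transparent; the paper's variational decomposition never forms $(\m{B}G^{-1}\m{B}^\top)^{-1}$, and that coordinate-free characterization of $\Pi^G_{\m{B}}$ is precisely what it exploits downstream in \cref{lma2,lma3}, where the norm bound $\|\Pi^G_{\m{B}}\|\leq 1+\sqrt{\kappa_g}$ and the perturbation bound on $\Pi^G_{\m{B}}-\Pi^{\tilde{G}}_{\m{B}}$ are derived directly from the minimization definition rather than from an explicit inverse formula.
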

\begin{proof}
Denote by $\frac{dy^*}{dx}=d_y$ and $\frac{d\lambda^*}{dx}=d_\lambda$.
The KKT system can be written as,
\begin{equation*}
G d_y - \m{B}^\top d_\lambda = -\nabla_{yx} g(x,y^*)
 \;\;;\;\; \m{B} d_y = \m{A} ~~.
\end{equation*}
This coincides with the KKT conditions of the following problem,
\begin{equation*}
\min_{d_y} \frac12 d_y^\top G d_y + \nabla_{yx}g(x, y^*) d_y
\;\;\subjectto\;\; \m{B} d_y = \m{A} ~~.
\end{equation*}
Denote by $\langle a,b\rangle_G=a^\top G b$ and $\|a\|_G=\sqrt{\langle a,a\rangle_G}$.
The above problem can be rewritten as,
\begin{equation*}
\min_{d_y} \|d_y + G^{-1}\nabla_{yx}g(x, y^*)\|_G^2
\;\;\subjectto\;\; \langle G^{-1}\m{B}^\top, d_y\rangle_G = \m{A} ~~.
\end{equation*}
We can decompose each vector $v$ into $v^\parallel$ and $v^\perp$ such that
$v^\parallel$ and $v^\perp$ lies in the span and null space of $G^{-1}\m{B}^\top$
respectively. The problem then simplifies to,
\begin{equation*}
\min_{d_y^\parallel,d_y^\perp}
  \|d_y^\parallel + \big(G^{-1}\nabla_{yx}g(x, y^*)\big)^\parallel\|_G^2
  + \|d_y^\perp + \big(G^{-1}\nabla_{yx}g(x, y^*)\big)^\perp\|_G^2
\;\;\subjectto\;\; \langle G^{-1}\m{B}^\top, d_y^\parallel\rangle_G = \m{A}
\;;\; d_y^\parallel\in \text{span}(G^{-1}\m{B}^\top) ~~.
\end{equation*}
Note that $d_y^\perp$ is unconstrained, and thus $d_y^\perp=-\big(G^{-1}\nabla_{yx}g(x, y^*)\big)^\perp
=\Pi^G_{\m{B}} \big(-{[\nabla_{yy} g(x, y^*)]}^{-1}
  \nabla_{yx} g(x, y^*) \big)$.
In addition, the feasible set for $d_y^\parallel$ consists of a unique point.
The solution can be constructed by taking any point on $\langle G^{-1}\m{B}^\top, d_y^\parallel\rangle_G = \m{A}$
and projecting onto the span of $G^{-1}\m{B}^\top$. One such point is $\m{B}^\dagger \m{A}$
and, therefore, $d_y^\parallel=\big(\m{B}^\dagger \m{A}\big)^\parallel
=(\m{I} - \Pi^G_{\m{B}}) (\m{B}^\dagger \m{A})$. This concludes the proof.
\end{proof}

\begin{lemma} \label{lma2}
Given $\mu_g\m{I}\preceq G\preceq C_g\m{I}$ and let $\kappa_g=C_g/\mu_g$,
we have $\|\Pi_{\m{B}}^G\|\leq 1+\sqrt{\kappa_g}$.
\end{lemma}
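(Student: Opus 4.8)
The plan is to obtain a closed form for the oblique projection and then exploit its self-adjointness in the $G$-inner product. First I would write the first-order (KKT) conditions for the defining minimization $\min_{\m{B}y=0}(y-z)^\top G(y-z)$, exactly as in the proof of \Cref{lma1}: introducing a multiplier for $\m{B}y=0$ and eliminating it gives the explicit operator
\begin{equation*}
\Pi^G_{\m{B}} = \m{I} - G^{-1}\m{B}^\top\big(\m{B}G^{-1}\m{B}^\top\big)^{-1}\m{B}.
\end{equation*}
I would then set $Q \coloneqq \m{I}-\Pi^G_{\m{B}} = G^{-1}\m{B}^\top(\m{B}G^{-1}\m{B}^\top)^{-1}\m{B}$, the complementary projection onto $\mathrm{span}(G^{-1}\m{B}^\top)$, and check directly that $Q$ is idempotent.

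The key observation is that $GQ = \m{B}^\top(\m{B}G^{-1}\m{B}^\top)^{-1}\m{B}$ is symmetric, so $Q$ is self-adjoint with respect to $\langle\cdot,\cdot\rangle_G$. Hence $Q$ is an \emph{orthogonal} projection in the $G$-geometry, and an orthogonal projection has operator norm at most one in its own inner product: $\|Qz\|_G \le \|z\|_G$ for every $z$. This is the step that makes the bound free of any dependence on $\m{B}$, since all of the constraint geometry is absorbed into the $G$-orthogonality.

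The final step transfers this $G$-norm contraction into the Euclidean norm using the spectral sandwich $\mu_g\m{I}\preceq G\preceq C_g\m{I}$, which gives $\sqrt{\mu_g}\,\|v\|\le\|v\|_G\le\sqrt{C_g}\,\|v\|$. Chaining these, $\sqrt{\mu_g}\,\|Qz\| \le \|Qz\|_G \le \|z\|_G \le \sqrt{C_g}\,\|z\|$, so $\|Q\|\le\sqrt{C_g/\mu_g}=\sqrt{\kappa_g}$. The triangle inequality then closes the argument: $\|\Pi^G_{\m{B}}\| = \|\m{I}-Q\| \le 1 + \|Q\| \le 1+\sqrt{\kappa_g}$.

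The only delicate point is the direction of the two norm-equivalence inequalities in the transfer step: one must lower-bound the Euclidean norm of the \emph{output} by its $G$-norm and upper-bound the $G$-norm of the \emph{input} by its Euclidean norm, so that the $\sqrt{\mu_g}$ and $\sqrt{C_g}$ factors combine into $\sqrt{\kappa_g}$ rather than cancelling. Verifying the $G$-self-adjointness of $Q$ (equivalently, that $GQ$ is symmetric) is the conceptual crux that unlocks $\|Q\|_G\le 1$; everything else is routine linear algebra. I would note that applying the same $G$-orthogonal-projection bound directly to $\Pi^G_{\m{B}}$ itself, rather than passing through $Q$ and the triangle inequality, yields the sharper constant $\sqrt{\kappa_g}$; but the route above already delivers the stated bound $1+\sqrt{\kappa_g}$.
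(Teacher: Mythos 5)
Your proof is correct, but it establishes the key inequality by a different, more structural route than the paper. The paper's proof is purely variational and takes three lines: since $0$ is feasible for the defining minimization ($\m{B}0=0$), optimality of $y=\Pi^G_{\m{B}}(x)$ gives $(x-y)^\top G(x-y)\le x^\top G x$; sandwiching this chain between $\mu_g\m{I}\preceq G\preceq C_g\m{I}$ yields $\|x-y\|\le\sqrt{\kappa_g}\|x\|$, and the triangle inequality finishes. In other words, the paper bounds the complementary map $Q=\m{I}-\Pi^G_{\m{B}}$ exactly as you do, but it obtains the $G$-norm contraction $\|Qx\|_G\le\|x\|_G$ for free from feasibility of the origin, with no closed form, no KKT elimination, and no invertibility of $\m{B}G^{-1}\m{B}^\top$ (hence no full-row-rank requirement on $\m{B}$, though that is available in context from \Cref{lma1} and LICQ). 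Your route---explicit formula, idempotency, $G$-self-adjointness of $Q$, then the standard fact that orthogonal projections are nonexpansive in their own inner product---costs more machinery but also buys more: the closed-form operator $\Pi^G_{\m{B}} = \m{I} - G^{-1}\m{B}^\top\bigl(\m{B}G^{-1}\m{B}^\top\bigr)^{-1}\m{B}$ is independently useful, and your closing observation that $\Pi^G_{\m{B}}$ is itself a $G$-orthogonal projection, so that in fact $\|\Pi^G_{\m{B}}\|\le\sqrt{\kappa_g}$ without the additive $1$, is a genuine sharpening of the stated bound (the same sharpening is also reachable variationally via the $G$-Pythagorean identity, but the paper does not record it). Your caution about the direction of the two norm-equivalence inequalities is exactly the step the paper performs in its single displayed chain.
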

\begin{proof}
For a given $x$, let $y=\Pi_{\m{B}}^G(x)$. Since $\m{B}0 = 0$, we have,
\begin{equation*}
\mu_g\|x-y\|^2 \;\leq\; (x-y)^\top G(x-y) \;\leq\; x^\top G x \;\leq\; C_g\|x\|^2 ~~.
\end{equation*}
Therefore, $\|x-y\|\leq \sqrt{\kappa_g}\|x\|$. By the triangle inequality,
$\|y\|\leq \|x\| + \|x-y\|\leq (1+\sqrt{\kappa_g}) \|x\|$.
\end{proof}

\begin{lemma} \label{lma3}
Given $\mu_g\m{I}\preceq G,\tilde{G}\preceq C_g\m{I}$ and  $\kappa_g=C_g/\mu_g$,
we have $\|\Pi_{\m{B}}^G - \Pi_{\m{B}}^{\tilde{G}}\|\leq \sqrt{\frac{\|G-\tilde{G}\|}{\mu_g} \kappa_g}$.
\end{lemma}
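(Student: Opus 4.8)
The plan is to argue variationally, exploiting the fact that $\Pi_{\m{B}}^G$ and $\Pi_{\m{B}}^{\tilde{G}}$ are oblique projections onto the \emph{same} subspace $N=\{y:\m{B}y=0\}$, differing only in the inner product used: $\langle a,b\rangle_G=a^\top G b$ versus $\langle a,b\rangle_{\tilde G}=a^\top \tilde{G} b$. Writing $P:=\Pi_{\m{B}}^G$ and $\tilde{P}:=\Pi_{\m{B}}^{\tilde{G}}$, it suffices to fix a unit vector $z$, set $u:=Pz$ and $\tilde{u}:=\tilde{P}z$, and bound $\|w\|$ for $w:=u-\tilde{u}$. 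The defining least-squares problems give the first-order (Galerkin) orthogonality conditions $\langle u-z,v\rangle_G=0$ and $\langle \tilde{u}-z,v\rangle_{\tilde{G}}=0$ for every $v\in N$; since $u,\tilde{u}\in N$ we have $w\in N$, so both conditions may be tested against $v=w$.

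The key step is to convert the mismatch between the two metrics into a factor of $G-\tilde{G}$. Starting from $\|w\|_G^2=\langle u-\tilde{u},w\rangle_G=\langle u-z,w\rangle_G+\langle z-\tilde{u},w\rangle_G$ and using $\langle u-z,w\rangle_G=0$ (as $w\in N$), I get $\|w\|_G^2=(z-\tilde{u})^\top G\,w$. The $\tilde{G}$-orthogonality of $\tilde{u}$ gives $(z-\tilde{u})^\top \tilde{G}\,w=0$, and subtracting the two yields the identity
\begin{align}
\|w\|_G^2=(z-\tilde{u})^\top(G-\tilde{G})\,w.
\end{align}
This is the crux of the argument, and the step I expect to be the main obstacle: it is precisely the \emph{disagreement} of the two inner products (the $G$-orthogonality of $u$ played against the $\tilde{G}$-orthogonality of $\tilde{u}$) that makes $G-\tilde{G}$ appear, and one must be careful that the cross terms vanish exactly because $w\in N$.

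From here the conversion to the Euclidean norm is routine. Since $G-\tilde{G}$ is symmetric, Cauchy–Schwarz gives $\|w\|_G^2\le\|G-\tilde{G}\|\,\|z-\tilde{u}\|\,\|w\|$; combining this with the strong-convexity bound $\mu_g\|w\|^2\le\|w\|_G^2$ controls $\|w\|$ in terms of $\|z-\tilde{u}\|$. It then remains to bound $\|z-\tilde{u}\|$: because $\tilde{u}$ minimizes $\|\,\cdot-z\|_{\tilde{G}}$ over $N$ and $0\in N$, the projection is non-expansive in the $\tilde{G}$-norm, so $\|z-\tilde{u}\|_{\tilde{G}}\le\|z\|_{\tilde{G}}$, and passing between the $\tilde{G}$-norm and the Euclidean norm via $\mu_g\m{I}\preceq\tilde{G}\preceq C_g\m{I}$ gives $\|z-\tilde{u}\|\le\sqrt{\kappa_g}\,\|z\|$ (in the spirit of \Cref{lma2}). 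Assembling these estimates and maximizing over unit $z$ yields the stated bound $\|\Pi_{\m{B}}^G-\Pi_{\m{B}}^{\tilde{G}}\|\le\sqrt{\tfrac{\|G-\tilde{G}\|}{\mu_g}\kappa_g}$. The only care needed in this last stage is the bookkeeping across the three norms ($G$-, $\tilde{G}$-, and Euclidean), all of which are comparable through $\mu_g$ and $C_g$.
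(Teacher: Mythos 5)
Your setup and your ``crux'' identity are both correct, and they are in essence the same mechanism as the paper's proof: the paper plays the strong-convexity inequality of $(v-z)^\top G(v-z)$ at its constrained minimizer $u$ against that of $(v-z)^\top \tilde{G}(v-z)$ at $\tilde{u}$ and sums the two, which for quadratics is exactly your Galerkin-orthogonality bookkeeping. The gap is in the step you call routine. From $\|w\|_G^2=(z-\tilde{u})^\top(G-\tilde{G})w$, Cauchy--Schwarz and $\mu_g\|w\|^2\le\|w\|_G^2$ give, after dividing by $\|w\|$,
\begin{equation*}
\|w\|\;\le\;\frac{\|G-\tilde{G}\|}{\mu_g}\,\|z-\tilde{u}\|\;\le\;\frac{\|G-\tilde{G}\|}{\mu_g}\,\sqrt{\kappa_g}\,\|z\|,
\end{equation*}
which is \emph{not} the claimed bound: the exponent on $\|G-\tilde{G}\|/\mu_g$ is $1$, not $1/2$. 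Your inequality implies the lemma only when $\|G-\tilde{G}\|\le\mu_g$, but the hypotheses allow $\|G-\tilde{G}\|$ as large as $C_g-\mu_g$, and in that regime your bound is strictly weaker than the one to be proved (take $\|G-\tilde{G}\|=\mu_g\kappa_g/2$: your bound exceeds the stated one by a factor of $\sqrt{\kappa_g/2}$). So, as written, the proof does not establish the stated lemma; the linear-in-$\|G-\tilde{G}\|$ bound you derive is true (and is even stronger in the small-perturbation regime where the lemma is later applied), but it is a different statement.

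The fix stays entirely inside your framework: use the symmetric identity as well. Testing the orthogonality conditions the other way around gives $\|w\|_{\tilde{G}}^2=(z-u)^\top(G-\tilde{G})w$. Summing this with your identity and writing $p=z-u$, $q=z-\tilde{u}$ (so $w=q-p$),
\begin{equation*}
\|w\|_G^2+\|w\|_{\tilde{G}}^2=(p+q)^\top(G-\tilde{G})(q-p)=q^\top(G-\tilde{G})\,q-p^\top(G-\tilde{G})\,p,
\end{equation*}
where the symmetry of $G-\tilde{G}$ kills the cross terms and, crucially, the right-hand side no longer contains $w$. Bounding it by $\|G-\tilde{G}\|\bigl(\|p\|^2+\|q\|^2\bigr)\le 2\kappa_g\|G-\tilde{G}\|\,\|z\|^2$ (your \Cref{lma2}-style estimate applies to $p$ just as to $q$), lower-bounding the left-hand side by $2\mu_g\|w\|^2$, and taking square roots yields exactly $\|w\|\le\sqrt{\kappa_g\|G-\tilde{G}\|/\mu_g}\,\|z\|$. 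This summed version is precisely the paper's proof, which phrases the two identities as strong-convexity inequalities for the two quadratic objectives and adds them; the lesson is that keeping the error term quadratic in $p,q$ rather than linear in $w$ is what buys the square root on $\|G-\tilde{G}\|$.
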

\begin{proof}
For a given $x$, let $y=\Pi_{\m{B}}^G(x)$ and $\tilde{y}=\Pi_{\m{B}}^{\tilde{G}}(x)$.
Since the function $(y-x)G(y-x)$ is $\mu_g$-strongly convex in $y$, we have
\begin{equation*}
\mu_g\|y-\tilde{y}\|^2 + (y-x)G(y-x) \;\leq\; (\tilde{y}-x)G(\tilde{y}-x) ~~.
\end{equation*}
Similarly, we have
\begin{equation*}
\mu_g\|y-\tilde{y}\|^2 + (\tilde{y}-x)\tilde{G}(\tilde{y}-x) \;\leq\; (y-x)\tilde{G}(y-x) ~~.
\end{equation*}
Summing up the two inequalities we get,
\begin{eqnarray*}
2 \mu_g\|y-\tilde{y}\|^2 & \leq & (y-x)(\tilde{G}-G)(y-x) + (\tilde{y}-x)(G-\tilde{G})(\tilde{y}-x) \\
& \leq & \|G-\tilde{G}\| \big[ \|y-x\|^2 + \|\tilde{y}-x\|^2 \big] \\
& \leq & \|G-\tilde{G}\| \cdot 2 \kappa_g \|x\|^2
\end{eqnarray*}
Taking the square root on both sides yields the desired bound.
\end{proof}

\begin{lemma}
The solution $y^*$ is $L_y$-Lipschitz continuous and $C_y$-smooth
as a function of $x$, where $L_y=\oo(\sqrt{\kappa_g} \cdot\big(\kappa_g
 + \|\m{B}^\dagger \m{A}\| \big))$ and $C_y=\oo(\sqrt{\frac{S_g}{\mu_g}}
 \cdot \kappa_g \big( \kappa_g \sqrt{\frac{S_g}{\mu_g}}
 + \kappa_g + \|\m{B}^\dagger \m{A} \| \big)^2 )$.
Thus, the hyper-objective is gradient-Lipschitz with a smoothness constant of
$C_F \coloneqq \oo(C_f L_y^2 + L_f C_y)$.
\end{lemma}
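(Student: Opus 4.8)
The plan is to work directly from the closed form for the inner Jacobian established in \Cref{lma1}, namely $\frac{dy^*}{dx} = \Pi^G_{\m{B}}\big(-G^{-1}\nabla_{yx}g(x,y^*)\big) + (\m{I}-\Pi^G_{\m{B}})(\m{B}^\dagger\m{A})$ with $G = \nabla_{yy}g(x,y^*)$; differentiability of $y^*$ itself follows from strong convexity together with the LICQ assumption via the implicit function theorem. Writing $z(x) := -G^{-1}\nabla_{yx}g(x,y^*)$ and the constant $w := \m{B}^\dagger\m{A}$, the formula collapses to $\frac{dy^*}{dx} = w + \Pi^G_{\m{B}}(z-w)$. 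For the Lipschitz constant $L_y$ I would simply take operator norms: $\|z\| \le \|G^{-1}\|\,\|\nabla_{yx}g\| \le \kappa_g$ by strong convexity and $C_g$-smoothness, and $\|\Pi^G_{\m{B}}\| \le 1+\sqrt{\kappa_g}$ by \Cref{lma2}, so $\big\|\frac{dy^*}{dx}\big\| \le \|w\| + (1+\sqrt{\kappa_g})(\kappa_g + \|w\|) = \oo\big(\sqrt{\kappa_g}(\kappa_g + \|\m{B}^\dagger\m{A}\|)\big)$. This is the routine part.

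The substantive step is the smoothness constant $C_y$, obtained by bounding $\big\|\frac{dy^*}{dx}(x) - \frac{dy^*}{dx}(\bar x)\big\|$. Using the simplified form I would split this as
\begin{align}
\tfrac{dy^*}{dx}(x)-\tfrac{dy^*}{dx}(\bar x) = \big(\Pi^G_{\m{B}}-\Pi^{\bar G}_{\m{B}}\big)(z-w) + \Pi^{\bar G}_{\m{B}}(z-\bar z),\notag
\end{align}
and bound the two pieces separately. For the second piece, $\|\Pi^{\bar G}_{\m{B}}\| \le 1+\sqrt{\kappa_g}$ and $z$ is Lipschitz in $x$: since $g$ is $S_g$-Hessian-smooth and $x\mapsto(x,y^*(x))$ is $(1+L_y)$-Lipschitz, both $G$ and $\nabla_{yx}g$ are $\oo(S_g L_y)$-Lipschitz in $x$, and combining with $\|G^{-1}\|\le 1/\mu_g$ shows $z$ is Lipschitz as well. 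For the first piece I would invoke \Cref{lma3}, which controls $\|\Pi^G_{\m{B}}-\Pi^{\bar G}_{\m{B}}\|$ by $\sqrt{\|G-\bar G\|\,\kappa_g/\mu_g}$; feeding in the Lipschitz bound on $G$ produces the $\sqrt{S_g/\mu_g}$ factor, while $\|z-w\| = \oo(\kappa_g + \|\m{B}^\dagger\m{A}\|)$ supplies the remaining dependence. Assembling the two pieces and collecting the $\kappa_g$, $\sqrt{S_g/\mu_g}$, and $\|\m{B}^\dagger\m{A}\|$ factors yields the stated $C_y$.

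The hyper-objective smoothness is then a short consequence. Writing $\nabla F(x) = \nabla_x f(x,y^*) + \big(\frac{dy^*}{dx}\big)^\top \nabla_y f(x,y^*)$, I would bound $\|\nabla F(x)-\nabla F(\bar x)\|$ by the triangle inequality applied to each summand. The direct term $\nabla_x f$ is $\oo(C_f L_y)$-Lipschitz along $x\mapsto(x,y^*(x))$. For the product term I use $\|J^\top u - \bar J^\top \bar u\| \le \|J-\bar J\|\,\|u\| + \|\bar J\|\,\|u-\bar u\|$ with $J = \frac{dy^*}{dx}$ and $u = \nabla_y f$: here $\|J\| \le L_y$, $\|J-\bar J\| \le C_y\|x-\bar x\|$, $\|u\| \le L_f$, and $\|u-\bar u\| \le C_f(1+L_y)\|x-\bar x\|$, giving a constant $\oo(C_y L_f + C_f L_y^2)$. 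Adding the direct term yields $C_F = \oo(C_f L_y^2 + L_f C_y)$.

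The main obstacle is the first piece of the $C_y$ bound: the projection $\Pi^G_{\m{B}}$ enters only through \Cref{lma3}, whose sensitivity scales with $\sqrt{\|G-\bar G\|}$ rather than $\|G-\bar G\|$, so care is needed to track how this square-root sensitivity interacts with the $L_y$-dependent Lipschitz modulus of $G$ and to confirm that the resulting constant collapses to the claimed form (one could alternatively differentiate the explicit formula $\Pi^G_{\m{B}} = \m{I} - G^{-1}\m{B}^\top(\m{B}G^{-1}\m{B}^\top)^{-1}\m{B}$ for a cleaner dependence, at the cost of different constants). Everything else — the norm bounds via \Cref{lma2} and the product-rule assembly for $C_F$ — is mechanical.
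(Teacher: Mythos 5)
Your proposal is correct and takes essentially the same route as the paper's own proof: \cref{lma1} plus \cref{lma2} for $L_y$, and for $C_y$ the identical splitting of $\frac{dy^*}{dx}(x)-\frac{dy^*}{dx}(\bar{x})$ into a projection-perturbation piece controlled by \cref{lma3} and a piece controlled by Hessian smoothness of $g$ and the Lipschitzness of $x\mapsto(x,y^*(x))$ --- the paper merely writes your first piece as two separate terms (\ref{eqn2} and \ref{eqn3}) and leaves the product-rule assembly of $C_F$ entirely implicit, where you spell it out. The square-root obstacle you flag is likewise unresolved in the paper: its bounds on \ref{eqn2} and \ref{eqn3} also scale as $\sqrt{\|x-\bar{x}\|}$ and are folded into the claimed $C_y$ only via an unexplained ``some simplification,'' so your attempt matches the original in both approach and level of rigor.
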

\begin{proof}
From~\cref{lma1} and \cref{lma2}, we immediately get
\begin{equation*}
\left\|\frac{dy^*}{dx}\right\| \;\leq\; (1+\sqrt{\kappa_g})\cdot 
(\frac{C_g}{\mu_g} + \|\m{B}^\dagger \m{A}\|) ~~.
\end{equation*}
For a given pair $(x,\bar{x})$, let us denote the corresponding optimal solutions by $y^*$ and $\tilde{y}$.
In addition, let $G=\nabla_{yy}g(x,y^*)$ and $\tilde{G}=\nabla_{yy}g(\bar{x},\tilde{y})$.
Then, we have
\begin{align}
& \left\|\frac{dy^*(x)}{dx} - \frac{dy^*(\bar{x})}{dx}\right\| \notag\\
\label{eqn1}
& \leq\; \Big\|\Pi_{\m{B}}^G
  \big( G^{-1}\nabla_{yx} g(x,y^*) - \tilde{G}^{-1}\nabla_{yx} g(\bar{x},\tilde{y}) \big) \Big\| \\
\label{eqn2}
&\quad +\; \Big\|\big(\Pi_{\m{B}}^G - \Pi_{\m{B}}^{\tilde{G}}\big) \tilde{G}^{-1}\nabla_{yx} g(\bar{x},\tilde{y})) \Big\| \\
\label{eqn3}
&\quad +\; \Big\|\big(\Pi_{\m{B}}^G - \Pi_{\m{B}}^{\tilde{G}}\big) \m{B}^\dagger \m{A} \Big\|
\end{align}
\Cref{eqn1} can be bounded by $(1+\sqrt{\kappa_g})\cdot \Big(
\frac{S_g}{\mu_g}(1+L_y) + \frac{C_g S_g}{\mu_g^2} (1+L_y)\Big)$.

Since $\|G-\tilde{G}\|\leq S_g(1+L_y)$,
\Cref{eqn2} and \Cref{eqn3} can be bounded by $\sqrt{\frac{S_g(1+L_y)}{\mu_g} \kappa_g}
\cdot \big( \kappa_g + \|\m{B}^\dagger \m{A}\| \big)$. Summing up all terms
and with some simplification, we obtained the desired result.
\end{proof}

\section{Proofs and Technical Details for~\Cref{sec:ghost}}
\begin{theorem}\label{thm:b_1}
For any given $\bar{x}$, let $F$ be the upper-level objective in~\Cref{prob:ori_bilevel_prob} and $\tilde{F}$ be constructed as in~\Cref{prob:ghost}. 
Assume LICQ holds at a KKT point of the lower-level problem. If $F$ is differentiable at $\bar{x}$ and the active set is locally constant around $\bar{x}$,
then $\nabla F(\bar{x})=\nabla \tilde{F}(\bar{x})$.
\end{theorem}
\begin{proof}
It is straightforward to check that the point $y^*$ satisfies the KKT stationarity conditions:
\begin{align}
    \nabla_y \tilde{g}(\bar{x}, y^*) + 0^\top \nabla_y \tilde{h}(\bar{x},y^*) & = 0, \notag\\
    \tilde{h}(\bar{x},y^*) &=  0.
\end{align}
Therefore, $y^*$ is also optimal for the new inner problem, i.e., $\tilde{y}^*=y^*$.
Intuitively, the surrogate inner problem has the same local optimality conditions as the true inner problem at the active constraints, so it yields the same total derivative $dy^*/dx$. Formally, under the LICQ assumption, $\tilde{\m{B}} = \nabla_y \tilde{h}(\bar{x},y^*) = \begin{bmatrix}
    \nabla_y e(\bar{x},y^*)\\
    \nabla_y h_\mathcal{I}(\bar{x}, y^*)
\end{bmatrix}$ has full row rank.
The KKT system for the ghost problem is
\begin{eqnarray}
\begin{pmatrix}
 \nabla_{yy} \tilde{g}(\bar{x}, y^*) & 
 \tilde{\m{B}}^\top \\
 \tilde{\m{B}} & 0
\end{pmatrix}
\begin{pmatrix}
 \frac{d \tilde{y}^*(\bar{x})}{dx} \\
 \frac{d (\tilde{\nu}^*(\bar{x}), \tilde{\lambda}_\mathcal{I}^*(\bar{x}))}{dx}
\end{pmatrix}
 & = &
\begin{pmatrix}
 -\nabla_{yx} \tilde{g}(\bar{x}, y^*) \\
 -\nabla_x \begin{bmatrix}
     e(\bar{x}, y^*)\\
     h_\mathcal{I}(\bar{x}, y^*)
 \end{bmatrix}
\end{pmatrix}.
\end{eqnarray}
Compare this with the KKT system for~\Cref{prob:ori_bilevel_prob}:
\begin{eqnarray}
\begin{pmatrix}
 \nabla_{yy} \tilde{g}(\bar{x}, y^*) & 
 \tilde{\m{B}}^\top \\
 {\diag(\1, \lambda_\mathcal{I}^*)}\tilde{\m{B}} & 0
\end{pmatrix}
\begin{pmatrix}
 \frac{d y^*(\bar{x})}{dx} \\
 \frac{d (\nu^*(\bar{x}), \lambda_\mathcal{I}^*(\bar{x}))}{dx}
\end{pmatrix}
 & = &
\begin{pmatrix}
 -\nabla_{yx} \tilde{g}(\bar{x}, y^*) \\
 -{\diag(\1, \lambda_\mathcal{I}^*)}\nabla_x \begin{bmatrix}
     e(\bar{x}, y^*)\\
     h_\mathcal{I}(\bar{x}, y^*)
 \end{bmatrix}
\end{pmatrix}.
\end{eqnarray}
By simple linear algebra and using the fact that $\diag(\lambda_\mathcal{I}^*)\succ 0$,
we obtain $\frac{d \tilde{y}^*(\bar{x})}{d x} = \frac{d y^*(\bar{x})}{d x}$
and therefore $\nabla F(\bar{x})=\nabla \tilde{F}(\bar{x})$.
\end{proof}

\section{Proofs and Technical Details for~\Cref{sec:finite_diff}}\label{apd:finite_diff}
\begin{lemma}[Finite-difference approximation error]
\label{lem:finite_diff_delta}
Fix point $x$ and let $y^*(x)$ denote the primal solution of the \emph{original} lower-level problem (\Cref{prob:ori_bilevel_prob}) at this $x$.
Assume $\tilde g,\tilde h,f$ are twice continuously differentiable in $(x,y)$.
Let $\tilde y^*(x)$ solve the ghost lower-level problem (\Cref{prob:ghost})
$\tilde y^*(x)\in\arg\min_{y:\ \tilde h(x,y)=0}\ \tilde g(x,y)$,
and let $\tilde\lambda(x)$ denote its associated dual variable.
For $\delta>0$, let $(\tilde y_\delta^*(x),\tilde\lambda_\delta^*(x))$ satisfy the KKT conditions of the perturbed ghost problem (\Cref{prob:sol_lower_perturb}).
Assume the KKT system at $\delta=0$ is strongly regular,
so that $\delta\mapsto(\tilde y_\delta^*(x),\tilde\lambda_\delta^*(x))$ is differentiable at $\delta=0$.
Let the finite-difference estimator defined as in \Cref{eq:finite_diff} :
\begin{equation}
\label{eq:vx_def_anchored}
v_x \coloneqq\frac{1}{\delta} \Big(\nabla_x[\tilde{g}(x, y_\delta^*) + \langle \lambda_\delta^*, \tilde{h}(x, y^*)\rangle] - \nabla_x[\tilde{g}(x, y^*) ]\Big).
\end{equation}
Then, under \Cref{ass1,ass2}, there exists $C>0$ such that for all sufficiently small $\delta$,
\begin{equation}
\label{eq:finite_diff_Odelta}
\left\| v_x- \bigg( \frac{dy^*(x)}{dx} \bigg)^\top \nabla_y f\big(x, y^*(x)\big) \right\|
\le C\,\delta.
\end{equation}
\end{lemma}
\begin{proof}
Since $\nabla_x\langle \lambda, \tilde h(x,\tilde y^*)\rangle=\nabla_x\tilde h(x,\tilde y^*)^\top \lambda$, we rewrite
\begin{align}
v_x(\delta)
=
\frac{1}{\delta}\Big(\nabla_x\tilde g(x,\tilde y_\delta^*)-\nabla_x\tilde g(x,\tilde y^*)\Big)
+\nabla_x\tilde h(x,\tilde y^*)^\top \frac{\tilde\lambda_\delta^*-\tilde\lambda}{\delta}.
\end{align}

Using the Taylor expansion of $\nabla_x\tilde g(x,\cdot)$ around $\tilde y^*$ and get
\begin{align}
\tilde y_\delta^*(x)
&=\tilde y^*(x)+\delta \frac{d\tilde y_\delta^*(x)}{d\delta}\Big|_{\delta=0}+O(\delta^2),\qquad
\tilde\lambda_\delta^*(x)
=\tilde\lambda(x)+\delta \frac{d\tilde\lambda_\delta^*(x)}{d\delta}\Big|_{\delta=0}+O(\delta^2).
\end{align}

Therefore,
\begin{align}
    \tilde y_\delta^*-\tilde y^*=\delta \frac{d\tilde y_\delta^*(x)}{d\delta}\Big|_{\delta=0}+O(\delta^2),
\end{align}
and we can obtain
\begin{align}
\frac{1}{\delta}\Big(\nabla_x\tilde g(x,\tilde y_\delta^*)-\nabla_x\tilde g(x,\tilde y^*)\Big)
=
\nabla_{xy}^2\tilde g(x,\tilde y^*)\,\frac{d\tilde y_\delta^*(x)}{d\delta}\Big|_{\delta=0} + O(\delta).
\end{align}
Similarly,
\begin{align}
\tilde\lambda_\delta^*-\tilde\lambda=\delta \frac{d \tilde\lambda_\delta^*(x)}{d\delta}\Big|_{\delta=0}+O(\delta^2)
\end{align}
implies
\begin{align}
\nabla_x\tilde h(x,\tilde y^*)^\top \frac{\tilde\lambda_\delta^*-\tilde\lambda}{\delta}
=
\nabla_x\tilde h(x,\tilde y^*)^\top \frac{d \tilde\lambda_\delta^*(x)}{d\delta}\Big|_{\delta=0} + O(\delta).
\end{align}
Hence
\begin{equation}
\label{eq:vx_limit_form}
v_x(\delta)
=
\nabla_{xy}^2\tilde g(x,\tilde y^*)\,\frac{d\tilde y_\delta^*(x)}{d\delta}\Big|_{\delta=0}
+\nabla_x\tilde h(x,\tilde y^*)^\top \frac{d \tilde\lambda_\delta^*(x)}{d\delta}\Big|_{\delta=0}
+O(\delta).
\end{equation}

Next, differentiating the perturbed ghost KKT system with respect to $\delta$ at $\delta=0$ yields the linearized KKT system
\begin{equation}
\label{eq:lin_kkt_delta}
\underbrace{\begin{bmatrix}
\nabla_{yy}^2 \tilde g(x,\tilde y^*) & \nabla_y\tilde h(x,\tilde y^*)^\top\\
\nabla_y\tilde h(x,\tilde y^*) & 0
\end{bmatrix}}_{=:K(x)}
\begin{bmatrix}
\frac{d\tilde y_\delta^*(x)}{d\delta}\big|_{\delta=0}\\[2pt]
\frac{d \tilde\lambda_\delta^*(x)}{d\delta}\big|_{\delta=0}
\end{bmatrix}
=
-\begin{bmatrix}
\nabla_y f(x,\tilde y^*)\\ 0
\end{bmatrix},
\end{equation}
where we used $\tilde\lambda(x)=0$.
On the other hand, differentiating the unperturbed ghost KKT system with respect to $x$ gives
\begin{equation}
\label{eq:lin_kkt_x}
K(x)
\begin{bmatrix}
\frac{d\tilde y^*(x)}{dx}\\[2pt]
\frac{d\tilde\lambda(x)}{dx}
\end{bmatrix}
=
-\begin{bmatrix}
\nabla_{xy}^2\tilde g(x,\tilde y^*)\\
\nabla_x\tilde h(x,\tilde y^*)
\end{bmatrix}.
\end{equation}
Since $K(x)$ is symmetric, we have the adjoint identity
\begin{equation}
\label{eq:adjoint_identity}
\bigg( \frac{d\tilde y^*(x)}{dx} \bigg)^\top \nabla_y f(x,\tilde y^*)
=
-\Big[\nabla_{xy}^2\tilde g(x,\tilde y^*)\ \ \nabla_x\tilde h(x,\tilde y^*)^\top\Big]\,
K(x)^{-1}
\begin{bmatrix}
\nabla_y f(x,\tilde y^*)\\ 0
\end{bmatrix}.
\end{equation}
Combining \Cref{eq:lin_kkt_delta} with \Cref{eq:vx_limit_form} gives
\begin{align}
v_x(\delta)
=
-\Big[\nabla_{xy}^2\tilde g(x,\tilde y^*)\ \ \nabla_x\tilde h(x,\tilde y^*)^\top\Big]\,
K(x)^{-1}
\begin{bmatrix}
\nabla_y f(x,\tilde y^*)\\ 0
\end{bmatrix}
+O(\delta).
\end{align}
Comparing with \Cref{eq:adjoint_identity} yields
\begin{align}
\left\| v_x(\delta) - \bigg( \frac{d\tilde y^*(x)}{dx} \bigg)^\top \nabla_y f\big(x, \tilde y^*(x)\big) \right\|
\le C\,\delta.
\end{align}
Finally, using the identities from \Cref{thm:b_1} that 
$\tilde y^*(x)=y^*(x)$ and $\frac{d\tilde y^*(x)}{dx}=\frac{dy^*(x)}{dx}$ at the point $x$,
we obtain \Cref{eq:finite_diff_Odelta}.
\end{proof}


\begin{theorem}
Under \Cref{ass1,ass2}, given any accuracy parameter $\epsilon>0$, ~\Cref{alg:inexact-gradient-oracle} outputs $\tilde{\nabla}F$ such that $\|\tilde{\nabla}F(x)-\nabla F(x)\|\leq\epsilon$ within $\tilde{\oo}(1)$ gradient oracle evaluations
\end{theorem}

\begin{proof}
Let $\tilde{\delta}=\oo(\epsilon^2)$.
Since both~\cref{prob:ll} and~\cref{prob:sol_lower_perturb} are well-conditioned,
we can obtain $\tilde{\delta}$-close primal solutions $(\tilde{y},\tilde{y}_\delta)$ in $\tilde{\oo}(1)$ time.
Under the assumption of active-set identification, the reduction is exact and
$\tilde{y}$ is also a $\tilde{\delta}$-close solution to the inner problem of~\Cref{prob:ghost}.
By Lemma~A.7 in~\cite{kpw+24}, $(\tilde{\lambda},\tilde{\lambda}_\delta)=\big(
(\tilde{\m{B}}^\top)^\dagger \nabla_y\tilde{g}(x,\tilde{y}),
(\tilde{\m{B}}^\top)^\dagger \nabla_y\tilde{g}(x,\tilde{y}_\delta) \big)$
are $\oo(\tilde{\delta})$-close dual solutions.

Next, we follow the proof of Theorem 3.1 in~\cite{kpw+24}.
Let $\tilde{v}_x$ be constructed as in \cref{eq:finite_diff}, but with exact solutions replaced by the corresponding approximate ones.
Define $\m{A}=\begin{bmatrix}
    \nabla_x e(x,y)\\
    \nabla_x h(x,y)
\end{bmatrix}$ and $\tilde{\m{A}}=\begin{bmatrix}
    \nabla_x e(x,y)\\
    \nabla_x h_\mathcal{I}(x,y)
\end{bmatrix}$.
We have
\begin{align*}
\|v_x - \tilde{v}_x\|
&\leq
\frac{1}{\delta} \Big(
\|\nabla_x \tilde{g}(x, y^\ast_\delta) - \nabla_x \tilde{g}(x, \tilde{y}_\delta)\|
+ \|\nabla_x \tilde{g}(x, \tilde{y}^*) - \nabla_x \tilde{g}(x, y^*)\|
+ \|\tilde{A}^\top(\lambda^*_\delta - \tilde{\lambda}_\delta)\|
+ \|\tilde{A}^\top(\tilde{\lambda} - \lambda^*)\|
\Big) \\
&\leq \frac{2}{\delta}\,[C_g + \|A\|]\,\tilde{\delta}
\;\leq\; O(\delta).
\end{align*}
From Lemma 3.2 in~\cite{kpw+24}, we also have $\left\|v_x-\left(\frac{d y^{*}(x)}{d x}\right)^\top\nabla_{y}f(x,y^{*})\right\|\leq \oo(\delta)$.
Therefore,
\begin{align*}
\|\tilde{\nabla} F(x) - \nabla F(x)\|
&\leq \|\nabla_x f(x, \tilde{y}) - \nabla_x f(x, y^*)\|
+ \|v_x - \tilde{v}_x\| + 
\left\|
v_x - \left(\frac{d y^\ast(x)}{dx}\right)^\top \nabla_y f(x, y^*)
\right\|
&\leq O(\delta).
\end{align*}
\end{proof}

\section{Proofs and Technical Details for General Convex Constraints}\label{app:gen}
\subsection{Hardness for General Convex Constraints}\label{sec:main_general_constraint}
In the previous discussion, we studied the linear inequality constraints. Here, we are interested in general convex constraints, where the upper-level landscape can be drastically more complicated than the lower-level problem suggests. The following example demonstrates that even with a single quadratic (convex) constraint, the resulting value function may fail to be smooth or Lipschitz near the optimum.
Consider the bilevel problem with convex constraints:
\begin{equation}
f(x, y) = y, \quad g(x, y) = (y - 2)^2, \quad h(x, y) = x^2 + y^2 - 1.
\end{equation}
Although the lower-level problem has only a simple quadratic constraint, the overall problem is equivalent to minimizing
$F(x) = \sqrt{1 - x^2}$, which is neither convex nor smooth and is not even Lipschitz near the optimum $x^* = 1$.
In other words, while the bilevel formulation has seemingly simple convex constraints, the overall optimization is nontrivial and has an unbounded gradient norm.
This example motivates us to introduce additional assumptions to ensure
bounded solutions and hypergradients in the general convex constraint setting.


\begin{assumption} \label[assumption]{ass3}
    We additionally assume 
    \begin{enumerate}
        \item We have access to an approximate oracle that returns a solution $(\tilde{y}, \tilde{\lambda})$
  such that $\|\tilde{y}-y^*\| + \|\tilde{\lambda}-\lambda^*\| \leq \epsilon$, and shares the same active constraints as the optimal one.
        \item The constraint $h$ is $L_h$-Lipschitz continuous, $C_h$-smooth, and $S_h$-Hessian smooth in $(x,y)$. That is, $\|\nabla h(x, y)\|\leq L_h \;,\; \forall i: \|\nabla^2 h_i(x,y)\|\leq C_h
  \;,\; \|\nabla^2 h_i(x,y) - \nabla^2 h_i(\bar{x}, \bar{y})\| \leq S_h \|(x,y) - (\bar{x},\bar{y})\|$. 
        \item For every $x$, the LL primal and dual solution $(y^*,\lambda^*)$ and its corresponding active set $\mathcal{I}$ satisfy $\|\nabla_y h_\mathcal{I} (x, y^*)^\dagger\| \leq C_B\;,\;\|y^*\|\leq R_y \;,\; \|\lambda^*\|_1\leq R_\lambda$.
    \end{enumerate}
\end{assumption}

We make the first assumption because, for general convex constraints, we are not aware of
any algorithm that achieves linear convergence for the {\em dual} solution.
To draw an analogy with the linear-constrained case, we replace a concrete algorithm with an oracle and focus on the oracle complexity rather than the gradient complexity.
The second assumption is mild and analogous to those for the objective.
Although the third assumption is somewhat technical, similar assumptions have
been commonly used in bilevel literature, even for linear constraints.
With these assumptions in place, we are ready to extend the guarantees
from the linear constraint setting to the broader class of well-behaved convex constraints.




\subsection{Proofs and Technical Details}
For general convex constraints, consider the {\em ghost} lower-level problem
expanded at approximate primal and dual solutions $(\tilde{y},\tilde{\lambda})$.
Let $\m{B}=\nabla_y h_{\mathcal{I}}(x,y^*)$ and $b=\nabla_y h_{\mathcal{I}}(x,y^*) y^*$.
Similarly, define $\tilde{\m{B}}=\nabla_y h_{\mathcal{I}}(x,\tilde{y})$ and 
$\tilde{b}=\nabla_y h_{\mathcal{I}}(x,\tilde{y}) \tilde{y}$.
We have $\|\m{B}-\tilde{\m{B}}\|\leq C_h \|y^*-\tilde{y}\|$, and $\|b-\tilde{b}\|
\leq \|\m{B}-\tilde{\m{B}}\|\|y^*\| + \|\tilde{\m{B}}\|\|y^*-\tilde{y}\|
\leq (C_h R_y + L_h) \|y^*-\tilde{y}\|$.
We next show that $y^*=\argmin_{\m{B}y=b} g(x,y)+{\lambda^*}^\top h(x,y)$ and
$\dbtilde{y}=\argmin_{\tilde{\m{B}}y=\tilde{b}} g(x,y)+\tilde{\lambda}^\top h(x,y)$
are close.
Let us introduce $y^+ = \argmin_{\tilde{\m{B}}y=\tilde{b}} g(x,y)+{\lambda^*}^\top h(x,y)$.
\begin{lemma} \label{lma4}
Assume that $\|y^*-\tilde{y}\|\leq \min\{\epsilon, \frac{C_B}{2 C_h}\}$, we have
$\|y^*-y^+\|\leq \oo(\epsilon)$.
\end{lemma}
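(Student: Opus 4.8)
The plan is to exploit that, once the active multiplier $\lambda^*$ is frozen, both $y^*$ and $y^+$ are minimizers of the \emph{same} strongly convex surrogate $\phi(y) := g(x,y)+(\lambda^*)^\top h(x,y)$, and they differ only through which linear constraint is imposed ($\m{B}y=b$ versus $\tilde{\m{B}}y=\tilde b$). The first step is to observe that $y^*$ is in fact the \emph{unconstrained} minimizer of $\phi$: KKT stationarity of the lower-level problem gives $\nabla_y g(x,y^*)+\sum_i \lambda_i^*\nabla_y h_i(x,y^*)=0$, i.e. $\nabla_y\phi(y^*)=0$. Moreover $\phi$ is $\mu_g$-strongly convex (since $\lambda^*\geq 0$ and each $h_i$ is convex, the weighted term only adds curvature) and $C_\phi$-smooth with $C_\phi := C_g+R_\lambda C_h$, using $\|\lambda^*\|_1\leq R_\lambda$.

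Next I would compare $y^+$, the minimizer of $\phi$ over $\tilde{\mathcal C}:=\{y:\tilde{\m{B}}y=\tilde b\}$, against the Euclidean projection $y_p$ of $y^*$ onto $\tilde{\mathcal C}$. A smoothness/strong-convexity sandwich does the work: since $y_p\in\tilde{\mathcal C}$ and $y^+$ minimizes $\phi$ over $\tilde{\mathcal C}$, while $\nabla_y\phi(y^*)=0$, we get $\tfrac{\mu_g}{2}\|y^+-y^*\|^2 \leq \phi(y^+)-\phi(y^*) \leq \phi(y_p)-\phi(y^*) \leq \tfrac{C_\phi}{2}\|y_p-y^*\|^2$, hence $\|y^+-y^*\|\leq \sqrt{C_\phi/\mu_g}\,\|y_p-y^*\|$. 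This reduces the lemma to bounding the distance from the unconstrained optimum $y^*$ to the perturbed feasible set.

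For that distance I would write the residual of $y^*$ against the new constraint using $\m{B}y^*=b$: $\tilde{\m{B}}y^*-\tilde b = (\tilde{\m{B}}-\m{B})y^* + (b-\tilde b)$, so that $\|\tilde{\m{B}}y^*-\tilde b\| \leq \|\m{B}-\tilde{\m{B}}\|\,\|y^*\| + \|b-\tilde b\|$. Plugging in the already-established estimates $\|\m{B}-\tilde{\m{B}}\|\leq C_h\|y^*-\tilde y\|$, $\|b-\tilde b\|\leq (C_h R_y+L_h)\|y^*-\tilde y\|$ and $\|y^*\|\leq R_y$ yields a residual of order $(2C_h R_y+L_h)\,\|y^*-\tilde y\| \leq (2C_h R_y+L_h)\epsilon$. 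Since $y_p$ is the projection, $\|y_p-y^*\| = \mathrm{dist}(y^*,\tilde{\mathcal C}) \leq \|\tilde{\m{B}}^\dagger\|\,\|\tilde{\m{B}}y^*-\tilde b\|$; combining the two bounds then gives $\|y^*-y^+\| = \oo(\epsilon)$ as claimed.

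The step I expect to be the main obstacle is controlling $\|\tilde{\m{B}}^\dagger\|$: a priori only $\|\m{B}^\dagger\|\leq C_B$ is known, and I must argue that the perturbed constraint matrix $\tilde{\m{B}}$ remains full row rank and comparably well conditioned. This is exactly where the quantitative hypothesis $\|y^*-\tilde y\|\leq C_B/(2C_h)$ enters: it forces $\|\m{B}-\tilde{\m{B}}\|$ to be small relative to the smallest singular value of $\m{B}$, so the standard pseudoinverse perturbation bound yields $\|\tilde{\m{B}}^\dagger\|\leq 2C_B$. The only other point requiring care is the identity $\nabla_y\phi(y^*)=0$; if equality constraints were also present one would instead carry the $(\nu^*)^\top e$ term and project the gradient onto $\mathrm{range}(\m{B}^\top)$, but for the inequality-only setting of this section the stationarity identity holds verbatim and the sandwich argument goes through unchanged.
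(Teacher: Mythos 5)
Your proof is correct and follows essentially the same route as the paper's: you use KKT stationarity to identify $y^*$ as the unconstrained minimizer of $\phi(y)=g(x,y)+(\lambda^*)^\top h(x,y)$, run the same strong-convexity/smoothness sandwich against the Euclidean projection of $y^*$ onto $\{\tilde{\m{B}}y=\tilde b\}$, bound the constraint residual via $\m{B}y^*=b$ and the perturbation estimates, and invoke Weyl's inequality with the hypothesis $\|y^*-\tilde y\|\leq C_B/(2C_h)$ to keep $\tilde{\m{B}}^\dagger$ bounded. The only discrepancy (your $\|\tilde{\m{B}}^\dagger\|\leq 2C_B$ versus the paper's $2/C_B$) stems from an ambiguity in the paper itself about whether $C_B$ bounds $\|\m{B}^\dagger\|$ or the smallest singular value of $\m{B}$, and does not affect the $\oo(\epsilon)$ conclusion.
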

\begin{proof}
By Weyl's inequality for singular values, $|\sigma_1(\m{B})-\sigma_1(\tilde{\m{B}})|
\leq C_h \|y^*-\tilde{y}\| \leq C_B/2$.
Therefore, $\|\tilde{\m{B}}^\dagger\|=1/\sigma_1(\tilde{\m{B}}) \leq 2/C_B$.
Denote by $y^*_{\tilde{\m{B}}}$ the Euclidean projection of $y^*$ onto $\tilde{\m{B}}y=\tilde{b}$.
By simple linear algebra, $y^*_{\tilde{\m{B}}}=y^*-\tilde{\m{B}}^\dagger(\tilde{\m{B}}y^*-\tilde{b})$
and thus $\|y^*-y^*_{\tilde{\m{B}}}\|\leq \|\tilde{\m{B}}^\dagger\| \|\tilde{\m{B}}y^*-\tilde{b}\|$.
Substituting the bounds, we have $\|\tilde{\m{B}}^\dagger\|\leq 2/C_B$ and $\|\tilde{\m{B}}y^*-\tilde{b}\|
=\|(\tilde{\m{B}}-\m{B})y^*-(\tilde{b}-b)\|
\leq C_h R_y \epsilon + (C_h R_y + L_h) \epsilon$. In short, we obtain $\|y^*-y^*_{\tilde{\m{B}}}\|\leq \oo(\epsilon)$.

Note that $y^*$ is in fact the global optimum of $g(x,y)+{\lambda^*}^\top h(x,y)$. Hence,
\begin{eqnarray*}
g(x,y^+)+{\lambda^*}^\top h(x,y^+)
& \leq & g(x,y^*_{\tilde{\m{B}}}) + {\lambda^*}^\top h(x,y^*_{\tilde{\m{B}}}) \\
& \leq & g(x,y^*) + {\lambda^*}^\top h(x,y^*) + \frac{C_g+C_h R_\lambda}{2} \|y^*-y^*_{\tilde{\m{B}}}\|^2 \\
& \leq & g(x,y^+) + {\lambda^*}^\top h(x,y^+) - \frac{\mu_g}{2}\|y^*-y^+\|^2
  + \frac{C_g+C_h R_\lambda}{2} \|y^*-y^*_{\tilde{\m{B}}}\|^2
\end{eqnarray*}
Therefore, we get $\|y^*-y^+\|\leq \sqrt{\frac{C_g+C_h R_\lambda}{\mu_g}}
\|y^*-y^*_{\tilde{\m{B}}}\| \leq \oo(\epsilon)$.
\end{proof}

\begin{lemma} \label{lma5}
Assume that $\|\lambda^*-\tilde{\lambda}\|\leq \epsilon$, we have 
$\|y^+-\dbtilde{y}\|\leq \oo(\epsilon)$.
\end{lemma}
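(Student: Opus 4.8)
The plan is to exploit a structural feature that distinguishes \Cref{lma5} from \Cref{lma4}: here $y^+$ and $\dbtilde{y}$ are minimizers over the \emph{same} affine feasible set $C=\{y:\tilde{\m{B}}y=\tilde{b}\}$, and differ \emph{only} in their objectives through the multiplier. Writing $\phi(y):=g(x,y)+(\lambda^*)^\top h(x,y)$ and $\psi(y):=g(x,y)+\tilde{\lambda}^\top h(x,y)$, we have $y^+=\argmin_{y\in C}\phi(y)$ and $\dbtilde{y}=\argmin_{y\in C}\psi(y)$. Since $g(x,\cdot)$ is $\mu_g$-strongly convex, $h$ is convex, and both $\lambda^*,\tilde\lambda\ge 0$, the terms $(\lambda^*)^\top h$ and $\tilde\lambda^\top h$ are convex, so both $\phi$ and $\psi$ are $\mu_g$-strongly convex. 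This is exactly the configuration for a standard two-sided sensitivity argument, which I would carry out as follows.

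First I would write down the two strong-convexity inequalities anchored at the two minimizers. By $\mu_g$-strong convexity of $\phi$,
\begin{equation*}
\phi(\dbtilde{y}) \;\ge\; \phi(y^+) + \langle \nabla\phi(y^+),\,\dbtilde{y}-y^+\rangle + \tfrac{\mu_g}{2}\|\dbtilde{y}-y^+\|^2,
\end{equation*}
and symmetrically for $\psi$ at $\dbtilde{y}$. The key simplification is that both points lie in the affine set $C$, so $\dbtilde{y}-y^+\in\ker\tilde{\m{B}}$; the first-order optimality condition for minimizing over $C$ forces $\nabla\phi(y^+)\perp\ker\tilde{\m{B}}$ and likewise $\nabla\psi(\dbtilde{y})\perp\ker\tilde{\m{B}}$, so both inner-product terms vanish. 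Adding the two resulting inequalities and cancelling $\phi(y^+),\psi(\dbtilde{y})$ against their counterparts yields
\begin{equation*}
\mu_g\|y^+-\dbtilde{y}\|^2 \;\le\; \big[\phi(\dbtilde{y})-\psi(\dbtilde{y})\big]-\big[\phi(y^+)-\psi(y^+)\big].
\end{equation*}

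To finish, I would observe that $\phi(y)-\psi(y)=(\lambda^*-\tilde{\lambda})^\top h(x,y)$, so the right-hand side equals $(\lambda^*-\tilde{\lambda})^\top\big(h(x,\dbtilde{y})-h(x,y^+)\big)$. Applying Cauchy--Schwarz together with the $L_h$-Lipschitz continuity of $h$ from \Cref{ass3} (i.e.\ $\|\nabla h\|\le L_h$, giving $\|h(x,\dbtilde{y})-h(x,y^+)\|\le L_h\|\dbtilde{y}-y^+\|$), the bound becomes $\mu_g\|y^+-\dbtilde{y}\|^2\le L_h\|\lambda^*-\tilde\lambda\|\,\|y^+-\dbtilde{y}\|$. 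Dividing through by $\|y^+-\dbtilde{y}\|$ (the case $y^+=\dbtilde{y}$ being trivial) and using $\|\lambda^*-\tilde\lambda\|\le\epsilon$ gives $\|y^+-\dbtilde{y}\|\le (L_h/\mu_g)\,\epsilon=\oo(\epsilon)$, as claimed. The only real subtlety, and the step I would flag as most error-prone, is justifying the vanishing of the two linear terms: this hinges precisely on $y^+$ and $\dbtilde{y}$ sharing the \emph{same} constraint set (unlike \Cref{lma4}, where the feasible sets $\m{B}y=b$ and $\tilde{\m{B}}y=\tilde{b}$ differ and one must instead project onto the perturbed set). Everything else is routine, and one should note the argument only uses convexity and Lipschitzness of $h$, never its smoothness.
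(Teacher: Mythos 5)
Your proof is correct and takes essentially the same route as the paper's: the paper's two displayed inequalities ("by the optimality of $y^+$" and its counterpart for $\dbtilde{y}$) are exactly your strong-convexity inequalities after the linear terms vanish, and the subsequent summation, Cauchy--Schwarz, and $L_h$-Lipschitz steps coincide, yielding the same constant $L_h\epsilon/\mu_g$. The only difference is that you make explicit the justification the paper leaves implicit, namely that the gradient terms drop because $y^+ - \dbtilde{y}$ lies in the kernel of $\tilde{\m{B}}$ while both gradients are orthogonal to it at the respective minimizers.
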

\begin{proof}
By the optimality of $y^+$,
\begin{equation*}
g(x,y^+)+{\lambda^*}^\top h(x,y^+) + \frac{\mu_g}{2}\|y^+-\dbtilde{y}\|^2
\leq g(x,\dbtilde{y})+{\lambda^*}^\top h(x,\dbtilde{y}) ~~.
\end{equation*}
Similarly, we have
\begin{equation*}
g(x,\dbtilde{y}) + \tilde{\lambda}^\top h(x,\dbtilde{y}) + \frac{\mu_g}{2}\|y^+-\dbtilde{y}\|^2
\leq g(x,y^+)+\tilde{\lambda}^\top h(x,y^+) ~~.
\end{equation*}
Summing up the two inequalities we get,
\begin{equation*}
\mu_g \|y^+-\dbtilde{y}\|^2 \leq (\tilde{\lambda}-\lambda^*)^\top (h(x,y^+)-h(x,\dbtilde{y}))
\leq \epsilon L_h \|y^+-\dbtilde{y}\| ~~.
\end{equation*}
Therefore, $\|y^+-\dbtilde{y}\| \leq \frac{L_h \epsilon}{\mu_g}$.
\end{proof}

Combining \cref{lma4} and \cref{lma5}, we get $\|y^*-\dbtilde{y}\|\leq \oo(\epsilon)$.
With the solutions being close, we next show that $\left\| \frac{dy^*}{dx}
- \frac{d\dbtilde{y}}{dx}\right\|\leq \oo(\epsilon)$.
We slightly abuse the notation to denote $G=\nabla_{yy}g(x,y^*)+\lambda^\top
\nabla_{yy} h(x,y^*)$ and $\tilde{G}=\nabla_{yy}g(x,\dbtilde{y})+\tilde{\lambda}^\top
\nabla_{yy} h(x,\dbtilde{y})$.
From \cref{lma1}, we have
\begin{eqnarray}
\label{eqn4}
 \frac{dy^*}{dx} & = &
  \Pi^G_{\m{B}} \Big(-G^{-1}
  \big( \nabla_{yx} g(x, y^*) + {\lambda^*}^\top \nabla_{yx} h(x, y^*) \big)\Big)
 + (\m{I} - \Pi^G_{\m{B}}) (\m{B}^\dagger \m{A}) \\
\label{eqn5}
 \frac{d\dbtilde{y}}{dx} & = &
  \Pi^{\tilde{G}}_{\tilde{\m{B}}} \Big(-\tilde{G}^{-1}
  \big( \nabla_{yx} g(x, \dbtilde{y}) + \tilde{\lambda}^\top \nabla_{yx} h(x, \dbtilde{y}) \big)\Big)
 + (\m{I} - \Pi^{\tilde{G}}_{\tilde{\m{B}}}) (\tilde{\m{B}}^\dagger \tilde{A}) ~~.
\end{eqnarray}

\begin{lemma}
Given $\mu_g\m{I}\preceq G\preceq C_g\m{I} \,,\, \|\m{B}-\tilde{\m{B}}\|\leq \epsilon
\,,\, \max\{ \|\m{B}\|, \|\tilde{\m{B}}\|\} \leq C_h$
and $\max\{ \|\m{B}^\dagger\|, \|\tilde{\m{B}}^\dagger\|\} \leq C_B$,
we have $\|\Pi^G_{\m{B}}-\Pi^G_{\tilde{\m{B}}}\|\leq \oo(\epsilon)$.
\end{lemma}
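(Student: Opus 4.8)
The plan is to derive a closed-form expression for the oblique projector $\Pi^G_{\m{B}}$ and then treat it as a matrix-valued function of $\m{B}$, bounding $\|\Pi^G_{\m{B}}-\Pi^G_{\tilde{\m{B}}}\|$ via the standard perturbation calculus for products and matrix inverses. Writing the KKT conditions of the defining problem $\Pi^G_{\m{B}}(z)=\argmin_{\m{B}y=0}(y-z)^\top G(y-z)$ (a Lagrange multiplier for $\m{B}y=0$ against the quadratic $(y-z)^\top G(y-z)$) gives the explicit formula $\Pi^G_{\m{B}} = \m{I} - G^{-1}\m{B}^\top(\m{B}G^{-1}\m{B}^\top)^{-1}\m{B}$, and likewise for $\tilde{\m{B}}$. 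Note that, unlike \Cref{lma3}, here the metric $G$ is \emph{fixed} and only the subspace $\ker(\m{B})$ moves. Thus the difference is a difference of two such rational matrix expressions, and the whole task reduces to showing that the map $\m{B}\mapsto G^{-1}\m{B}^\top(\m{B}G^{-1}\m{B}^\top)^{-1}\m{B}$ is Lipschitz on the region carved out by the hypotheses.

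First I would record the uniform bounds implied by the assumptions. From $\mu_g\m{I}\preceq G\preceq C_g\m{I}$ we get $\|G^{-1}\|\leq 1/\mu_g$ and $\|G\|\leq C_g$. The bound $\|\m{B}^\dagger\|\leq C_B$ means $\sigma_{\min}(\m{B})\geq 1/C_B$, so $\m{B}G^{-1}\m{B}^\top \succeq \tfrac{1}{C_g}\m{B}\m{B}^\top \succeq \tfrac{1}{C_g C_B^2}\m{I}$, giving $\|(\m{B}G^{-1}\m{B}^\top)^{-1}\|\leq C_g C_B^2$; combined with $\m{B}G^{-1}\m{B}^\top \preceq \tfrac{C_h^2}{\mu_g}\m{I}$ from $\|\m{B}\|\leq C_h$, every factor is two-sided controlled. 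The identical bounds hold for $\tilde{\m{B}}$. These uniform estimates are what keep the Lipschitz constant finite.

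Next I would propagate $\|\m{B}-\tilde{\m{B}}\|\leq\epsilon$ through the expression factor by factor. The inner Gram matrix obeys $\|\m{B}G^{-1}\m{B}^\top-\tilde{\m{B}}G^{-1}\tilde{\m{B}}^\top\|\leq \oo(\epsilon)$, by adding and subtracting $\m{B}G^{-1}\tilde{\m{B}}^\top$ and invoking $\|G^{-1}\|\leq 1/\mu_g$ together with $\|\m{B}\|,\|\tilde{\m{B}}\|\leq C_h$. Feeding this into the resolvent identity $X^{-1}-\tilde{X}^{-1}=X^{-1}(\tilde{X}-X)\tilde{X}^{-1}$, with $X=\m{B}G^{-1}\m{B}^\top$ and the uniform bound $\|X^{-1}\|,\|\tilde{X}^{-1}\|\leq C_g C_B^2$, yields $\|(\m{B}G^{-1}\m{B}^\top)^{-1}-(\tilde{\m{B}}G^{-1}\tilde{\m{B}}^\top)^{-1}\|\leq \oo(\epsilon)$. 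A telescoping expansion of the triple product $G^{-1}\m{B}^\top\,(\cdot)^{-1}\,\m{B}$ — replacing one factor at a time and bounding the untouched factors by the constants above — then gives $\|\Pi^G_{\m{B}}-\Pi^G_{\tilde{\m{B}}}\|\leq\oo(\epsilon)$, with hidden constant polynomial in $\mu_g^{-1}, C_g, C_h, C_B$.

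The main obstacle is ensuring the inner inverse $(\m{B}G^{-1}\m{B}^\top)^{-1}$ stays uniformly bounded for both $\m{B}$ and $\tilde{\m{B}}$, since a vanishing smallest singular value would blow up the Lipschitz constant. This is exactly where the hypothesis $\max\{\|\m{B}^\dagger\|,\|\tilde{\m{B}}^\dagger\|\}\leq C_B$ is essential, and I would lean on it (via $\sigma_{\min}\geq 1/C_B$) rather than on any smallness of $\epsilon$; if only one of the two bounds were available, Weyl's inequality transfers it to the other exactly as in the proof of \Cref{lma4}. Once that boundedness is secured, the remaining steps are routine product-rule and resolvent-identity estimates, so the only real work is bookkeeping the constants.
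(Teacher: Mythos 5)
Your proof is correct, but it takes a genuinely different route from the paper's. The paper's proof whitens the metric: it writes $\Pi^G_{\m{B}} = G^{-\frac12}\,\Pi_{\m{B}G^{-\frac12}}\,G^{\frac12}$, which costs a factor $\sqrt{C_g/\mu_g}$ and reduces the claim to perturbing a \emph{Euclidean} orthogonal projector; it then expresses that projector algebraically as $(\m{B}G^{-\frac12})^\dagger\m{B}G^{-\frac12}$ and invokes a cited perturbation bound for pseudo-inverses (Stewart), $\|M^\dagger-\tilde{M}^\dagger\|\leq\sqrt{2}\,\|M^\dagger\|\|\tilde{M}^\dagger\|\|M-\tilde{M}\|$, to conclude. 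You instead solve the KKT system of the defining quadratic program to obtain the closed form $\Pi^G_{\m{B}} = \m{I} - G^{-1}\m{B}^\top(\m{B}G^{-1}\m{B}^\top)^{-1}\m{B}$, bound the Gram inverse uniformly via $\sigma_{\min}(\m{B})\geq 1/C_B$, and finish with telescoping plus the resolvent identity. What your route buys is self-containedness: every step is elementary product-rule and inverse-perturbation bookkeeping, with no appeal to external pseudo-inverse perturbation theory, and it handles the oblique (non-symmetric) projector directly without the whitening step. What it costs is the explicit requirement that $\m{B},\tilde{\m{B}}$ have full row rank so that $(\m{B}G^{-1}\m{B}^\top)^{-1}$ exists; but this is harmless here, since LICQ and the hypothesis $\max\{\|\m{B}^\dagger\|,\|\tilde{\m{B}}^\dagger\|\}\leq C_B$ supply it, and the paper's route implicitly needs the same rank stability for Stewart's bound to apply (which, as you note via Weyl's inequality as in \Cref{lma4}, is automatic once $\epsilon < 1/C_B$). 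Both arguments yield $\oo(\epsilon)$ with hidden constants polynomial in $\mu_g^{-1}, C_g, C_h, C_B$.
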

\begin{proof}
By reparameterization, $\|\Pi^G_{\m{B}}-\Pi^G_{\tilde{\m{B}}}\| = \|G^{-\frac12}
\big(\Pi_{\m{B}G^{-\frac12}} - \Pi_{\tilde{\m{B}}G^{-\frac12}} \big) G^{\frac12}\|
\leq \sqrt{\frac{C_g}{\mu_g}} \cdot \|\Pi_{\m{B}G^{-\frac12}} - \Pi_{\tilde{\m{B}}G^{-\frac12}}\|$.
For the Euclidean projection, it is known that
\begin{eqnarray*}
\|\Pi_{\m{B}G^{-\frac12}} - \Pi_{\tilde{\m{B}}G^{-\frac12}}\|
& = & \| (\m{B}G^{-\frac12})^\dagger \m{B}G^{-\frac12} - (\tilde{\m{B}}G^{-\frac12})^\dagger \tilde{\m{B}}G^{-\frac12}\| \\
& \leq & \sqrt{\frac{1}{\mu_g}} \cdot
\Big\{ \big\|(\m{B}G^{-\frac12})^\dagger (\m{B}-\tilde{\m{B}})\big\| 
+ \big\| ((\m{B}G^{-\frac12})^\dagger-(\tilde{\m{B}}G^{-\frac12})^\dagger) \tilde{\m{B}} \big\|\Big\}
\end{eqnarray*}
where from perturbation theory (e.g. \cite{stewart1977perturbation}),
\begin{equation*}
\big\| (\m{B}G^{-\frac12})^\dagger-(\tilde{\m{B}}G^{-\frac12})^\dagger \big\|
\;\leq\; \sqrt{2} \|(\m{B}G^{-\frac12})^\dagger\| \|(\tilde{\m{B}}G^{-\frac12})^\dagger \|
\|(\m{B} - \tilde{\m{B}}) G^{-\frac12}\|
\end{equation*}
In addition, we have $\|(\m{B}G^{-\frac12})^\dagger\|\leq C_B \sqrt{C_g}$.
Putting everything together, we get
\begin{equation*}
\|\Pi^G_{\m{B}}-\Pi^G_{\tilde{\m{B}}}\|
\;\leq\; \sqrt{\frac{C_g}{\mu_g}} \sqrt{\frac{1}{\mu_g}} \{
C_B \sqrt{C_g} \epsilon + \sqrt{2} C_B^2 C_g \epsilon \mu_g^{-1/2} C_h\}
\;\leq\; \oo(\epsilon) ~~.
\end{equation*}
\end{proof}

The remaining proof can be completed by comparing \Cref{eqn4} and \Cref{eqn5},
where all the appearing terms are bounded by $\oo(1)$ and the differences
are bounded by $\oo(\epsilon)$.

\begin{corollary}
Under the assumptions made for general convex constraints, there exists an algorithm,
which in $\tilde{\oo}(\delta^{-1}\epsilon^{-3})$ oracle calls converges to a
$(\delta,\epsilon)$-stationary point for the bilevel problem with general convex constraints.
\end{corollary}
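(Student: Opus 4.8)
The plan is to mirror the two-stage argument behind \Cref{cor:lin}: first build an inexact hypergradient oracle for general convex constraints whose output is $\oo(\epsilon)$-accurate at a per-call cost of $\tilde{\oo}(1)$, and then feed it into the Goldstein-stationary-point meta-algorithm of~\cite{kpw+24,zhang2020complexity} to count the outer iterations. The only genuinely new work is the first stage; the second is inherited verbatim from the linear case.

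For the first stage, I would chain the proximity estimates assembled above. \Cref{lma4,lma5} and the triangle inequality give $\|y^*-\dbtilde{y}\|\leq\oo(\epsilon)$, so the approximate solve guaranteed by \Cref{ass3} together with the active-set linearization stays within an $\oo(\epsilon)$ ball of the true inner solution. I would then promote this primal proximity to proximity of the total derivatives by comparing the two closed forms~\ref{eqn4} and~\ref{eqn5}: under \Cref{ass3} every factor ($G,\tilde{G}$, the constraint Jacobians $\m{B},\tilde{\m{B}}$, their pseudoinverses, and $\m{A},\tilde{A}$) is $\oo(1)$ and each pairwise difference is $\oo(\epsilon)$, with the projector perturbation $\|\Pi^G_{\m{B}}-\Pi^G_{\tilde{\m{B}}}\|\leq\oo(\epsilon)$ as the only nonstandard ingredient, yielding $\|dy^*/dx-d\dbtilde{y}/dx\|\leq\oo(\epsilon)$. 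Finally I would wrap the finite-difference layer around this: solving the perturbed inner problem~\cref{prob:sol_lower_perturb} and reusing the estimate behind \Cref{thm:o1} (equivalently Lemma~3.2 of~\cite{kpw+24}), the vector $v_x$ approximates $(dy^*/dx)^\top\nabla_y f$ up to $\oo(\delta)$; choosing $\delta=\oo(\epsilon)$ then yields a hypergradient estimate within $\oo(\epsilon)$ using only a constant number of approximate-solve oracle calls and first-order evaluations.

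For the second stage, I would invoke the meta-algorithm exactly as in the linear case. It reaches a $(\delta,\epsilon)$-stationary point in $\tilde{\oo}(\delta^{-1}\epsilon^{-3})$ outer steps, each consuming a single hypergradient query; since every query now costs $\tilde{\oo}(1)$ oracle calls by the first stage, the total oracle complexity is $\tilde{\oo}(\delta^{-1}\epsilon^{-3})$, matching \Cref{cor:lin} with gradient calls replaced by oracle calls.

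The hard part will be the Jacobian-proximity step, namely controlling the gap between~\ref{eqn4} and~\ref{eqn5} when the active constraints are only linearized and the primal--dual pair is inexact. Unlike the linear case, the constraint Jacobian $\tilde{\m{B}}$ genuinely moves with $\tilde{y}$, so the stability of its pseudoinverse and of the oblique projector $\Pi^G_{\m{B}}$ must be obtained through perturbation theory for pseudoinverses; this is precisely where the bounded-pseudoinverse bound $\|\nabla_y h_{\mathcal{I}}^\dagger\|\leq C_B$ and the Hessian-smoothness of $h$ in \Cref{ass3} become indispensable. The example $F(x)=\sqrt{1-x^2}$ shows these hypotheses cannot be dropped: without them the hypergradient blows up near the optimum and no finite oracle bound can hold. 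A secondary subtlety is that, because general convex lower levels admit no known linearly convergent \emph{dual} solver, complexity must be measured in approximate-solve oracle calls rather than gradient calls---hence the oracle-complexity phrasing of the statement.
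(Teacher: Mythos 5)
Your proposal is correct and follows essentially the same route as the paper: the paper's own proof of this corollary simply observes that the preceding lemmas (\Cref{lma4}, \Cref{lma5}, the projector-perturbation bound, and the comparison of~\ref{eqn4} with~\ref{eqn5}) establish closeness of the ghost problem's solution and hypergradient to the true ones, and then transfers the linear-case guarantees (\Cref{thm:o1} and \Cref{cor:lin}) verbatim, replacing gradient calls with oracle calls. Your two-stage write-up just makes explicit what the paper leaves implicit in "the same guarantees that hold in the linear case apply here as well."
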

\begin{proof}
We have shown that both the solution and the hypergradient of the ghost problem,
constructed from an approximate solution, are close to those of the original problem.
Consequently, the same guarantees that hold in the linear case apply here as well.
\end{proof}

\section{Proof for Solver-agnostic Reformulation}\label{apd:solver_agnostic_reform}
\begin{lemma}[Solver-Agnostic reformulation]\label{lem:solver_agnostic}
Fix $x$ and let us define $c \coloneqq \text{detach}(\nabla_y f(x,y^*(x)),
\hat f(x,y)\coloneqq \langle c,y\rangle .
$
Let $(\hat y_\delta^*(x),\hat\lambda_\delta^*(x))$ be the primal--dual solution of the perturbed ghost problem \Cref{prob:sol_lower_perturb} with perturbation term $\delta f(x,y)$ replaced by $\delta\hat f(x,y)=\delta\langle c,y\rangle$, and define the corresponding finite-difference estimator (cf.\ \Cref{eq:finite_diff})
\begin{align}
\hat v_x(\delta)
:=\frac{1}{\delta}\Big(
\nabla_x\big[\tilde g(x,\hat y_\delta^*)+\langle \hat\lambda_\delta^*,\tilde h(x,\tilde y^*)\rangle\big]
-\nabla_x\big[\tilde g(x,\tilde y^*)+\langle \tilde\lambda,\tilde h(x,\tilde y^*)\rangle\big]
\Big).
\end{align}
Under the assumptions of Lemma~\Cref{lem:finite_diff_delta}, there exists $C>0$ such that for all sufficiently small $\delta$,
\begin{align}
\Big\|\, \hat v_x(\delta) - \Big(\frac{dy^*(x)}{dx}\Big)^\top \nabla_y f\big(x,y^*(x)\big)\,\Big\|
\le C\,\delta .
\end{align}
\end{lemma}

\begin{proof}
By construction, $\nabla_y \hat f(x,y)=c$ for all $(x,y)$, hence at the fixed point $x$,
\[
\nabla_y \hat f\big(x,\tilde y^*(x)\big)=c=\nabla_y f\big(x,y^*(x)\big),
\]
where we used $\tilde y^*(x)=y^*(x)$ (cf.\ \Cref{thm:b_1}). Applying Lemma~\Cref{lem:finite_diff_delta} to the perturbed ghost problem with perturbation function $\hat f$ gives
\[
\Big\|\, \hat v_x(\delta) - \Big(\frac{d\tilde y^*(x)}{dx}\Big)^\top \nabla_y \hat f\big(x,\tilde y^*(x)\big)\,\Big\|
\le C\,\delta .
\]
Using $\frac{d\tilde y^*(x)}{dx}=\frac{dy^*(x)}{dx}$ and $\tilde y^*(x)=y^*(x)$ at $x$ (cf.\ \Cref{thm:b_1}) yields the desired bound.
\end{proof}

\section{Discussion for Active Set Requirement}\label{sec:main_active_set}
We now argue the necessity of the active set requirement.
Suppose one only has access
to an oracle that guarantees an approximate solution $(\tilde{y}, \tilde{\lambda})$ such that
$\|\tilde{y}-y^*\|+\|\tilde{\lambda}-\lambda^*\|\leq \epsilon$.  Consider the
 one-dimensional problem with $g(x,y)=\frac12(y-a)^2$ and $h(x,y)=y-ax$
for some $a>1$ (say, $a=100$). For $x=1-\frac{\epsilon}{2a}$ we have $y^*=a-\frac{\epsilon}{2}
\,,\, \lambda^*=\frac{\epsilon}{2}$ and $y'(x)=a$. In contrast, for $\bar{x}=1+\frac{\epsilon}{2a}$
we have $y^*=a \,,\, \lambda^*=0$, and $y'(\bar{x})=0$. However, in both cases
$(\tilde{y},\tilde{\lambda})=(a,0)$ is a valid oracle output.
Moreover, $|x-\bar{x}|=\frac{\epsilon}{a}$. Thus, no continuous function of $(x,\tilde{y},\tilde{\lambda})$
can approximate the gradient accurately in both cases. This is formalized in~\Cref{lem:c1}.

We further complement the result by showing that a slightly smoothed version of
the problem can be solved efficiently without requiring an active set assumption.

\begin{lemma}\label[lemma]{lem:c1}
No continuous estimator $\hat{\nabla}F$ can guarantee $\|\hat{\nabla}F(x) - \nabla F(x)\|
\leq o(1) \;\forall\; x\in \mathcal{X}$.
\end{lemma}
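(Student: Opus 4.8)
The plan is to promote the worked example immediately preceding the statement into a formal indistinguishability argument. I read $\hat{\nabla}F$ as a continuous map $\Phi$ from the data actually available to any first-order estimator---namely the query point together with the oracle's output $(\tilde{y},\tilde{\lambda})$---to an estimate of the hypergradient, and I fix the upper-level objective $f(x,y)=y$ so that $\nabla F(x)=\tfrac{dy^*}{dx}$ and the entire difficulty is concentrated in the inner solution map. The strategy is to exhibit two nearby instances whose true hypergradients are separated by a fixed constant, yet which present the \emph{identical} admissible input to $\Phi$, and then let the separation parameter $\epsilon$ shrink so that continuity of $\Phi$ forces its two outputs together while the targets stay apart.

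First I would record the two instances already computed in the text: at $x=1-\tfrac{\epsilon}{2a}$ the constraint is active, giving $(y^*,\lambda^*)=(a-\tfrac{\epsilon}{2},\tfrac{\epsilon}{2})$ and $\nabla F(x)=y'(x)=a$; at $\bar{x}=1+\tfrac{\epsilon}{2a}$ the constraint is inactive, giving $(y^*,\lambda^*)=(a,0)$ and $\nabla F(\bar{x})=y'(\bar{x})=0$. The key observation to verify is that $(\tilde{y},\tilde{\lambda})=(a,0)$ is an admissible oracle output for \emph{both} instances: it is exact in the second, and in the first it satisfies $\|\tilde{y}-y^*\|+\|\tilde{\lambda}-\lambda^*\|=\tfrac{\epsilon}{2}+\tfrac{\epsilon}{2}=\epsilon$, so it sits exactly on the boundary of the allowed error ball. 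Hence a correct estimator must, on these two runs, be evaluated at the inputs $(x,a,0)$ and $(\bar{x},a,0)$, which differ only in their first coordinate and by $|x-\bar{x}|=\tfrac{\epsilon}{a}$.

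The contradiction then follows by the standard triangle-inequality split. As $\epsilon\to 0$ both inputs converge to the common point $(1,a,0)$, so by continuity $\|\Phi(x,a,0)-\Phi(\bar{x},a,0)\|\to 0$; meanwhile $\|\nabla F(x)-\nabla F(\bar{x})\|=a$ is fixed. Writing $a\le \|\nabla F(x)-\Phi(x,a,0)\|+\|\Phi(x,a,0)-\Phi(\bar{x},a,0)\|+\|\Phi(\bar{x},a,0)-\nabla F(\bar{x})\|$ and sending $\epsilon\to 0$ shows that at least one of the two endpoint errors is bounded below by $\tfrac{a}{2}-o(1)=\Omega(1)$, which is incompatible with a uniform $o(1)$ guarantee. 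I expect the only genuinely delicate point to be the modeling choice rather than the computation: one must pin down that the estimator sees \emph{only} $(x,\tilde{y},\tilde{\lambda})$ (so that the two instances are truly indistinguishable to it) and that ``$o(1)$'' is measured as $\epsilon\to 0$; once the admissibility of the shared output $(a,0)$ is checked, the continuity argument is immediate. A remark worth including is that the same gap $a$ persists for any $f$ with $\partial_y f\neq 0$ at the limit point, so the obstruction is not an artifact of the particular objective.
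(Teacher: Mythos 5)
Your proof is correct, and it rests on the same counterexample as the paper's (necessarily, since the lemma formalizes the example constructed in the preceding paragraph: $g(x,y)=\tfrac12(y-a)^2$, $h(x,y)=y-ax$, $f(x,y)=y$, with $\nabla F$ jumping from $a$ to $0$ across $x=1$). The difference is in how continuity is brought to bear. The paper reads $\hat{\nabla}F$ as a continuous function of $x$ alone and argues at the single point $x=1$: splitting on whether $\hat{\nabla}F(1)$ exceeds $a/2$, one of the one-sided limits $x\to 1^{\pm}$ forces an error of at least $a/2$, with no mention of oracle outputs at all. You instead take the two instances straddling $x=1$ at distance $\epsilon/a$, check that the single output $(\tilde y,\tilde\lambda)=(a,0)$ is admissible for both, and close with the triangle inequality as $\epsilon\to 0$. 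Your framing buys something real: if the estimator is a map $\Phi(x,\tilde y,\tilde\lambda)$ and the oracle's output may itself be a discontinuous selection in $x$, the composite $x\mapsto\Phi(x,\mathrm{oracle}(x))$ need not be continuous in $x$, so the paper's literal argument does not directly apply to that class; your indistinguishability argument does, by pinning the oracle output to a constant near $x=1$, and it is the version actually promised by the surrounding text (``no continuous function of $(x,\tilde y,\tilde\lambda)$ can approximate the gradient''). What the paper's version buys is brevity and independence from any modeling of the oracle: continuity in $x$ plus the jump discontinuity of $\nabla F$ finishes the proof in three lines. Your side remark that the obstruction persists whenever $\partial_y f\neq 0$ at the limit point is accurate up to a constant (the gap is $|a\,\partial_y f(1,a)|$ rather than exactly $a$), and your caveat about specifying that the $o(1)$ is measured as $\epsilon\to 0$ applies equally to the paper's statement, where the asymptotic parameter is likewise left implicit.
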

\begin{proof}
For any $a>1$ and $\epsilon>0$, consider the example given above and let $f(x,y)=y$.
We have for any $x\in(1-\frac{\epsilon}{2a}, 1): \nabla F(x)=a$
; and for $x\in(1,1+\frac{\epsilon}{2a}): \nabla F(x)=0$.
Assume $\hat{\nabla}F(1)>\frac{a}{2}$.  In this case, $\lim_{x\to 1^+}|\hat{\nabla}F(x)-\nabla F(x)|>\frac{a}{2}$.
Assume $\hat{\nabla}F(1)<\frac{a}{2}$.  In this case, $\lim_{x\to 1^-}|\hat{\nabla}F(x)-\nabla F(x)|>\frac{a}{2}$.
\end{proof}

\begin{theorem}
Assume that $\forall x\in \mathcal{X}$, the radius of the largest ball contained
in $\m{A}x-\m{B}y-b\leq 0$ is bounded below by $\bar{\rho}$. 
$\forall \rho<\bar{\rho}$, define $\bar{F}_\rho(x) = \E_{\eta\sim U(-\rho,\rho)^m}
\big[f(x, y_\eta(x))\big]$, where $y_\eta(x)=\argmin_{\m{A}x-\m{B}y-b\leq\eta} g(x,y)$.
There exists an algorithm that does not require the active set assumption
and outputs a point $x^{\text{out}}$ such that $\E\big[ \text{dist}(0, \partial_\delta
\bar{F}_\rho(x^{\text{out}}) \big]\leq \epsilon+\alpha$ with
$T=\oo\Big(\frac{1}{\delta\epsilon^3}\,\log\big(\frac{m}{\rho\alpha} \big)\Big)$
oracle calls to $f$ and $g$.
\end{theorem}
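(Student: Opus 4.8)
The plan is to exploit the fact that averaging over the constraint perturbation $\eta$ regularizes the discontinuous gradient exhibited in~\Cref{lem:c1}, converting the Lipschitz-but-nondifferentiable value function into a surrogate $\bar{F}_\rho$ whose gradient is continuous and estimable from first-order information alone, and then to run the Goldstein-stationarity algorithm of~\cite{zhang2020complexity} on $\bar{F}_\rho$. I would first establish that $\bar{F}_\rho$ is Lipschitz: because the constraints $\m{A}x-\m{B}y-b\leq\eta$ are affine and $g(x,\cdot)$ is $\mu_g$-strongly convex, for each fixed $\eta$ the perturbed solution map $x\mapsto y_\eta(x)$ is Lipschitz by standard parametric-program stability, uniformly over $\eta$ thanks to the uniform Slater margin guaranteed by $\rho<\bar{\rho}$; taking the expectation preserves the bound, so $\bar{F}_\rho$ is Lipschitz with constant $\oo(L_f(1+L_y))$ and its $\delta$-Goldstein subdifferential is well-defined.

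The crux is to show that smoothing makes the hypergradient computable \emph{without}~\Cref{ass2}. For Lebesgue-almost-every $\eta\in(-\rho,\rho)^m$ the solution $y_\eta(x)$ is strictly complementary, so its active set is unambiguous and locally stable; the degenerate $\eta$---where the solution sits on an extra face, exactly the pathology of~\Cref{lem:c1}---form a measure-zero set. Hence for a.e.\ $\eta$, \Cref{alg:inexact-gradient-oracle} returns $\nabla_x f(x,y_\eta(x))$ to accuracy $\oo(\delta)$ in $\tilde{\oo}(1)$ oracle calls, obtaining the active set for free rather than assuming it. Since the integrand is bounded by the Lipschitz constant and differentiable off the null set, dominated convergence lets me interchange gradient and expectation,
\begin{align*}
\nabla\bar{F}_\rho(x)=\E_{\eta}\big[\nabla_x f(x,y_\eta(x))\big],
\end{align*}
so one sampled $\eta$ plus one call to~\Cref{alg:inexact-gradient-oracle} is an \emph{unbiased} estimator of $\nabla\bar{F}_\rho(x)$ with variance bounded by the squared Lipschitz constant above.

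I would then feed this stochastic first-order oracle into the $(\delta,\epsilon)$-Goldstein algorithm of~\cite{zhang2020complexity}, whose $\oo(\delta^{-1}\epsilon^{-3})$ gradient-query complexity is inherited provided the (already controlled) variance does not inflate the rate. Each query draws one $\eta$, solves the well-conditioned perturbed lower-level problem to the precision dictated by the slack $\alpha$, and runs~\Cref{alg:inexact-gradient-oracle}; the linear convergence of the inner solve produces the $\log(m/(\rho\alpha))$ factor, with $m/\rho$ reflecting the conditioning of the $m$ perturbed constraints. Adding the two error sources---the $\epsilon$ from the Goldstein iteration and the $\alpha$ from the finite-difference approximation in~\Cref{eq:finite_diff}---yields $\E[\mathrm{dist}(0,\partial_\delta\bar{F}_\rho(x^{\mathrm{out}}))]\leq\epsilon+\alpha$ in $T=\oo(\delta^{-1}\epsilon^{-3}\log(m/(\rho\alpha)))$ calls.

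The main obstacle is the second step: rigorously proving that the degenerate set of $\eta$ is negligible and that differentiation and expectation commute despite the kinks of $y_\eta(\cdot)$ across active-set boundaries, so that the single-sample estimator is genuinely unbiased and~\Cref{ass2} can be dropped. A secondary technical point is confirming that the bounded-variance stochastic oracle preserves the $\delta^{-1}\epsilon^{-3}$ Goldstein rate rather than degrading it, and that the inner-solve precision contributes only a logarithmic---rather than polynomial---overhead in $m/(\rho\alpha)$.
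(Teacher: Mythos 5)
Your overall architecture matches the paper's: sample $\eta$, run \Cref{alg:inexact-gradient-oracle} on the perturbed constraints, and feed the resulting stochastic gradient into a Goldstein-type nonsmooth nonconvex scheme (the paper invokes the stochastic analysis via Lemma~B.2 of \cite{kpw+24} rather than \cite{zhang2020complexity} directly, but that is cosmetic). The genuine gap is in your central step, the one that is supposed to let you drop \Cref{ass2}. You argue that the degenerate $\eta$ form a Lebesgue-null set, so for a.e.\ $\eta$ the active set comes ``for free'' and the single-sample estimator is unbiased. This fails in the finite-precision setting: the oracle never sees exact solutions, and with inner tolerance $\varepsilon$ the active set is ambiguous not just on the measure-zero degenerate set but on the entire \emph{near}-degenerate set, namely those $\eta$ for which some constraint $i$ has $\m{B}_i^\top y^*-\tilde{b}_i\in[-\varepsilon,0)$ or $\lambda_i^*\in(0,\varepsilon]$. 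That set has \emph{positive} measure, on it the estimator is off by $\Theta(1)$, and no fixed $\tilde{\oo}(1)$ iteration budget shrinks it to zero, since the tolerance needed to disambiguate blows up as $\eta$ approaches degeneracy. So the estimator is biased, not unbiased, and the dominated-convergence interchange does not rescue the computational claim. What the paper actually proves, and what your proposal lacks, is an anti-concentration bound for this bad set: for the inactive-but-nearly-active case the bad $\eta_i$ lie in an interval of length $\varepsilon$, and for the active-but-small-multiplier case a strong-convexity argument yields the inverse-Lipschitz property $|\lambda_i^*(\eta_i)-\lambda_i^*(\eta_i')|\geq\mu\,|\eta_i-\eta_i'|$, so the bad $\eta_i$ lie in an interval of length $\varepsilon/\mu$; a union bound over the $m$ constraints against the uniform density then gives $\mathbf{P}(\mathrm{bad})\leq\oo(m\varepsilon/\rho)$.

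This quantitative step is also where the theorem's constants come from, and your accounting misplaces both of them. Choosing $\varepsilon=\oo(\rho\alpha/m)$ makes the estimator's bias $\oo(\alpha)$; that bias is the source of the additive $\alpha$ in the final bound, not ``the finite-difference approximation in \cref{eq:finite_diff}'' (that error is $\oo(\delta)=\oo(\epsilon)$ and is absorbed into the $\epsilon$ term). Likewise, the $\log(m/(\rho\alpha))$ factor does come from linear convergence of the inner solver at tolerance $\varepsilon$, but the ratio $m/\rho$ inside the logarithm reflects the union bound over $m$ constraints and the $1/\rho$ density of the perturbation, not the ``conditioning of the perturbed constraints.'' Once the bias bound $\E\|\tilde{g}-g\|\leq\oo(\alpha)$ and second-moment bound $\E\|\tilde{g}\|^2\leq\oo(1)$ are established, the conclusion follows exactly as you intend; the missing anti-concentration argument is the essential content of the proof.
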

\begin{proof}
For a given $\rho$, we construct the following stochastic gradient estimator $\tilde{g}$
by first sampling $\eta\sim U(-\rho,\rho)^m$ and then applying~\Cref{alg:inexact-gradient-oracle} to the perturbed constraints.
Assume w.l.o.g., that the $2$-norm of each row of $\m{B}$ is normalized to $1$.
In the following, we show one can choose the tolerance $\varepsilon$ of the algorithm
small enough such that, with high probability, our gradient estimator is good.
Fix an $x$, and let $\tilde{b}=\eta+b-\m{A}x$. Observe that if $y^*$ and $\lambda^*$
satisfy $\m{B} y^*-\tilde{b}\notin[-\varepsilon,0)$ and $\lambda^*\notin(0,\varepsilon]$,
no ambiguity can occur and thus we can guarantee an approximation of $\oo(\varepsilon)$.
In addition, since the ground truth and our estimator are both bounded by $\oo(1)$,
we obtain a guarantee of the form $\E\|\tilde{g}-g\|\leq\oo(\alpha)$
and $\E\|\tilde{g}\|^2\leq\oo(1)$, as in~Lemma B.2 of~\cite{kpw+24}.
Therefore, the results follow.

For $i\in\{1,\ldots,m\}$, let $E_i$ be the bad event
$B_i^\top y^*-\tilde{b}_i\in[-\epsilon,0) \vee \lambda^*_i\in[0,\varepsilon)$.
Fix the rest coordinates of $\tilde{b}$, and let $g_i(y)=g(y)+\mathds{1}\{\m{B}_{-i}y\leq\tilde{b}_{-i}\}$.
Then we have $y^*(\eta_i)=\arg\,\min_y g_i(y)+\lambda_i^*(\eta_i)\m{B}_i^\top y$.
For the first part (i.e., when the constraint is not active), $\eta_i$ must lie in 
an interval of length $\varepsilon$.

For the second case, consider two values $\eta_i,\eta'_i$. From the optimality conditions
and the strong convexity of $g_i$, we have
\begin{eqnarray*}
\frac{\mu}{2}\|y^*(\eta_i)-y^*(\eta_i')\|^2 + g_i(y^*(\eta_i)) + \lambda_i^*(\eta_i)\m{B}_i^\top y^*(\eta_i)
& \leq & g_i(y^*(\eta'_i)) + \lambda_i^*(\eta_i)\m{B}_i^\top y^*(\eta'_i) \\
\frac{\mu}{2}\|y^*(\eta_i)-y^*(\eta_i')\|^2 + g_i(y^*(\eta'_i)) + \lambda_i^*(\eta'_i)\m{B}_i^\top y^*(\eta'_i)
& \leq & g_i(y^*(\eta_i)) + \lambda_i^*(\eta'_i)\m{B}_i^\top y^*(\eta_i)
\end{eqnarray*}
Summing the two inequalities gives
$$\mu\|y^*(\eta_i)-y^*(\eta_i')\|^2
\leq (\lambda_i^*(\eta_i)-\lambda_i^*(\eta_i'))\m{B}_i^\top (y^*(\eta_i)-y^*(\eta_i'))
= (\lambda_i^*(\eta_i)-\lambda_i^*(\eta_i')) (\tilde{b}_i(\eta_i')-\tilde{b}_i(\eta_i)) ~~.
$$
In addition, we have
$$
\|y^*(\eta_i)-y^*(\eta_i')\|\geq |\m{B}_i^\top (y^*(\eta_i)-y^*(\eta_i'))|
=|\tilde{b}_i(\eta_i')-\tilde{b}_i(\eta_i)| ~~.
$$
Therefore, $\frac{1}{\mu}|\lambda_i^*(\eta_i)-\lambda_i^*(\eta_i')|
\geq |\tilde{b}_i(\eta_i')-\tilde{b}_i(\eta_i)|=|\eta_i'-\eta_i|$,
which implies that $\eta_i$ must lie in an interval of length $\frac{\varepsilon}{\mu}$.

Apply the union bound, $\mathbf{P}(\bigvee_{i=1}^m E_i)\leq \oo(\frac{m\varepsilon}{\rho})$.
Setting $\varepsilon=\oo(\frac{\rho\alpha}{m})$ yields the desired result.
\end{proof}

\paragraph{Linearized active cone constraint} Previously, we discussed that for scalar inequality, the active set reduction means keeping the active manifold and linearizing it as equalities. Here, we assume that we have cone constraints. So we need to linearize the active manifold by replacing the curved surface with its tangent plane. We can define the following indicator function $\mathbb{I}: \mathcal{X} \to \mathbb{R}$:
\begin{align}
    \mathbb{I}_\mathcal{K}(x) = \mathbb{I}\{x \in \mathcal{K}\} :=
    \begin{cases}
        0, & \text{if} \  x \in \mathcal{K}\\
        \infty, &\text{if} \ x \notin \mathcal{K}
    \end{cases}.
\end{align}
At a feasible optimum $x^*$, we have $x \in \mathcal{K}$ and thus $\mathbb{I}_\mathcal{K}(x^*) = 0$. Let $N_\mathcal{K} (x):= \{d \in \R^n \mid \forall y \in \mathcal{K}, d^\top x \geq d^\top y \}$ denote the normal cone of $\mathcal{K}$ at point $x$. We know if $x \in \mathcal{K}$, then $\partial \mathbb{I}_\mathcal{K} = N_\mathcal{K} (x)$.
At $x^*$, there exists a Lagrange Multiplier $\lambda^* \in -\mathcal{K}^*$, such that $\lambda^* \in \partial \mathbb{I}_\mathcal{K}$.
The local linearization can be computed as
\begin{align}
    \mathbb{I}_\mathcal{K}(x) \approx \mathbb{I}_\mathcal{K}(x^*) + \partial_x \mathbb{I}_\mathcal{K}(x^*)^\top(x - x^*).
\end{align}
Since $\lambda^* \in \partial_z \mathbb{I}_\mathcal{K}(x)$, we know
\begin{align}
    \mathbb{I}_\mathcal{K}(x) \approx (\lambda^*)^\top (x -x^*).
\end{align}

We can rewrite the ghost problem as 
\begin{align}
    \min_{y, t} g(x, y) + \lambda^* h_C(x,y) \ \subjectto \ \nabla_y h(x, y)(y - y^*)=0
\end{align}

\section{Experiment Setup and Results}\label{apd:exp_setup}
\subsection{Experiment Setup}
We repeat each experiment with $10$ random seeds. For synthetic tasks, we generate a dataset of $2048$ samples with 80\% for training and 20\% for testing. To ensure a fair comparison, all methods use the same hyperparameters: we train using the Adam optimizer with a fixed learning rate and batch size across methods. We train for a sufficient number of epochs such that all methods converge. The neural network models used in each task are simple 2-layer MLPs. 

In the synthetic tasks, the input dimension is $d_x = 640$ and the output dimension $d_y = 800$. We set solver tolerances $\epsilon$ to balance accuracy and speed: $\epsilon = 10^{-6}$ for the synthetic DFL task, and $\epsilon = 10^{-4}$ for SOCP. 
Notably, we find that the \lpgd requires a much tighter tolerance than default; $10^{-12}$ instead of $10^{-4}$ to converge in some tasks, as discussed later.

For the compute environment, all runs were executed on a Slurm server, allocating 8 CPU cores and sufficient memory to avoid out-of-memory failures (48GB per core, total 384GB). We keep the compute allocation fixed across methods and seeds to ensure fair runtime comparisons.

We also implement \altdiff~\cite{ssw+23} as one of our baselines. But similar to previous work (e.g., Table 2 in \cite{bst+24}), we find it is too slow. So we do not include this baseline in the paper.

\subsection{Additional Ablation Studies}\label{apd:ablation}

In this section, we perform a series of ablation studies to better understand the performance and stability of our proposed method.

\paragraph{Memory}
\Cref{fig:memory_3} compares peak GPU memory as we scale the lower-level dimension $d_y$ and reports memory cost for initialization (solver/canonicalization setup), forward (solving the lower-level problem), and backward (hypergradient computation). It indicates that \ffocp stays flat, indicating that its first-order oracle can be executed with a small, stable memory cost. In contrast, \cvxpylayer's memory usage grows rapidly and becomes prohibitive for large $d_y$, indicating that the costs of canonicalization and implicit differentiation are substantial. This gap highlights a key advantage of \ffocp: it avoids storing large factorization/intermediate solver state and computes hypergradients using only first-order information, making it more scalable in memory.

\begin{figure}[htbp]
    \centering
    \includegraphics[width=0.98\linewidth]{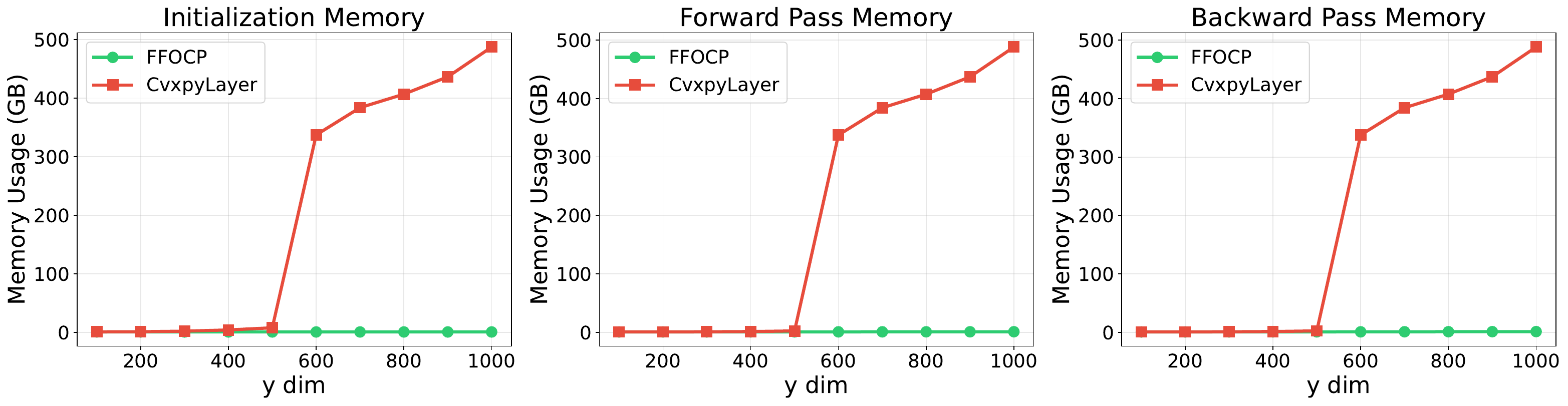}
    \caption{Memory usage vs. lower-level decision dimension $d_y$ on the synthetic DFL task. We report peak GPU memory (GD) for the initialization, the forward pass, and the backward pass. \cvxpylayer shows a sharp increase in memory as $d_y$ grows, while \ffocp remains nearly constant across all three stages.}
    \label{fig:memory_3}
\end{figure}

\paragraph{LPGD comparison}
As mentioned in \Cref{sec:discussion}, we discuss that the default solver tolerance is insufficient for \lpgd to converge in Sudoku and synthetic DFL tasks. As shown in \Cref{fig:sudoku_lpgd_comparison}, \lpgd with the default tolerance exhibits unstable training loss, while tightening the solver tolerance to $\epsilon=10^{-12}$ restores convergence. However, \Cref{fig:sudoku_lpgd_comparison} (middle/right) shows that this accuracy comes at a substantial computation cost, suggesting that \lpgd requires very accurate inner solves in our setup—likely explaining its higher runtimes compared with the results reported in~\citet{pmm24}.

\begin{figure}[htbp]
    \centering
    \includegraphics[width=0.98\linewidth]{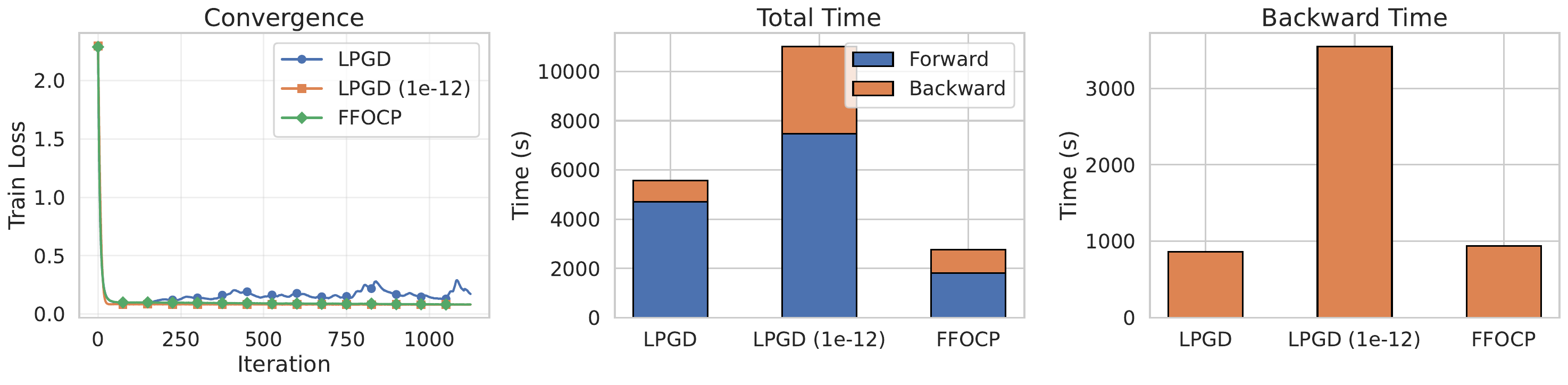}
    \caption{Compare the performance of \lpgd with its default tolerance, \lpgd with a tightened tolerance ($\epsilon = 10^{=12}$), and our method \ffocp in the Sudoku task. Left: convergence performance in Sudoku. Middle: total wall-clock with forward and backward times. Right: wall-clock for backward times. Tightening \lpgd's tolerance restores stable convergence, but dramatically increases runtime, primarily due to a much more expensive backward pass, whereas \ffocp remains stable with lower overall cost.}
    \label{fig:sudoku_lpgd_comparison}
\end{figure}

\paragraph{Compare with \qpth with GPU}
In the main paper, we report the results for \qpth with CPU. Here, we also compare with \qpth with GPU. While \qpth with GPU substantially speeds up the forward solve compared to \qpth with CPU, it does not consistently improve end-to-end training time. In \Cref{fig:qpth_gpu_comparison}, \ffoqp matches \qpth’s convergence on both the synthetic QP and Sudoku tasks, and is competitive in total time: depending on the task, \ffoqp can be faster than \qpth with GPU.
\begin{figure}[htbp]
    \centering
    \includegraphics[width=0.98\linewidth]{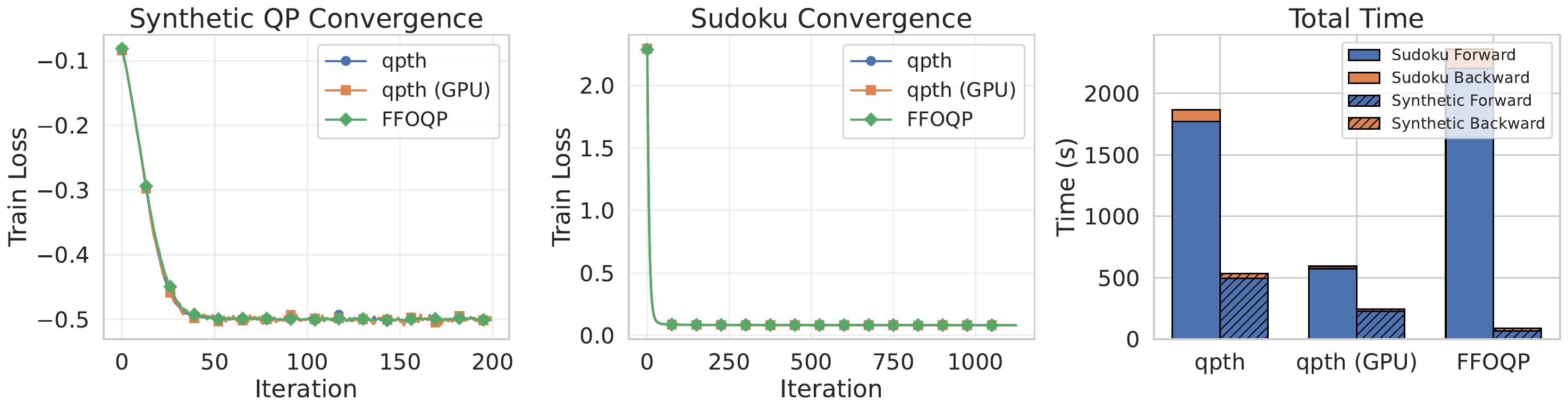}
    \caption{Compare the performance of \qpth with CPU, \qpth with GPU, and our method \ffoqp. Left/middle: convergence performance in synthetic QP and Sudoku. }
    \label{fig:qpth_gpu_comparison}
\end{figure}

\paragraph{Additional results for the scalability with problem size}
In \Cref{sec:discussion}, we present the results for the scalability with problem size for the QP task. In \Cref{fig:ydim_ablation_socp}, we present the results for the SOCP task.
\begin{figure*}[htbp]
    \centering
    \includegraphics[width=0.99\linewidth]{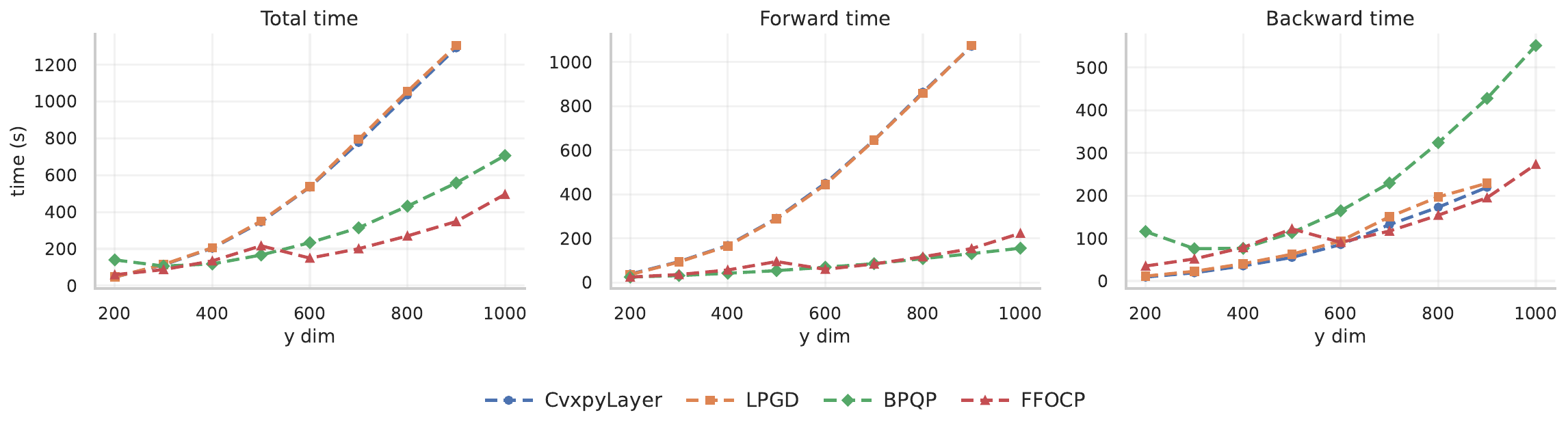}
    \caption{Different variable dimension for the SOCP task.}
    \label{fig:ydim_ablation_socp}
\end{figure*}

\paragraph{Gradient distance and similarity}
We examine the accuracy of our approximated hypergradient by comparing it with the ground-truth hypergradient produced by \cvxpylayer\footnote{Since an exact hypergradient oracle is generally intractable for large problems, we consider \cvxpylayer with high tolerance as a proxy oracle to produce ground-truth hypergradients.}. We then compute the cosine similarity between our FFO gradient and the reference gradient, as well as the $\ell_2$ and $\ell_{\infty}$ norm differences. As shown in \Cref{fig:cos_sim}, we can observe that the cosine similarity is consistently high at the beginning of the training process, where the gradient norm is high. In the meantime, the small magnitude of the error suggests that any bias introduced by our approximation is minor. These results provide practical validation of our oracle's accuracy: even though we use an approximate dual solution, the active set identification is relatively correct, such that the gradient direction is accurate. 
\begin{figure}[htbp]
    \centering
    \includegraphics[width=0.98\linewidth]{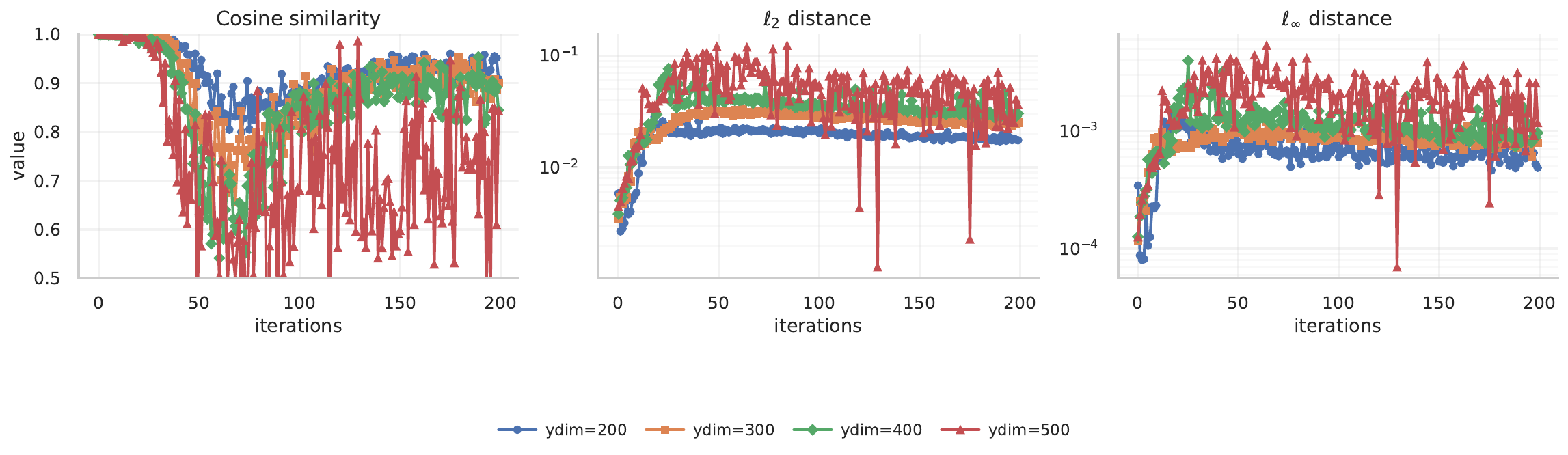}
    \caption{Gradient cosine similarity between  \ffocp and \cvxpylayer for different variable dimensions.}
    \label{fig:cos_sim}
\end{figure}

\paragraph{Batch size}
We study the effect of batch size on throughput and efficiency. In a typical end-to-end training scenario, solving multiple optimization instances in parallel (batch-solving) can amortize overhead costs. We measure the throughput, the time per training step, and the per-sample cost as the batch size increases, as shown in~\Cref{fig:batch_ablation}.  It shows that batch-solving substantially improves efficiency. As batch size increases, throughput rises while time per step drops quickly and then levels off, indicating that fixed per-step overhead is amortized and the runtime becomes compute-dominated. Therefore, the per-sample cost decreases by orders of magnitude at larger batches. 
\begin{figure}[htbp]
    \centering
    \includegraphics[width=0.99\linewidth]{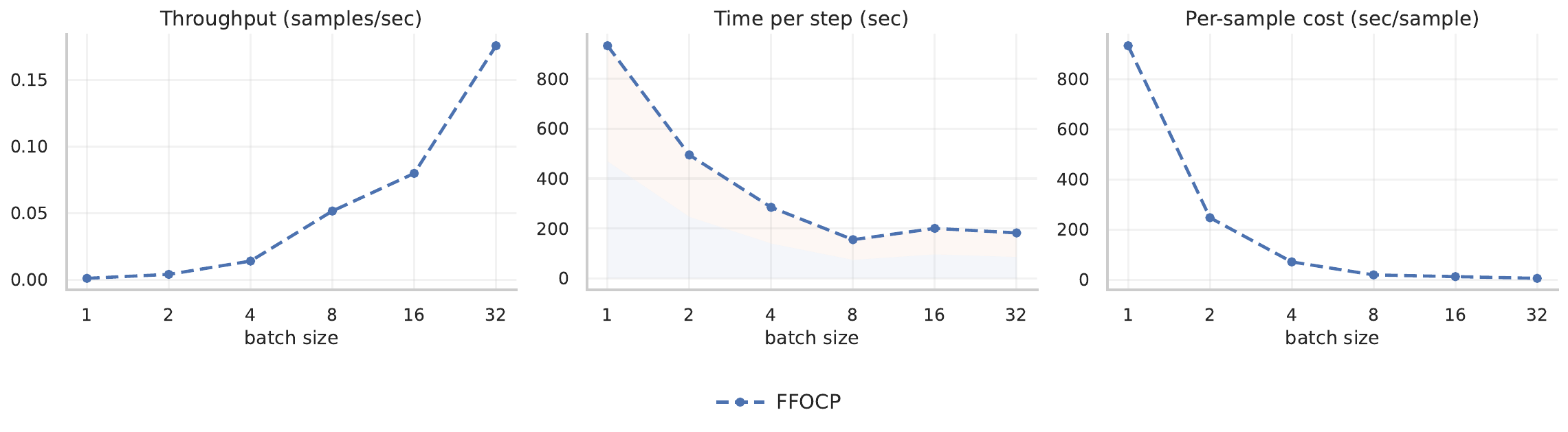}
    \caption{Throughput and time per step for different batch sizes.}
    \label{fig:batch_ablation}
\end{figure}

\paragraph{Solver tolerance}
We also analyze how sensitive the end-to-end training performance is to the choice of solver tolerance for solving the perturbed problem (\cref{prob:sol_lower_perturb}). A very strict tolerance (e.g., $10^{-12}$) means the perturbed solution $y_\delta$ is found with high precision, which could improve gradient accuracy but will take more solver iterations, whereas a looser tolerance (e.g., $10^{-3}$) speeds up each solve but yields a less precise solution. In our FFO approach, thanks to the allowance of an approximate dual, we expect that we do not require extremely tight tolerances to maintain training stability. The ablation experiments support this expectation. With higher (looser) tolerance, Figure \Cref{fig:socp_tol} shows that the backward time of our method FFOCP is faster without affecting convergence of the loss. 

\begin{figure}[htbp]
    \centering
    \begin{minipage}[b]{0.24\textwidth}
        \centering
      \includegraphics[width=\linewidth]{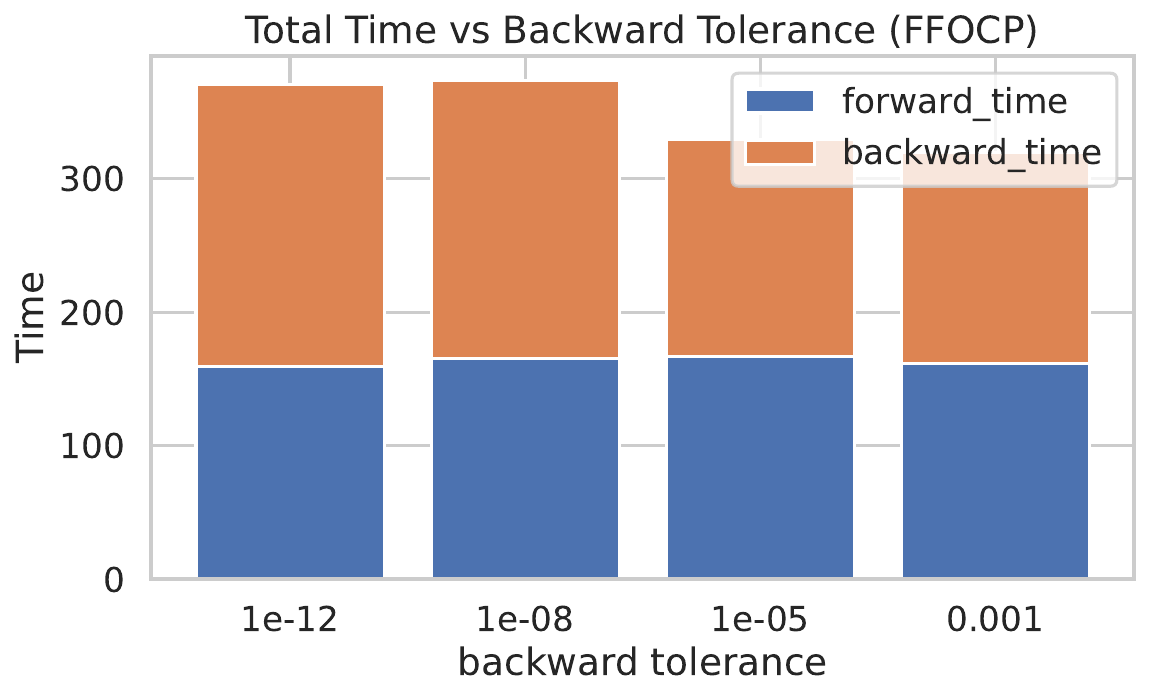}
    \end{minipage}
    \hfill
    \begin{minipage}[b]{0.24\textwidth}
        \centering
        \includegraphics[width=\linewidth]{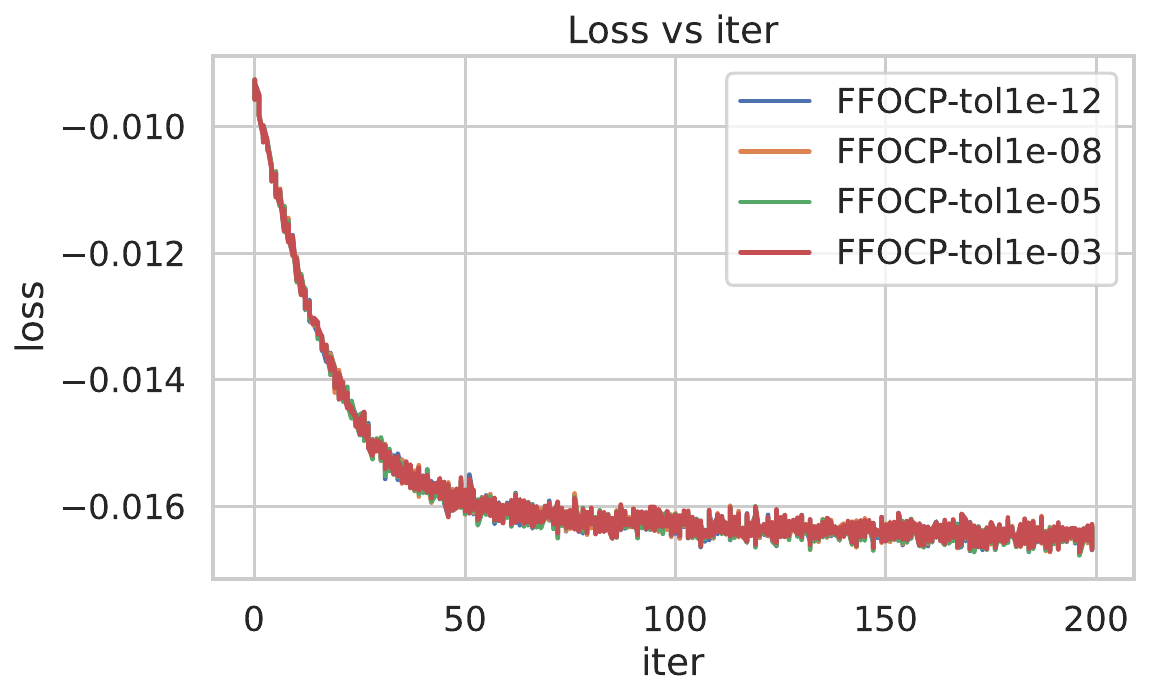}
    \end{minipage}
    \hfill
    \begin{minipage}[b]{0.24\textwidth}
        \centering
      \includegraphics[width=\linewidth]{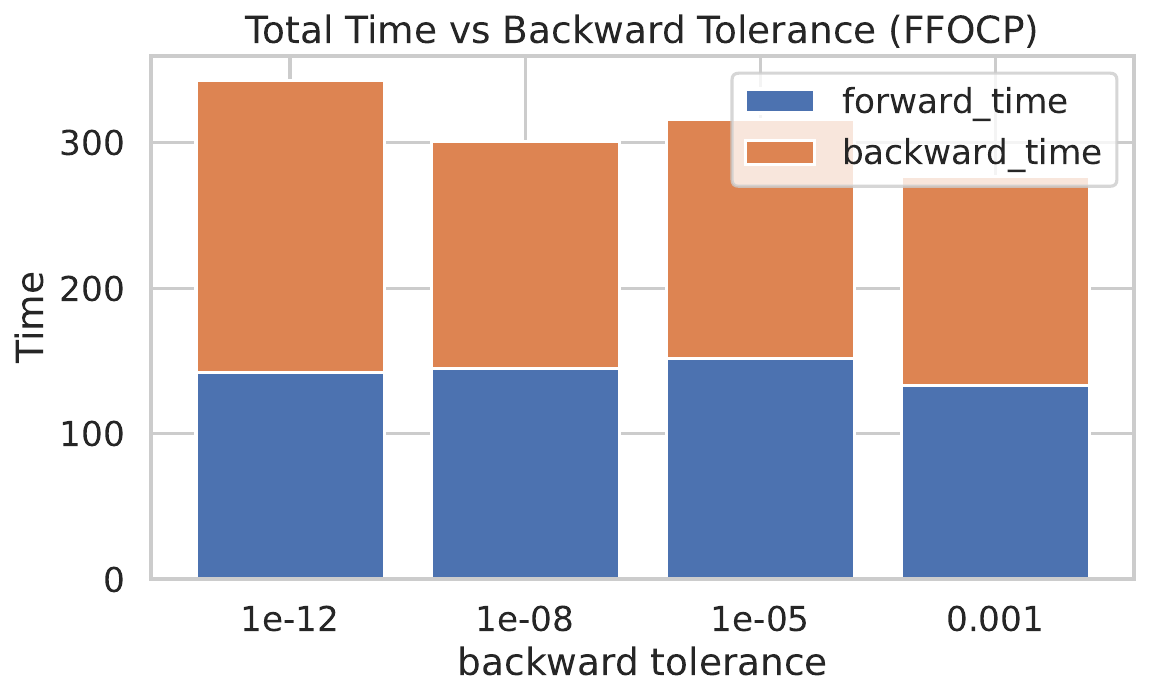}
    \end{minipage}
    \hfill
    \begin{minipage}[b]{0.24\textwidth}
        \centering
\includegraphics[width=\linewidth]{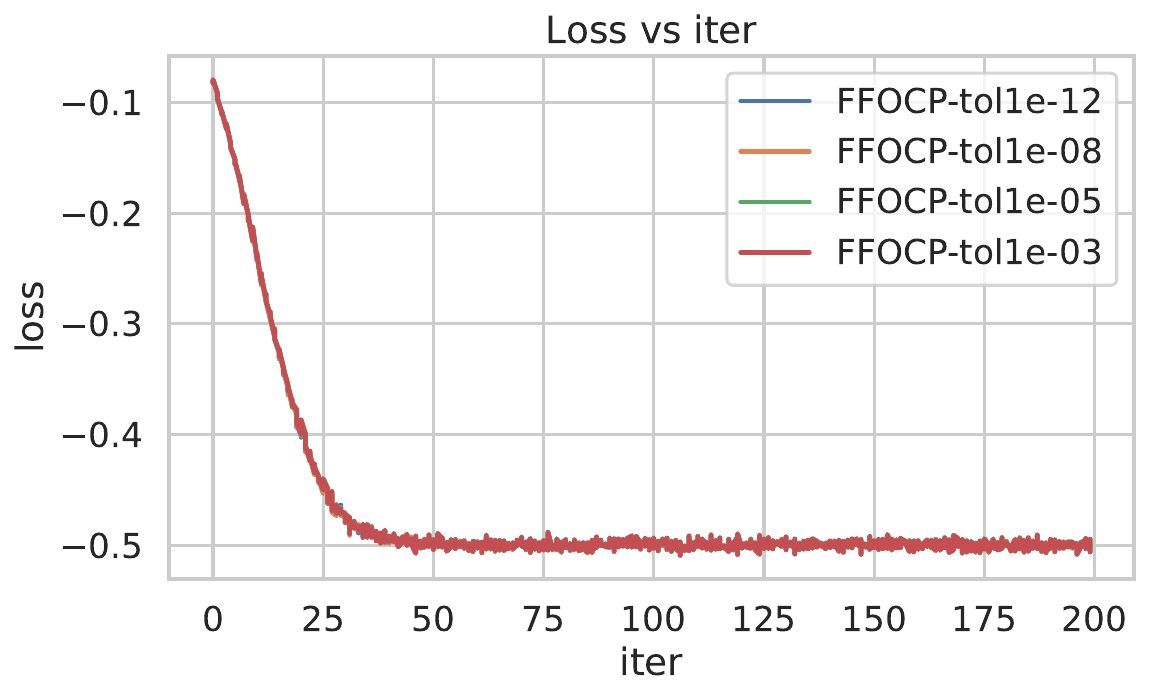}
    \end{minipage}
    \hfill
    \caption{Computation costs and losses for different backward solving tolerances in the DFL QP task (first two figures) and SOCP task (last two figures).}
    \label{fig:socp_tol}
\end{figure}

\subsection{Additional related works}
\paragraph{Bilevel Optimization with Non-Convex Lower-Level Objectives}. 
Bilevel Optimization (BO) has a long history in operations research, where the lower-level problem serves as the constraint of the upper-level problem~\cite{vps+53, bm73, yz95}.
Currently, it has been widely utilized in a large number of practical applications, such as hyperparameter optimization~\cite{fh19}, meta learning~\cite{fal17, rfk+19}, adversarially robust learning~\cite{blz+2021}, data reweighting~\cite{hwg+2024}, neural architecture search~\cite{emh19}, speech recognition~\cite{scs+24}, and more recently, the pretraining-finetuning pipeline for large models~\cite{syc24}.

BO with non-convex lower-level objectives poses significant computational challenges and is generally intractable without additional assumptions, even in simplified settings such as composite optimization~\cite{ddg+22} and min-max optimization~\cite{rhl+20}. To address this, recent approaches have incorporated assumptions such as local Lipschitz continuity~\cite{cxz24} and the PL condition~\cite{sc23}, to ensure theoretical traceability. Furthermore, Kwon et al.~\cite{kkw+24} studied the minimal assumption of the continuity of lower-level solution sets, and guaranteed the gradient-wise asymptotic equivalence. Following this line, many other approaches have been proposed~\cite{lyw+22, lly+21, llz+21, acd+23, h24, cxz24, cxy+24, lly+24, yyz+24}. 

In particular, for non-convex neural networks, the convergence analysis remains non-trivial, and performance guarantees within BO contexts are still missing. 
Franceschi et al.~\cite{ffs+18} target on analyzing the convergence of BO in hyperparameter optimization and meta learning by assuming the inner-level solution set is a singleton.
To our best knowledge, only two theoretical works have considered the case where the lower-level objective involves neural network training, focusing on the non-singleton issue caused by multiple global minima of neural network training. 
Arbel et al.~\cite{am22} proposed a selection-map algorithm based on gradient flows, and proved that it can approximately solve the non-singleton BO up to a bias. Furthermore, Petrulionyte et al.~\cite{pma24} introduced a novel approach by transforming the strong convexity assumption of network parameters into functional vectors, which represents a milder condition. And they further proposed a single-loop algorithm with a warm-start mechanism to implement the functional implicit differentiation effectively.

There are already lots of work focusing on analyzing the convergence and generalization performance of neural networks in hyperparameter optimization~\cite{bwl+21}. However, most of them are analyzing the gradient unrolling or implicit bias algorithms, rather than fully first-order algorithms.

\paragraph{Efficiently Computing the Optimization Hypergradient}
The main algorithmic challenge in differentiable optimization is computing the hypergradients of the model loss with respect to the inputs of the optimization problem. Importantly, the implicit derivatives that arise in differentiable layers, hyperparameter tuning, and bilevel learning are mathematically identical: each requires differentiating the lower-level KKT system to obtain a hypergradient. Two main families of estimators are used in practice. \emph{Implicit differentiation} methods~\cite{sp02, d12, dda15} linearize the KKT system and solve the resulting linear equations either with direct factorizations of the KKT/Hessian matrix (e.g., the approach used in \citet{ak17}) or with iterative routines such as conjugate gradients~\cite{xwy+21} and Neumann series~\cite{gw18}, as done in \citet{aab+19}. \emph{Gradient unrolling} (GU) approaches~\cite{mda15}, such as \citet{pfm+22}, backpropagate through a truncated execution of the inner solver and therefore only require forward-mode gradients produced by automatic differentiation. While unrolling avoids explicit Hessian solves, its memory usage grows linearly with the number of iterations, and it can suffer from truncation bias. Recent work has therefore focused on reducing the cost of both strategies—via custom backward passes~\cite{pmm24, pyy+24, ssw+23}, checkpointing, or low-rank updates—yet these methods still depend on either second-order information or storing long optimization trajectories, which limits their scalability.


\end{document}